\documentclass{article}

\usepackage{microtype}
\usepackage{graphicx}
\usepackage{subcaption}
\usepackage{booktabs} 

\usepackage{hyperref}

\usepackage[accepted]{icml2026}

\usepackage{amsmath}
\usepackage{amssymb}
\usepackage{mathtools}
\usepackage{amsthm}

\usepackage[capitalize,noabbrev]{cleveref}

\theoremstyle{plain}
\newtheorem{theorem}{Theorem}[section]
\newtheorem{proposition}[theorem]{Proposition}
\newtheorem{lemma}[theorem]{Lemma}

\theoremstyle{definition}

\theoremstyle{remark}
\newtheorem{remark}[theorem]{Remark}

\usepackage[textsize=tiny]{todonotes}

\icmltitlerunning{Adversarially Robust Control of Conditional Value-at-Risk via Rockafellar-Uryasev Conformal Inference}

\newcommand{\R}{\mathbb{R}}

\newcommand{\Reg}{\mathrm{Reg}}

\begin{document}

\twocolumn[
  \icmltitle{Adversarially Robust Control of Conditional Value-at-Risk \\ 
  Via Rockafellar-Uryasev Conformal Inference}



  \icmlsetsymbol{equal}{*}

  \begin{icmlauthorlist}
    \icmlauthor{Catherine Chen}{icme}
    \icmlauthor{Jingyan Shen}{nyu}
    \icmlauthor{Zhun Deng}{unc}
    \icmlauthor{Lihua Lei}{gsb}
  \end{icmlauthorlist}

  \icmlaffiliation{icme}{Institute for Computational and Mathematical Engineering, Stanford University}
  \icmlaffiliation{nyu}{Department of Computer Science, New York University}
  \icmlaffiliation{unc}{Department of Computer Science, University of North Carolina at Chapel Hill}
  \icmlaffiliation{gsb}{School of Business and Department of Statistics (by courtesy), Stanford University}

  \icmlcorrespondingauthor{Catherine Chen}{cyc2152@stanford.edu}

  \icmlkeywords{Machine Learning, ICML}

  \vskip 0.3in]



\printAffiliationsAndNotice{}  


\begin{abstract}
We present an online, distribution-free framework for controlling the Conditional Value-at-Risk ($\operatorname{CVaR}$), extending conformal tail risk control to non-stationary and adversarial environments. Unlike classical risk control methods, which rely on stationarity or linearity of expectation, our approach provides provable safety guarantees for a \textit{nonlinear tail risk functional} under \textit{arbitrary} data-generating processes that may \textit{drift or shift strategically over time}. By leveraging deep connections between conformal tail risk control, online learning, and the variational representation of $\operatorname{CVaR}$ introduced by \citet{rockafellar_uryasev_2000_cvar}, we develop a novel procedure for online $\operatorname{CVaR}$ control with adversarial regret guarantees. The proposed method operates without assumptions on the underlying data-generating process, making it broadly applicable in modern high-stakes deployment settings. We prove that the realized empirical $\operatorname{CVaR}$ is asymptotically controlled at the target level, and that the resulting control is asymptotically tight up to a finite-sample ${O}(1/\sqrt{T})$ conservatism gap. We demonstrate the effectiveness of our approach on portfolio risk management and toxicity mitigation for Large Language Models (LLMs), where rare but catastrophic failures dominate system risk. 
\end{abstract}

\section{Introduction} 
Modern machine learning systems are increasingly deployed in high-stakes and safety-critical settings, including financial decision-making, automated content moderation, and large language model (LLM) alignment. In such applications, controlling the average risk is not enough to ensure system reliability, where a single catastrophic error can dominate operational, legal, or reputational risk. This has led to growing interest in \emph{tail risk control} \citep{snell2022quantileriskcontrolflexible, zollo2024promptriskcontrolrigorous, chen2025conformaltailriskcontrol}, where the goal is to design learning and decision-making systems whose worst-case outcomes remain within acceptable limits.

\begin{figure}[t]
    \centering
    \includegraphics[width=\linewidth]{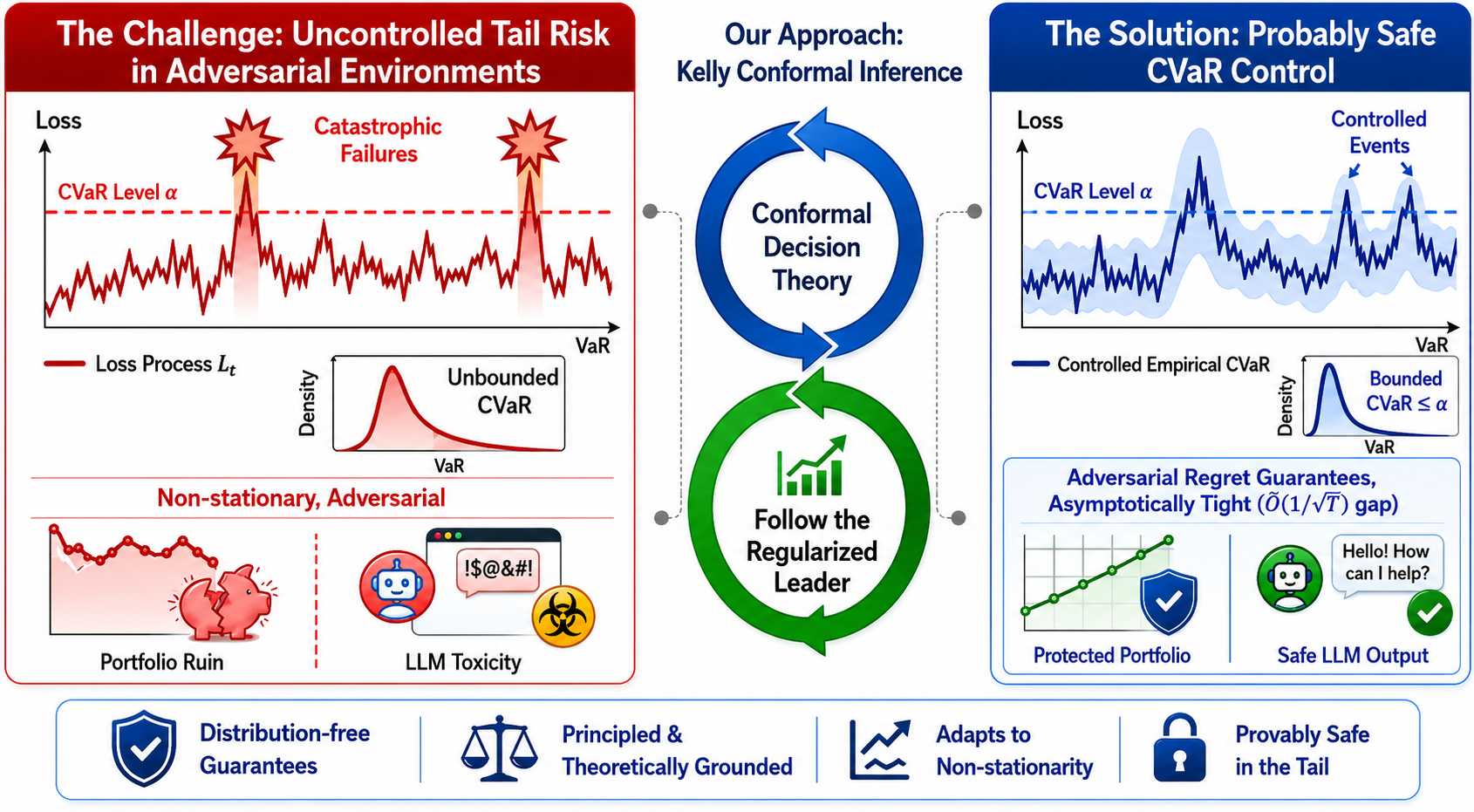}
    \caption{\textbf{Online $\operatorname{CVaR}$ control via Rockafellar-Uryasev Conformal inference.}
Left: Uncontrolled tail risk in adversarial or non-stationary environments leads to catastrophic failures.
Middle: Our method combines Conformal Decision Theory with AdaGrad-FTRL.
Right: The resulting controller provably enforces the target $\operatorname{CVaR}$ level with adversarial guarantees.
}
    \label{fig:teaser}
\end{figure}

Conditional Value-at-Risk ($\operatorname{CVaR}_{\beta}$) is a canonical and widely adopted measure of tail risk, capturing the expected loss conditioned on the worst $100(1-\beta)\%$ of outcomes \cite{rockafellar_uryasev_2000_cvar}. Unlike average risk measures, $\operatorname{CVaR}$ explicitly focuses on rare but catastrophic events, making it suitable for high-stakes, adversarial, and non-stationary environments in which failures could be costly and strategically induced. For instance, in finance, $\operatorname{CVaR}$ is a fundamental concept in portfolio optimization and regulatory risk management. 


In modern machine learning systems, an analogous notion arises naturally. For instance, in LLM deployment, $\operatorname{CVaR}$ measures the expected severity of the most harmful generations. Critically, many deployment environments for LLMs are inherently \textit{non-stationary}. Therefore, the ability to control $\operatorname{CVaR}$ in an \textit{online} manner is essential. Overall, $\operatorname{CVaR}_{\beta}$ provides a principled objective for stress testing and red-teaming, as demonstrated in existing work on adversarial prompting and automated test-case generation for uncovering model vulnerabilities \citep{perez2022redteaminglanguagemodels}. 

In this work, we address whether $\operatorname{CVaR}_\beta$ can be controlled in an \textit{online} and \textit{adversarial} environment. Unlike expected loss, $\operatorname{CVaR}_\beta$ is a nonlinear, tail-sensitive risk functional that depends on the extreme quantiles of the loss distribution. Guarding against the rare but catastrophic outcomes renders tail risk control fundamentally costly, as the events that characterize $\operatorname{CVaR}_\beta$ are precisely those that are least frequently observed.

To date, existing online risk control and conformal calibration methods fail to tackle the intrinsic nonlinearity of $\operatorname{CVaR}_\beta$ \citep{gibbs2021adaptive, yang2024bellmanconformalinferencecalibrating, lekeufack2024conformaldecisiontheorysafe}. In particular, widely used procedures such as Adaptive Conformal Inference (ACI) \cite{gibbs2021adaptive}, Bellman Conformal Inference (BCI) \cite{yang2024bellmanconformalinferencecalibrating}, and Conformal Decision Theory (CDT) \cite{lekeufack2024conformaldecisiontheorysafe} fundamentally rely on linearity of expectation and therefore break down for $\operatorname{CVaR}_\beta$. While recent conformal tail risk control methods can provide statistical guarantees for $\operatorname{CVaR}_\beta$ via L-statistics and uniform bounds \citep{chen2025conformaltailriskcontrol, snell2022quantileriskcontrolflexible,deng2023distribution,zollo2024prompt,deng2025quest}, these guarantees do not extend to online, non-stationary, or adversarial regimes.  Overcoming these limitations requires new techniques that explicitly exploit the variational structure of $\operatorname{CVaR}_\beta$. 

In this work, we develop an online, distribution-free framework for tail risk control that provides provable safety guarantees under non-stationary and adversarial data. Our approach builds on the Rockafellar-Uryasev (RU) variational representation of $\operatorname{CVaR}$ \cite{rockafellar_uryasev_2000_cvar} and reduces online tail risk control to a two-level online optimization problem, combining Conformal Decision Theory \cite{lekeufack2024conformaldecisiontheorysafe} with AdaGrad-Follow the Regularized Leader (AdaGrad-FTRL) \cite{mcmahan2011ftrl}. The resulting procedure requires no distributional assumptions, and provably controls the realized empirical $\operatorname{CVaR}$ at the target level, with a vanishing tightness gap.

\paragraph{Contributions.}
We summarize our main contributions as follows:
\begin{itemize}
    \item We formulate the problem of \emph{online $\operatorname{CVaR}$ control} under non-stationary and adversarial data, extending existing conformal and risk calibration methods that rely on stationarity or linearity of expectation.
    \item We show that the RU variational representation enables a principled reduction of nonlinear tail risk control to an online regret minimization problem over an auxiliary threshold parameter.
    \item We provide a \emph{provable safety guarantee}: the empirical $\operatorname{CVaR}$ of the realized sequence is asymptotically controlled at the target level, even under adversarial data. Moreover, the resulting control is asymptotically tight up to a finite-sample $O(1/\sqrt{T})$ conservatism gap.
    \item We demonstrate the practical effectiveness of our method on LLM toxicity control and portfolio management under distribution shift.
\end{itemize}

\subsection{Related Work}
\textbf{Rockafellar-Uryasev (RU) Representation. } 
In \citet{rockafellar_uryasev_2000_cvar}, Rockafellar and Uryasev introduce a variational representation of $\operatorname{CVaR}_\beta$ that has become the foundation for risk-sensitive optimization, transforming a tail risk objective into a tractable optimization problem over an auxiliary threshold parameter. 
Several recent works have highlighted the central role of this variational form in offline robust decision-making under distribution shift and data bias \cite{sahoo2025learningbiasedsample,lei2023policy}.
In contrast, our setting is fundamentally online as data arrives sequentially, distributions may drift, or be chosen adaptively, and the goal is not merely to optimize a static $\operatorname{CVaR}$ objective, but to dynamically control tail risk over time. Our results therefore complement and extend this line of work by bringing $\operatorname{CVaR}$-based risk control into the sequential, adversarial regime.

\textbf{Conformal Decision Theory.}
In Conformal Decision Theory (CDT) \cite{lekeufack2024conformaldecisiontheorysafe}, a single calibration parameter suffices to control the expected loss of a broad class of decisions, provided a suitable “safeguard” action is available. Similar to conformal risk control \cite{bates2021distribution,angelopoulos2025learn,angelopoulosconformal}, their technique does not work for nonlinear risk functionals like CVaR. In contrast, our work exploits a fundamentally different strategy than CDT. 

\textbf{Gradient Equilibrium.}
\citet{angelopoulos2025gradientequilibriumonlinelearning} argue that the standard notion of regret is not directly relevant to online uncertainty quantification or risk control problems and propose gradient equilibrium as an alternative. By contrast, in our setting, while regret is not the final objective, it plays a crucial role. We demonstrate that by minimizing a carefully chosen regret objective, one can provably control $\operatorname{CVaR}_\beta$ at a desired level in adversarial and non-stationary environments. This deepens the connection between online learning and online risk control.


\textbf{Adaptive FTRL (AdaGrad--FTRL).}  A central challenge presented by CVaR control is a secondary online convex optimization problem over the RU threshold variable $(c_t)$. While many no-regret online optimization algorithms are available, we seek a method that is stable, compatible with constrained continuous action variables, and does not require manual tuning of a learning rate for the threshold update. To this end, we adopt an adaptive Follow the Regularized Leader (FTRL) update with AdaGrad-style scaling \cite{mcmahan2011ftrl,shalev2012online,duchi2011adagrad}. At each round, the threshold variable is selected by minimizing the cumulative surrogate loss with a quadratic regularization. This adaptive scaling automatically adjusts the effective step size based on observed loss behavior, yielding a stable and self-tuning update rule while preserving standard no-regret guarantees.

\section{Rockafellar-Uryasev Conformal Inference}

We present our two-level online learning procedure for controlling $\operatorname{CVaR}_\beta$. Section \ref{subsec:setting} discusses the online setting in which our algorithm operates. In Section \ref{sec:regret_def}, we develop the connection between online CVaR control and online regret minimization for an extended game by leveraging the RU representation \cite{rockafellar_uryasev_2000_cvar}. We discuss the AdaGrad-FTRL method in Section \ref{subsec:main_algorithm}, followed by the main theoretical guarantee. In Sections \ref{subsec:inner} and \ref{subsec:outer}, we analyze the outer and inner updates of the algorithm and discuss their theoretical properties.

\subsection{Setting}\label{subsec:setting}

We consider an online setting indexed by time $t = 1,2,\ldots, T$. At each time point $t$, the decision-maker chooses a calibration parameter $\lambda_t$ and Nature follows by choosing a loss function $R_t(\lambda)$. We assume that all functions $R_t$ map $[\lambda_{\min}, \lambda_{\max}]$ to $[R_{\min}, R_{\max}]$ for some $-\infty < \lambda_{\min} < \lambda_{\max} < \infty$ and $-\infty < R_{\min} < R_{\max} < \infty$. The decision-maker's choice can rely on $(R_{t-1}(\cdot), \lambda_{t-1},R_{t-2}(\cdot), \lambda_{t-2},\ldots)$ and  Nature's choice can depend on the decision-maker's information set at time $t$ together with $\lambda_t$. The realized loss for the decision-maker occurred at the end of time $t$ is given by $R_t(\lambda_t)$, where we extend the domain of $R_t(\lambda_t)$ In Section \ref{sec:experiments} to the entire real line with
\[R_t(\lambda) = \begin{cases}
        R_{\min}, & \lambda<\lambda_{\min},\\
        R_{\max}, & \lambda>\lambda_{\max}.
        \end{cases}\]
we consider two applications in portfolio management and toxicity control for LLMs. In both examples, $\lambda_t$ is a scalar in $[0, 1]$ and thus the decision-maker just needs to choose a scalar in each round.




\subsection{Bounding CVaR by Regret} \label{sec:regret_def}

Unlike expectation, $\operatorname{CVaR}_\beta$ is a \emph{nonlinear} functional of the loss distribution and cannot be written as a simple average of per-round losses. As a consequence, classical notions of regret based on summing per-round losses do not apply directly.  

Given a sequence of realized losses $R_1,\ldots,R_T$, we define
the empirical CVaR through the Rockafellar--Uryasev variational
representation:
\[
\widehat{\operatorname{CVaR}}_{\beta}(R_{1:T})
:=
\min_{c\in\mathbb{R}}
\left\{
c+\frac{1}{1-\beta}\cdot \frac1T
\sum_{t=1}^T (R_t-c)_+
\right\}.
\]
Since the losses are bounded in $[R_{\min},R_{\max}]$, the minimization
may equivalently be restricted to $c\in[R_{\min},R_{\max}]$.

In the absence of ties,  $\widehat{\operatorname{CVaR}}_{\beta}(R_{1:T})$ can be approximated by the average of losses among the top $100(1-\beta)\%$ of losses, i.e.,
\[\widehat{\operatorname{CVaR}}_{\beta}(R_{1:T}) = \frac{1}{\lfloor T(1-\beta)\rfloor}\sum_{i>\lceil T\beta\rceil}R_{(i)} + O\left(\frac{1}{T}\right)\]
where $R_{(1)} \le R_{(2)}\le \ldots \le R_{(T)}$ are the ordered statistics. The remainder term $O(1/T)$ goes away when $T\beta$ is an integer.

With the definition of empirical CVaR, we say the decision-maker achieves an online CVaR control at some target level $\alpha \in (R_{\min}, R_{\max})$ iff 
\begin{equation}\label{eq:CVaR_regret}
\widehat{\operatorname{CVaR}}_{\beta}(R_1(\lambda_1), \ldots, R_T(\lambda_T)) \le \alpha + o(1).
\end{equation}

In contrast to standard cumulative loss, $\widehat{\operatorname{CVaR}}_{\beta}(R_{1:T})$ depends on the entire loss sequence in a non-additive manner. In particular, changing a single large loss can alter both the quantile threshold and the set of points entering the tail average. Using the Rockafellar-Uryasev representation of $\operatorname{CVaR}_\beta$ \cite{rockafellar_uryasev_2000_cvar} we can reformulate the criterion \eqref{eq:CVaR_regret} as follows:

\[\min_{c\in [R_{\min}, R_{\max}]}
        \frac{1}{T} \sum_{t=1}^T 
        \left[
        c + \frac{1}{1 - \beta} (R_t - c)_+
        \right] \le \alpha \]

This representation suggests an extended game between the decision-maker and Nature. At time $t$, the decision-maker chooses an auxiliary variable $c_t$ in addition to $\lambda_t$ in each round $t$ and Nature chooses $R_t(\lambda)$ as before. The cumulative realized loss is given by 
\[\frac{1}{T}\sum_{t=1}^T \left[c_t + \frac{1}{1-\beta}(R_t(\lambda_t) - c_t)_+\right].\]

For this extended game, we can define the regret as usual:

\begin{align}     \Reg_T(c) & =
         \sum_{t=1}^T 
        \left[
        c_t + \frac{1}{1 - \beta} (R_t(\lambda_t) - c_t)_+
        \right]
        \nonumber\\
        &\;-\;
         \sum_{t=1}^T 
        \left[
        c + \frac{1}{1 - \beta} (R_t(\lambda_t) - c)_+
        \right]
        .\label{eq:regret_def}
\end{align}

This regret definition is consistent with the typical objective of no-regret online learning optimization, where the first term tracks the average loss accumulated by the controller, and the second term is the best fixed action in hindsight. 

With this regret, we can turn the CVaR control problem into an expected risk control problem. 

\begin{proposition}\label{prop:CVaR_goal}
If the decision-maker can choose $(\lambda_t, c_t)$ such that 
\begin{equation}\label{eq:inner_optim_goal}
\left|\max_{c\in [R_{\min}, R_{\max}]}\Reg_T(c)\right| = o(T),
\end{equation}
and
\begin{equation}\label{eq:outer_optim_goal}
\frac{1}{T} \sum_{t=1}^T 
        \left[
        c_t + \frac{1}{1 - \beta} (R_t(\lambda_t) - c_t)_+
        \right]\le \alpha + o(1),
\end{equation}
then \eqref{eq:CVaR_regret} holds.
\end{proposition}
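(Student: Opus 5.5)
The plan is to chain the Rockafellar--Uryasev definition of $\widehat{\operatorname{CVaR}}_\beta$ with the regret decomposition \eqref{eq:regret_def}. Write $F_T(c) := \sum_{t=1}^T \bigl[c + \frac{1}{1-\beta}(R_t(\lambda_t)-c)_+\bigr]$ for the cumulative loss of a fixed threshold $c$, and $L_T := \sum_{t=1}^T \bigl[c_t + \frac{1}{1-\beta}(R_t(\lambda_t)-c_t)_+\bigr]$ for the controller's realized cumulative loss. By \eqref{eq:regret_def}, $\Reg_T(c) = L_T - F_T(c)$ for every $c$. Maximizing over $c\in[R_{\min},R_{\max}]$ gives the identity
\[
\max_{c\in [R_{\min},R_{\max}]}\Reg_T(c) = L_T - \min_{c\in[R_{\min},R_{\max}]} F_T(c).
\]

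Next I identify the minimum of $F_T$ with $T\cdot\widehat{\operatorname{CVaR}}_\beta(R_1(\lambda_1),\ldots,R_T(\lambda_T))$. By definition the empirical CVaR is $\frac1T\min_{c\in\mathbb{R}}F_T(c)$. The remark after the definition says the minimization may be restricted to $[R_{\min},R_{\max}]$; I would justify this briefly. $F_T$ is convex and piecewise linear. For $c<R_{\min}$ every term $(R_t-c)_+$ equals $R_t-c$, so the slope is $T(1-\frac{1}{1-\beta})<0$. For $c>R_{\max}$ every positive part vanishes, so the slope is $T>0$. Hence a minimizer lies in $[R_{\min},R_{\max}]$. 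This requires the realized losses $R_t(\lambda_t)$ to lie in $[R_{\min},R_{\max}]$, which holds under the setting of Section~\ref{subsec:setting}, including the stated extension of $R_t$ outside $[\lambda_{\min},\lambda_{\max}]$.

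Combining the two steps,
\[
\widehat{\operatorname{CVaR}}_\beta = \frac{L_T}{T} - \frac1T\max_{c}\Reg_T(c) \le \frac{L_T}{T} + \frac1T\Bigl|\max_c \Reg_T(c)\Bigr|.
\]
By \eqref{eq:outer_optim_goal} the first term is at most $\alpha+o(1)$. By \eqref{eq:inner_optim_goal} the second term is $o(T)/T=o(1)$. Together these give \eqref{eq:CVaR_regret}.

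The argument is essentially bookkeeping. The only point needing care is the restriction of the minimization to $[R_{\min},R_{\max}]$, which the slope computation handles. Only the lower bound $\max_c\Reg_T(c)\ge -o(T)$ is actually used, so the absolute value in \eqref{eq:inner_optim_goal} is stronger than needed. I expect the two-sided version matters only for the tightness statement proved later.
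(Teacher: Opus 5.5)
Your proof is correct and follows the paper's argument essentially step for step. It uses the identity $L_T=\min_{c\in[R_{\min},R_{\max}]}F_T(c)+\max_{c\in[R_{\min},R_{\max}]}\Reg_T(c)$, restricts the Rockafellar--Uryasev minimization to $[R_{\min},R_{\max}]$ via the monotonicity of each $f_t$ outside that interval, and then applies the two hypotheses. You are also right that only the lower bound $\max_c\Reg_T(c)\ge -o(T)$ is used here; in the paper, though, the upper regret bound does more than give tightness, since it is also used to establish \eqref{eq:outer_optim_goal} in the proof of Theorem~\ref{thm:CDT}.
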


Proposition \ref{prop:CVaR_goal} implies that it remains to achieve \eqref{eq:inner_optim_goal} and \eqref{eq:outer_optim_goal}. In the following, we will apply an AdaGrad-FTRL algorithm for \eqref{eq:inner_optim_goal} and a CDT-style algorithm to achieve \eqref{eq:outer_optim_goal}. Notably, \eqref{eq:inner_optim_goal} requires bounding the regret $\max_c \Reg_T(c)$ from both above and below. As we will see in the proofs, the upper bound is crucial to prove \eqref{eq:outer_optim_goal} while the lower bound is used to conclude CVaR control from \eqref{eq:outer_optim_goal}. 



\subsection{Rockafellar–Uryasev Conformal Inference and Regret Bound}\label{subsec:main_algorithm}
Our algorithm, shown in Algorithm~\ref{alg:ruc}, has a two-level optimization structure. The outer level performs updates on a control parameter $\lambda_t$ to enforce the $\operatorname{CVaR}_\beta$ constraint using CDT (or ACI/BCI-style) update \cite{lekeufack2024conformaldecisiontheorysafe, gibbs2021adaptive,yang2024bellmanconformalinferencecalibrating}. Meanwhile, the inner level solves a one-dimensional convex optimization problem induced by the Rockafellar–Uryasev representation, in an online manner. Using AdaGrad-FTRL updates \cite{mcmahan2011ftrl,duchi2011adagrad}, the inner level adaptively learns the optimal quantile threshold $c_t$, yielding adversarial regret guarantees without step-size tuning.

\begin{algorithm}[t]
    \caption{RU Conformal Inference}
    \label{alg:ruc}
    \begin{algorithmic}
    \STATE {\bfseries Input:} CVaR level $\beta$, target risk level $\alpha$,
    outer relative step size $\gamma_0$, effective domain $[\lambda_{\min},\lambda_{\max}]$, range $[R_{\min}, R_{\max}]$,
    initial $\lambda_1$.

    ~\\
    \STATE {\bfseries Procedure:}

    \STATE $\displaystyle c_1 \gets (R_{\min} + R_{\max})/2$.
    \STATE $\displaystyle q_0 \gets \max\left\{1,\beta / (1-\beta)\right\}^{2}$.
    \STATE $\displaystyle \gamma \gets \gamma_0 (\lambda_{\max} - \lambda_{\min})$.

    \vspace{0.1cm}
    \FOR{$t=1,\ldots,T$}

        \vspace{0.1cm}
        \STATE \textbf{Nature move:}
        \STATE Observe loss function $R_t(\cdot)$

        \vspace{0.1cm}
        \STATE \textbf{Extend the loss function:}
        \STATE Extend $R_t$ with
        $R_t(\lambda)
        \gets
        \begin{cases}
        R_{\min}, & \lambda<\lambda_{\min},\\
        R_{\max}, & \lambda>\lambda_{\max}.
        \end{cases}$

        \vspace{0.2cm}
        \STATE \textbf{Construct RU surrogate:}
        
        \noindent $\displaystyle 
        \ell^{RU}_{t}
        \gets
        c_t+\frac{1}{1-\beta}(R_t(\lambda_t)-c_t)_+.$

        \vspace{0.2cm}
        \STATE \textbf{Outer-level CDT update:}
        \noindent $\displaystyle 
        \lambda_{t+1}
        \gets
        \lambda_t-\gamma (\ell^{RU}_{t}-\alpha).$

        \vspace{0.2cm}
        \STATE \textbf{Inner-level AdaGrad--FTRL update:}
        
        \noindent $\displaystyle g_t
        \gets
        1-\frac{1}{1-\beta}\cdot \mathbf{1}\{R_t(\lambda_t)>c_t\}.$
        
        \noindent $\displaystyle  q_t
        \gets
        q_{t-1}+g_t^2,
        \qquad
        \eta_t
        \gets
        \frac{1}{2\sqrt{q_t}}.$

        \noindent $\displaystyle
        \begin{aligned} c_{t+1} \gets &\operatorname*{arg\,min}_{c\in[R_{\min}, R_{\max}]} \Bigg\{  \frac{1}{2\eta_t} \left(c-\frac{R_{\min} + R_{\max}}{2}\right)^2 \\ &+ \sum_{s=1}^{t} \left( c+ \frac{1}{1-\beta} \bigl(R_s(\lambda_s)-c\bigr)_+ \right) \Bigg\} .\end{aligned}$

    \ENDFOR
    \end{algorithmic}
\end{algorithm}

We first present our main regret bound in Theorem \ref{thm:cvar_control}. 


\begin{theorem} \label{thm:cvar_control}
Assume the loss $R_t(\lambda)$ is uniformly bounded for all $t$ and $\lambda$. 
Then the proposed algorithm guarantees
\[
\widehat{\operatorname{CVaR}}_\beta(R_{1:T})
\;\le\;
\alpha + CT^{-1/2},
\]
for some constant $C$ that only depends on $\beta$, $\alpha$, $\gamma$, $\lambda_{\min}$, $\lambda_{\max}$, $R_{\min}$, $R_{\max}$.
\end{theorem}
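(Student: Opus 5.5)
The plan is to verify the two hypotheses of Proposition~\ref{prop:CVaR_goal} with explicit $O(\sqrt{T})$ rates and then chain them. Write $\ell^{RU}_t = c_t + \frac{1}{1-\beta}(R_t(\lambda_t)-c_t)_+$ and $F_T(c) = \sum_{t=1}^T [c + \frac{1}{1-\beta}(R_t(\lambda_t)-c)_+]$. Then $T\,\widehat{\operatorname{CVaR}}_\beta = \min_c F_T(c)$ and $\max_c \Reg_T(c) = \sum_t \ell^{RU}_t - \min_c F_T(c)$. The last identity uses that the minimizer of $F_T$ lies in $[R_{\min},R_{\max}]$, since $R_t(\lambda_t)\in[R_{\min},R_{\max}]$ after the extension. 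This gives
\[
T\,\widehat{\operatorname{CVaR}}_\beta(R_{1:T}) = \sum_{t=1}^T \ell^{RU}_t - \max_c \Reg_T(c).
\]
So it suffices to prove an upper bound $\sum_t (\ell^{RU}_t-\alpha) \le C_1\sqrt{T}$ and a lower bound $\max_c \Reg_T(c) \ge -C_2\sqrt{T}$. I will in fact show $|\max_c\Reg_T(c)|=O(\sqrt T)$, so that the result follows by dividing by $T$.

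\textbf{Inner level.} I will invoke the standard AdaGrad--FTRL regret bound for linearized losses on a bounded interval of diameter $D=R_{\max}-R_{\min}$. The losses $c\mapsto c+\frac{1}{1-\beta}(R_t-c)_+$ are convex with subgradient $g_t$ at $c_t$, and $|g_t|\le G:=\max\{1,\beta/(1-\beta)\}$. The choice $q_0=G^2$ gives $\eta_t \ge 1/(2G\sqrt{t+1})$, and the proximal-free FTRL bound yields
\[
\Reg_T(c)\le \frac{D^2}{8\eta_T}+\sum_t \eta_{t-1} g_t^2 = O(GD\sqrt{T}).
\]
One subtlety is that the algorithm runs FTRL on the true piecewise-linear losses rather than on their linearizations. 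I would handle this via the strong-FTRL lemma (McMahan's analysis applies to exact convex losses with adaptive quadratic regularizers), or bound the difference directly. The lower bound is trivial: $\max_c \Reg_T(c)\ge \Reg_T(c^\ast)$ for any $c^\ast$, but we need $\ge -O(\sqrt T)$. For this, note that $\Reg_T(c)$ is minimized over the comparator when $c$ is the worst choice, while the maximum over $c$ equals $\sum_t\ell^{RU}_t - \min F_T$. The quantity $\sum_t \ell_t^{RU}\ge \min_c F_T(c) - O(\sqrt T)$ then holds via the FTRL ``be-the-leader'' inequality: the leader sequence $c_{t+1}$ is stable because consecutive FTRL iterates differ by $O(\eta_t G)$. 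Indeed, the lower bound can be shown to be $\ge 0$ up to a stability term $\sum_t G\,|c_{t+1}-c_t| = O(\sqrt{T})$.

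\textbf{Outer level.} This is the CDT/ACI argument. First I will show that $\lambda_t$ stays in a bounded band, by a safeguard argument. If $\lambda_t>\lambda_{\max}$, then $R_t(\lambda_t)=R_{\max}$, and $\ell^{RU}_t \ge R_{\max}>\alpha$ because $\ell^{RU}_t \ge R_t(\lambda_t)$ always (as $c+\frac{1}{1-\beta}(R-c)_+\ge R$). Hence $\lambda$ decreases. Increments are bounded by $\gamma M$ with $M=\max|\ell^{RU}_t-\alpha| \le \frac{1}{1-\beta}(R_{\max}-R_{\min})+|\alpha|+|R_{\max}|$. Therefore $\lambda_t \le \lambda_{\max}+\gamma M$ for all $t$ (induction, given that $\lambda_1$ lies in the domain), and the lower side is handled symmetrically from below. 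Telescoping gives
\[
\sum_t(\ell^{RU}_t-\alpha)=(\lambda_1-\lambda_{T+1})/\gamma \le (\lambda_{\max}-\lambda_{\min}+2\gamma M)/\gamma = O(1).
\]
Only the upper side is needed, so $\lambda_{T+1}\ge \lambda_{\min}-\gamma M$ suffices. The lower side needs $\ell^{RU}_t\le\alpha$ when $\lambda_t<\lambda_{\min}$, but $\ell^{RU}_t$ could exceed $\alpha$ if $c_t$ is large. I expect this to be the main obstacle: preventing $\lambda_t$ from drifting to $-\infty$. The route I would try first is to avoid needing a lower band. Split time at the last moment $\lambda_t$ enters $[\lambda_{\min}-\gamma M,\infty)$, and note that while $\lambda<\lambda_{\min}$ all losses equal $R_{\min}$. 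On such stretches, $F$ restricted there has minimum near $R_{\min}$ and the inner regret bound forces $\sum(\ell^{RU}_t - R_{\min})=O(\sqrt{\text{length}})$. Combining this with $\alpha>R_{\min}$ shows these stretches push $\lambda$ up on average, so that $\lambda_t\ge\lambda_{\min}-O(\sqrt T)$. Dividing by $\gamma$ then still gives an $O(\sqrt T)$ bound on $\sum_t(\ell_t^{RU}-\alpha)$, which is enough for the $CT^{-1/2}$ rate after combining with the inner-level bound in the displayed identity.
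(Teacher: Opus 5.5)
\textbf{The outer-level step fails.} Your reduction matches the paper's: the identity $T\,\widehat{\operatorname{CVaR}}_\beta(R_{1:T})=\sum_t\ell^{RU}_t-\max_c\Reg_T(c)$, a two-sided AdaGrad--FTRL bound on $\max_c\Reg_T(c)$, and a telescoped outer bound. You also correctly locate the obstacle: for $\lambda_t<\lambda_{\min}$ the loss is $R_{\min}$, but $\ell^{RU}_t=c_t$ may exceed $\alpha$, so the CDT safeguard is lost.

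Your claim that on a stretch with $\lambda_t<\lambda_{\min}$ ``the inner regret bound forces $\sum(\ell^{RU}_t-R_{\min})=O(\sqrt{\text{length}})$'' would need an interval (adaptive) regret guarantee. AdaGrad--FTRL is only controlled on prefixes $[1,t]$. After the split time $s$, $c_t$ still minimizes a regularized objective containing all $s$ earlier losses, so it tracks roughly the upper $(1-\beta)(t-1)/s$ quantile of the \emph{prefix} losses.

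For example (normalized units), let the prefix losses be essentially all equal to $\alpha$ and the later ones $0$. For $s<t-1<s/(1-\beta)$, the unregularized objective $s\bigl[c+(\alpha-c)_+/(1-\beta)\bigr]+(t-1-s)c$ has slope $t-1-s/(1-\beta)<0$ on $(0,\alpha)$ and slope $t-1>0$ on $(\alpha,1)$. The regularizer pulls only by $O(\sqrt t)$, so for most of these roughly $\beta s/(1-\beta)$ rounds $c_t$ sits exactly at the kink $\alpha$, and $\ell^{RU}_t=\alpha$. The stretch sum is therefore linear in its length, and $\lambda$ is not pushed up at all. What $\lambda$ does on the stretch is decided by the prefix, not by anything local to the stretch.

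\textbf{Missing idea: compare whole-horizon quantities instead of localizing.} Work in normalized units $\lambda_{\min}=R_{\min}=0$, $\lambda_{\max}=R_{\max}=1$. If $\lambda_{T+1}\ge0$, telescoping alone suffices. Otherwise let $s$ be the last time with $\lambda_s\ge0$ and chain four bounds.
\begin{itemize}
\item Telescoping to $s$ gives $\sum_{\tau\le s}(\ell^{RU}_\tau-\alpha)\le\lambda_1/\gamma+(1-\beta)^{-1}-\alpha$.
\item The regret \emph{lower} bound of Theorem~\ref{thm:regret_bound}, at the intermediate horizon $s$, turns this into $\min_cF_s(c)-\alpha s\le\lambda_1/\gamma+(1-\beta)^{-1}-\alpha+\tfrac14\sqrt{q_s}$.
\item Lemma~\ref{lem:zero_padding} shows that appending the zero losses after $s$ gives $\min_cF_T(c)-\alpha T\le(1-\beta)^{-1}\bigl(\min_cF_s(c)-\alpha s\bigr)_+$. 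If $(1-\beta)T\le s$, this follows because the averaged RU minimum is nonincreasing under zero padding. Otherwise the padded RU minimizer is $c=0$, and $\sum_{\tau\le s}R_\tau(\lambda_\tau)\le\min_cF_s(c)$.
\item The regret \emph{upper} bound at horizon $T$ gives $\sum_{t\le T}\ell^{RU}_t\le\min_cF_T(c)+\tfrac34\sqrt{q_T}$.
\end{itemize}
Together these give $\sum_{t\le T}(\ell^{RU}_t-\alpha)=O(\sqrt T)$. Your target $\lambda_{T+1}\ge\lambda_{\min}-O(\sqrt T)$ is true, but only as a consequence of this zero-padding comparison.

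\textbf{A smaller point on the inner lower bound.} Be-the-leader plus stability yields the regret upper bound, not the lower one. The lower bound comes directly from $c_t$ minimizing $F_{t-1}+\psi_{t-1}$, where $\psi_t(c)=\frac{1}{2\eta_t}(c-\frac12)^2$. With $M_t=\min_c\{F_t(c)+\psi_t(c)\}$, this gives $M_t\le M_{t-1}+\ell^{RU}_t+\psi_t(c_t)-\psi_{t-1}(c_t)$. Summing telescopes to $\max_c\Reg_T(c)\ge-\tfrac14\sqrt{q_T}$, with no stability term needed.
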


\begin{remark}[Price of nonlinearity]
In classical online control of expected loss, analogous guarantees typically achieve a $O(1/T)$ convergence rate due to the linearity of the objective \cite{lekeufack2024conformaldecisiontheorysafe}. In contrast, $\operatorname{CVaR}_\beta$ is a nonlinear functional that depends on the extreme tail of the loss distribution. As a result, our guarantee incurs a slower $O(1/\sqrt{T})$ rate, reflecting the statistical and algorithmic cost of learning and controlling tail risk from rare events. 
\end{remark}

In the next two subsections, we walk through the proof of Theorem \ref{thm:cvar_control} by studying the theoretical properties of the inner- and outer-level updates.

\subsection{Achieving \eqref{eq:inner_optim_goal} Via Inner-level Update }\label{subsec:inner}

In this and next subsections, to simplify exposition, we assume $\lambda_{\min} = R_{\min} = 0$ and $\lambda_{\max} = R_{\max} = 1$. In general, this can be achieved by replacing $\alpha, R(\lambda)$ by $\tilde{\alpha}, \tilde{R}(\tilde{\lambda})$ where 
\begin{equation}\label{eq:rescaling1}
\tilde{\alpha} = \frac{\alpha- R_{\min}}{R_{\max} - R_{\min}}, \tilde{\lambda} = \frac{\lambda - \lambda_{\min}}{\lambda_{\max} - \lambda_{\min}}, 
\end{equation}
and
\begin{equation}\label{eq:rescaling2}
\tilde{R}(\tilde{\lambda}) = \frac{R(\lambda_{\min} + (\lambda_{\max} - \lambda_{\min})\tilde{\lambda}) - R_{\min}}{R_{\max} - R_{\min}}.
\end{equation}

Fixing the sequence $\lambda_1, \lambda_2, \ldots $, the incremental loss function in the extended game is expressed as 
\begin{equation}\label{eq:ft}
f_t(c) \;=\; c + \frac{1}{1-\beta}\bigl(R_t(\lambda_t) - c\bigr)_+,
\end{equation}
which is convex and $G$-Lipschitz on $[0,1]$, where one valid subgradient of $f_t$ at $c$ is
\[
g_t(c) \in \partial f_t(c) = 1 - \frac{1}{1-\beta}\mathbf{1}\{R_t(\lambda_t) > c\},
\]
and hence
\[
g_t(c) \in \left\{1, -\frac{\beta}{1-\beta}\right\},
\qquad
G = \max\left\{1, \frac{\beta}{1-\beta}\right\}.
\]

Our goal is to choose $c_1,\dots,c_T$ in an online manner to minimize the regret with respect to a fixed single decision $c$:
\[
\Reg_T(c)
=
\sum_{t=1}^T f_t(c_t)
-
\sum_{t=1}^T f_t(c).
\]




A major practical challenge in online learning is that the optimal learning rate can vary dramatically across regimes. In benign or stationary environments, aggressive updates can significantly accelerate convergence. Conversely, in noisy, heavy-tailed, or adversarial settings, large step sizes can lead to instability and erratic behavior. 


To avoid tuning a learning rate for $c_t$ while maintaining adversarial guarantees, we use an AdaGrad-FTRL algorithm. Let $g_t$ be a chosen subgradient at the current iterate. The update rule of AdaGrad-FTRL then reduces to
\begin{align*}
c_{t+1} \gets & \operatorname*{arg\,min}_{c\in[0,1]}
        \frac{1}{2\eta_t}
        \left(c-1/2\right)^2\\
&+ \sum_{s=1}^{t}      
        \left\{c+\frac{1}{1-\beta}(R_s(\lambda_s)-c)_+\right\},
\end{align*}
where the adaptive step size is set to be 
\[\eta_t = \frac{1}{2\sqrt{q_t}}, \quad q_t = q_{t-1} + g_t^2, \quad q_0 = \max\left\{1, \frac{\beta}{1-\beta}\right\}^2.\]

\begin{theorem}\label{thm:regret_bound}
For every adaptive or adversarial sequence $R_1(\cdot),\ldots,R_T(\cdot): [0,1]\mapsto [0,1]$, the iterates above satisfy
\[
    -\frac{1}{4}\sqrt{q_T}
    \le
    \max_{c\in [0, 1]}\Reg_T(c)
    \le
    \frac34\sqrt{q_T}.
\]
In particular,
\[
    \left|\max_{c\in [0, 1]}\Reg_T(c)\right|
    \le
    \frac34\max\left\{1,\frac{\beta}{1-\beta}\right\}\sqrt{T+1}.
\]
\end{theorem}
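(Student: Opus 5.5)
The plan is to run the standard FTRL analysis through the exact "strong FTRL" telescoping identity. Its stability terms give the upper bound, and a monotonicity argument on the same terms gives the lower bound.

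\textbf{Setup.} Write the time-varying regularizer as $r_t(c)=\frac{1}{2\eta_t}(c-1/2)^2$ with $\eta_0:=\frac{1}{2\sqrt{q_0}}$. Define $F_{t}(c)=r_{t-1}(c)+\sum_{s<t}f_s(c)$, so that $c_t=\arg\min_{[0,1]}F_t$ for all $t\ge 1$; at $t=1$ this gives $c_1=1/2$. Since $q_t$ is nondecreasing, $\eta_t$ is nonincreasing and $r_t$ is pointwise nondecreasing in $t$. For any $u\in[0,1]$ I would use the identity
\[
\Reg_T(u)=r_T(u)-r_0(c_1)+\sum_{t=1}^{T}\bigl[F_t(c_t)-F_{t+1}(c_{t+1})+f_t(c_t)\bigr]+F_{T+1}(c_{T+1})-F_{T+1}(u),
\]
where $F_{T+1}$ includes the regularizer $r_T$. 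This identity is pure bookkeeping. Because $c_{T+1}$ minimizes $F_{T+1}$, the last difference is $\le 0$.

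\textbf{Upper bound.} I would bound each bracket by a stability term. Since $F_{t+1}=F_t+f_t+(r_t-r_{t-1})$ and $r_t-r_{t-1}\ge 0$, we have $F_{t+1}(c_{t+1})\ge F_t(c_{t+1})+f_t(c_{t+1})$. Hence the bracket is at most $F_t(c_t)-F_t(c_{t+1})+f_t(c_t)-f_t(c_{t+1})$.

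$F_t$ is $1/\eta_{t-1}$-strongly convex and minimized at $c_t$, and $f_t$ is convex with subgradient $g_t$ at $c_t$. The usual one-dimensional argument then bounds this quantity by $\frac{\eta_{t-1}}{2}g_t^2=\frac{g_t^2}{4\sqrt{q_{t-1}}}$. The regularizer term gives $r_T(u)\le \frac{1}{2\eta_T}\cdot\frac14=\frac{\sqrt{q_T}}{4}$.

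To sum the stability terms, I use that $q_0\ge G^2\ge g_t^2$, which gives $q_t\le 2q_{t-1}$. Then the identity $g_t^2=(\sqrt{q_t}-\sqrt{q_{t-1}})(\sqrt{q_t}+\sqrt{q_{t-1}})$ turns $\sum_t g_t^2/\sqrt{q_{t-1}}$ into a telescoping sum of order $\sqrt{q_T}$. This yields $\max_u\Reg_T(u)\le K\sqrt{q_T}$ for an explicit constant $K$.

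\textbf{Main obstacle.} I expect the hard step to be getting exactly $K=\tfrac34$. The crude chain above gives roughly $\tfrac14+\tfrac{1+\sqrt2}{4}$, which is too large. To tighten it, I would first exploit that $g_t$ takes only two values and that $f_t$ is piecewise linear. This should let the stability bound use $\eta_t$ rather than $\eta_{t-1}$, via the fact that $c_{t+1}$ is computed with $q_t$, which already includes $g_t^2$. With $\eta_t$ in place, $\sum_t \frac{g_t^2}{4\sqrt{q_t}}\le\frac12\sqrt{q_T}$ by the standard AdaGrad lemma $\sum_t g_t^2/\sqrt{q_t}\le 2\sqrt{q_T}$. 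Adding the regularizer contribution $\tfrac14\sqrt{q_T}$ gives the total $\tfrac34\sqrt{q_T}$.

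\textbf{Lower bound.} Take $u=c_{T+1}$, so the final difference in the identity vanishes. Also $r_T(u)-r_0(c_1)=r_T(u)\ge 0$, because $c_1=1/2$ gives $r_0(c_1)=0$. For each bracket, $F_{t+1}(c_{t+1})\le F_{t+1}(c_t)=F_t(c_t)+f_t(c_t)+(r_t-r_{t-1})(c_t)$. This makes the bracket at least $-(r_t-r_{t-1})(c_t)\ge -\frac14\bigl(\frac{1}{2\eta_t}-\frac{1}{2\eta_{t-1}}\bigr)$, using $(c_t-1/2)^2\le 1/4$. Summing telescopes to $\ge -\frac{1}{8\eta_T}=-\frac{\sqrt{q_T}}{4}$. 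Hence $\max_c\Reg_T(c)\ge\Reg_T(c_{T+1})\ge-\frac14\sqrt{q_T}$.

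\textbf{Final bound.} Finally, $q_T=q_0+\sum_{t\le T}g_t^2\le G^2(T+1)$, which gives the stated bound $\frac34 G\sqrt{T+1}$ on $|\max_c\Reg_T(c)|$.
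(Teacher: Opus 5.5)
Your lower bound (take $u=c_{T+1}$ and bound each bracket below by $-(r_t-r_{t-1})(c_t)$) is correct and is essentially the paper's argument. So are your stability bound $\frac{\eta_{t-1}}{2}g_t^2$, the regularizer term $\frac14\sqrt{q_T}$, and the final step $q_T\le G^2(T+1)$.

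The gap is the constant $\frac34$ in the upper bound. The tightening you propose, replacing $\eta_{t-1}$ by $\eta_t$ in the per-round stability bound, is false for this algorithm. The increment $r_t-r_{t-1}=\frac{\delta_t}{2}(c-\frac12)^2$, with $\delta_t=\frac{1}{\eta_t}-\frac{1}{\eta_{t-1}}$, is centered at $\frac12$ rather than at $c_t$. So the extra curvature does not shrink the step when $c_t-\frac12$ has the same sign as $g_t$.

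Concretely, take $\beta=\frac12$ (so $q_0=1$), $R_1\equiv 1$, $R_2\equiv 0$. Then $g_1=-1$, $\eta_1=\frac{1}{2\sqrt2}$, $c_2=\frac12+\eta_1$, $g_2=1$, $\eta_2=\frac{1}{2\sqrt3}$, and $c_3=\frac12$. Your second bracket equals $F_2(c_2)+f_2(c_2)-F_3(c_3)=\frac{1}{4\sqrt2}=\frac{\eta_1}{2}g_2^2$, which exceeds $\frac{1}{4\sqrt3}=\frac{\eta_2}{2}g_2^2$. Two-valued subgradients and piecewise linearity do not help here, since all losses are linear on $[0,1]$ in this example. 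As written, your argument only proves the upper bound with constant $\frac{2+\sqrt2}{4}$.

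The missing ingredient is a sharper summation of $\sum_t g_t^2/\sqrt{q_{t-1}}$. It keeps $\eta_{t-1}$ but uses $g_t^2\le q_0$ more efficiently than $q_t\le 2q_{t-1}$ does. Set $x=q_{t-1}$, $a=g_t^2$, $s=\sqrt{1+a/x}$. The paper uses the exact identity $\frac{a}{\sqrt x}=2(\sqrt{x+a}-\sqrt x)+\sqrt x\,(s-1)^2$ and bounds the remainder via $xs(s-1)=x+a-\sqrt{x(x+a)}\le a\le q_0$. This gives
\[
\frac{g_t^2}{\sqrt{q_{t-1}}}\le 2\bigl(\sqrt{q_t}-\sqrt{q_{t-1}}\bigr)+q_0\Bigl(\frac{1}{\sqrt{q_{t-1}}}-\frac{1}{\sqrt{q_t}}\Bigr).
\]
This telescopes to $\sum_t g_t^2/\sqrt{q_{t-1}}\le 2\sqrt{q_T}-\sqrt{q_0}-q_0/\sqrt{q_T}\le 2\sqrt{q_T}$. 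Hence the stability terms $\frac14\sum_t g_t^2/\sqrt{q_{t-1}}$ contribute at most $\frac12\sqrt{q_T}$, and adding $\frac14\sqrt{q_T}$ from the regularizer gives exactly $\frac34\sqrt{q_T}$. If you substitute this lemma for your $\eta_t$ step, the rest of your proof goes through.
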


We can show that the leading term is optimal in terms of the rate in $T$ and the constant is looser by a constant factor. The proof is presented in Appendix \ref{subsec:proof_regret_lower_bound}.

\begin{theorem}\label{thm:regret_lower_bound}
For every $T\ge1$ and every online learning algorithm, possibly randomized, there exists a sequence $R_1(\cdot),\ldots,R_T(\cdot):[0,1]\mapsto [0,1]$, such that
\[
    \mathbb E\left[\max_{c\in[0, 1]}\Reg_T(c)\right]
    \ge \frac{1}{\sqrt{2\pi}}
    \sqrt{\frac{\beta}{1-\beta}}\sqrt T\cdot (1+o(1)),
\]
where the expectation is only over the algorithmic randomization. For deterministic algorithms, the expectation may be omitted.
\end{theorem}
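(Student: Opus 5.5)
We prove the lower bound of Theorem \ref{thm:regret_lower_bound} with the standard probabilistic (Yao-type) argument. We choose Nature's losses at random, independently of the learner, and show that the expected regret averaged over this randomness is already large. Then some fixed sequence must achieve at least that value against every algorithm, including randomized ones.

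\textbf{Construction.} Each $R_t(\cdot)$ is a constant function, so $\lambda_t$ plays no role, and $R_t\equiv Y_t\in\{0,1\}$. The bits $Y_t$ are i.i.d.\ Bernoulli$(p)$ with $p=1-\beta$, drawn independently of the algorithm's internal randomness. For $c\in[0,1]$ the per-round loss is
\[
f_t(c)=c+\tfrac{1}{1-\beta}Y_t(1-c),
\]
which is linear in $c$. Its expectation given the past is
\[
c+\tfrac{p}{1-\beta}(1-c)=1,
\]
whatever $c_t$ is, because $c_t$ is chosen before $Y_t$ is revealed and is independent of it. Hence
\[
\mathbb E\Bigl[\sum_t f_t(c_t)\Bigr]=T
\]
for any (possibly randomized) learner.

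\textbf{Best comparator.} Since $\sum_t f_t(c)$ is linear in $c$, its minimum over $[0,1]$ is attained at an endpoint. At the endpoints,
\[
\sum_t f_t(0)=\tfrac{S}{1-\beta},\qquad \sum_t f_t(1)=T,\qquad S=\sum_t Y_t.
\]
Therefore
\[
\max_c \Reg_T(c)=\sum_t f_t(c_t)-\min\Bigl\{T,\tfrac{S}{1-\beta}\Bigr\}.
\]
Taking expectations,
\[
\mathbb E\Bigl[\max_c\Reg_T(c)\Bigr]=\mathbb E\Bigl[\Bigl(T-\tfrac{S}{p}\Bigr)_+\Bigr]=\tfrac1p\,\mathbb E\bigl[(Tp-S)_+\bigr].
\]
The quantity $S-Tp$ has variance $Tp(1-p)$. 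By the central limit theorem, together with uniform integrability (the second moments are bounded after scaling by $\sqrt{T}$),
\[
\mathbb E\bigl[(Tp-S)_+\bigr]=\sqrt{Tp(1-p)}\,\mathbb E[Z_+]\,(1+o(1)),\qquad Z\sim N(0,1),
\]
and $\mathbb E[Z_+]=1/\sqrt{2\pi}$. Dividing by $p=1-\beta$ and using $1-p=\beta$ gives
\[
\tfrac{1}{\sqrt{2\pi}}\sqrt{\tfrac{\beta}{1-\beta}}\sqrt T\,(1+o(1)).
\]

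\textbf{Transfer to a fixed sequence.} The average over $Y$ of $\mathbb E_{\text{alg}}\bigl[\max_c\Reg_T(c)\bigr]$ is at least the bound above. For a fixed algorithm, some realization $Y$ must therefore attain at least the average. That realization is the desired sequence, and the expectation on its regret is only over the algorithm's randomness. For deterministic algorithms this expectation is trivial.

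\textbf{Main obstacle.} The delicate step is the asymptotic constant: one has to justify convergence of $\mathbb E[(Tp-S)_+]/\sqrt{T}$, not just convergence in distribution. Since $p$ is fixed, this follows from the second-moment bound. One could instead quote the known exact mean absolute deviation of the binomial (de Moivre's formula). We also note that the construction needs $\beta\in(0,1)$, so that $0<p<1$.
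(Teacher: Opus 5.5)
Your proof is correct and is essentially the paper's argument. It uses the same i.i.d.\ construction with $\mathbb P(R_t=1)=1-\beta$ and $\mathbb P(R_t=0)=\beta$, reduces the expected regret to the expected positive part of a centered binomial scaled by $1/(1-\beta)$, and takes the limit by CLT plus uniform integrability. The paper reaches that reduction via mean-zero slopes with $\mathbb E[g_tc_t]=0$; you reach it via $\mathbb E[f_t(c_t)]=1$ and the endpoint comparison, which gives the opposite-signed positive part but the same expectation. Your transfer step, fixing one realization of $(R_t)$ and averaging only over the algorithm's randomness, is actually stated more carefully than the paper's existence claim for randomized algorithms.
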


When $\beta > 1/2$, the constant gap can be closed by allowing the tuning parameter to depend on $T$. We present the algorithm and its regret upper bound in Appendix \ref{subsec:better_upper_bound}. Nevertheless, we prefer the AdaGrad-FTRL algorithm in practice that is agnostic to the horizon $T$.

\subsection{ Achieving \eqref{eq:outer_optim_goal} Via Outer-level Updates}\label{subsec:outer}


Fixing the sequence ${c}_1, {c}_2, \ldots$, \eqref{eq:outer_optim_goal} becomes the same objective for CDT with additive cumulative losses. When $R_t(\lambda)$ is bounded, we can prove the risk control up to an $O(1/\sqrt{T})$ factor. The proof is substantially more complicated than the argument in \citet{lekeufack2024conformaldecisiontheorysafe} because the sequence ${\{c}_t\}$ can be arbitrary and the minimal loss at time $t$ may exceed $\alpha$ when ${c}_t > \alpha$. To prove boundedness of the sequence $\lambda_t$, we need to apply the regret bound on the AdaGrad-FTRL algorithm. The proof is presented in Appendix \ref{subsec:proof_thm_CDT}.
\begin{theorem} \label{thm:CDT}
Given any sequence of $R_1(\cdot),\ldots,R_T(\cdot): [0,1]\mapsto [0,1]$, let $\{{c}_t\}_{t=1}^T$ denote the inner-level update given by the AdaGrad-FTRL algorithm. The outer-level update in Algorithm \ref{alg:ruc} guarantees that
\begin{align}
& \frac{1}{T} \sum_{t=1}^T 
\left[
{c}_t + \frac{1}{1-\beta}(R_t(\lambda_t) - {c}_t)_+
\right]\notag\\
& \le \alpha + \frac{\sqrt{T+1}}{T} C_1 + \frac{1}{T}C_2,\notag
\end{align}
where 
\[C_1 = \max\left\{1, \frac{\beta}{1-\beta}\right\}\left(\frac{3}{4} + \frac{1}{4(1 - \beta)}\right),\]
and
\[C_2 = \frac{\lambda_1/\gamma+(1-\beta)^{-1}-\alpha}{1-\beta}.\]
\end{theorem}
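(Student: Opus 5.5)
The plan is to follow the CDT telescoping argument and use the regret bounds of Theorem~\ref{thm:regret_bound} only where the additive structure breaks down. Write $\ell_t := c_t + \frac{1}{1-\beta}(R_t(\lambda_t)-c_t)_+$ for the RU surrogate. Summing the outer update $\lambda_{t+1} = \lambda_t - \gamma(\ell_t-\alpha)$ over $t=1,\dots,T$ gives the exact identity
\[
\sum_{t=1}^T (\ell_t - \alpha) \;=\; \frac{\lambda_1 - \lambda_{T+1}}{\gamma}.
\]
The whole proof therefore reduces to a lower bound on $\lambda_{T+1}$ of the form $\lambda_{T+1} \ge -\gamma\,[\text{const} + O(\sqrt{T+1})]$. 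Dividing by $T$ then yields the $C_1\sqrt{T+1}/T$ and $C_2/T$ terms.

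\emph{Step 1 (one-step overshoot).} Since $c_t, R_t \in [0,1]$, we have $0 \le \ell_t \le c_t + (1-c_t)/(1-\beta) \le (1-\beta)^{-1}$. Hence a single update decreases $\lambda$ by at most $\gamma((1-\beta)^{-1}-\alpha)$. So if $\lambda_\tau \ge 0$, then $\lambda_{\tau+1} \ge -\gamma((1-\beta)^{-1}-\alpha)$; this is the source of the $(1-\beta)^{-1}-\alpha$ inside $C_2$.

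\emph{Step 2 (the zero-loss run).} Let $\tau$ be the last index $\le T$ with $\lambda_\tau \ge 0$. If no such index exists, then $\lambda_1<0$ and the argument below applies from $t=1$. For $s=\tau+1,\dots,T$ we have $\lambda_s<0$, so the extended loss is $R_s(\lambda_s)=0$, which gives $f_s(c)=c$ and $\ell_s=c_s$. Thus
\[
\lambda_{T+1} = \lambda_{\tau+1} - \gamma\sum_{s=\tau+1}^T (c_s-\alpha).
\]
In plain CDT this sum is nonpositive and we would be done. Here $c_s$ may exceed $\alpha$, and this is the main obstacle. I would control it through the FTRL regret bounds, applied to the full horizon and to the prefix:
\[
\sum_{s=\tau+1}^T c_s = \sum_{s=1}^T f_s(c_s) - \sum_{s=1}^\tau f_s(c_s).
\]
The first sum is bounded above by $\min_c \sum_{s\le T} f_s(c) + \tfrac34\sqrt{q_T}$. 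The second is bounded below by $\min_c\sum_{s\le\tau} f_s(c) - \tfrac14\sqrt{q_\tau}$, using the lower half of Theorem~\ref{thm:regret_bound}, which holds for every horizon and in particular for $\tau$. Using the same comparator in both minima makes the $f_s$ terms on the prefix cancel. It then remains to relate the leftover $(T-\tau)c$ and the prefix terms back to $\sum_t(\ell_t-\alpha)$ itself, via the telescoping identity applied on $[1,\tau]$.

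\emph{Step 3 (closing the bound).} I expect the final bound to be self-referential: $\sum_t(\ell_t-\alpha)$ appears on both sides, multiplied by a factor involving $(1-\beta)$ on one side. Rearranging this inequality should produce the division by $1-\beta$ visible in $C_2$, and the coefficient $\tfrac34 + \tfrac{1}{4(1-\beta)}$ in $C_1$. Here $\tfrac34$ comes from the upper regret bound and $\tfrac{1}{4(1-\beta)}$ from the lower regret bound rescaled by the slope $1/(1-\beta)$ of $f_t$. Finally, $\sqrt{q_T} \le \max\{1,\beta/(1-\beta)\}\sqrt{T+1}$ gives the stated constants.

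The delicate point is Step 2. The FTRL iterate is not an interval-regret algorithm, so the run $\{\tau+1,\dots,T\}$ must be handled by differencing two anytime bounds, not by a direct interval bound. If the comparator bookkeeping fails, my fallback is to use the explicit FTRL structure: on zero-loss rounds the cumulative objective gains slope $+1$, so $c_{s+1}$ moves monotonically toward $0$. This would bound $\sum_{s}(c_s-\alpha)_+$ over the run by $O(\sqrt{q_T})$ directly.
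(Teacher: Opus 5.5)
Your skeleton matches the paper's proof. You telescope the outer update, take the last index $\tau$ with $\lambda_\tau\ge 0$, use the one-step overshoot to get $\sum_{t\le\tau}(\ell_t-\alpha)\le K:=\lambda_1/\gamma+(1-\beta)^{-1}-\alpha$, note that all later losses are zero, and combine the lower regret bound of Theorem~\ref{thm:regret_bound} on $[1,\tau]$ with the upper bound on $[1,T]$. The gap is exactly the step you flag as delicate.

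Write $F_t(c)=\sum_{s\le t}f_s(c)$, so $F_T(c)=F_\tau(c)+(T-\tau)c$. Your differencing needs an upper bound on $\min_c F_T(c)-\min_c F_\tau(c)-\alpha(T-\tau)$. The prefix terms cancel only if the full-horizon comparator is the prefix minimizer $c_\tau^*$. In a lower bound you cannot replace $\min_c F_\tau$ by $F_\tau(c)$ for an arbitrary $c$. This choice leaves $(T-\tau)(c_\tau^*-\alpha)$, and nothing forces $c_\tau^*\le\alpha$. The prefix only gives $\tau(c_\tau^*-\alpha)\le F_\tau(c_\tau^*)-\alpha\tau\le K+\tfrac14\sqrt{q_\tau}$. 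So the leftover is controlled only by $\frac{T-\tau}{\tau}\bigl(K+\tfrac14\sqrt{q_\tau}\bigr)$, which is useless when the zero run is long compared with the prefix. There is also no self-referential inequality in $\sum_t(\ell_t-\alpha)$ to rearrange. The factor $1/(1-\beta)$ in $C_2$, and in the $\frac{1}{4(1-\beta)}$ part of $C_1$, comes from somewhere else.

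The missing ingredient is the deterministic zero-padding inequality of Lemma~\ref{lem:zero_padding}. If $x_1,\dots,x_\tau\in[0,1]$ are followed by zeros, then $\min_c F_T(c)-\alpha T\le\frac{1}{1-\beta}\bigl(\min_c F_\tau(c)-\alpha\tau\bigr)_+$. Its proof switches comparators according to the length of the zero run.
\begin{itemize}
\item If $(1-\beta)T\le\tau$: each appended term $c$ is at most $F_\tau(c)/\tau$, so $V_t:=\min_c F_t(c)/t$ is nonincreasing for $t\ge\tau$. Hence $T(V_T-\alpha)\le T(V_\tau-\alpha)\le\frac{\tau}{1-\beta}(V_\tau-\alpha)_+$.
\item If $(1-\beta)T>\tau$: the right derivative of $F_T$ on $[0,1]$ is at least $T-\tau/(1-\beta)>0$. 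So the minimizer is $c=0$ and $\min_c F_T=\frac{1}{1-\beta}\sum_{i\le\tau}x_i$. Then $x_i\le f_i(c)$ for every $c$, together with $\alpha T>\alpha\tau/(1-\beta)$, gives the claim.
\end{itemize}
Chaining the prefix telescoping, the lower regret bound on $[1,\tau]$, this lemma, and the upper regret bound on $[1,T]$ yields exactly $C_2/T+C_1\sqrt{T+1}/T$.

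Your fallback does not sidestep this. The iterate $c_s$ is indeed nonincreasing on zero-loss rounds. But suppose every prefix loss equals $\alpha+\epsilon$. Then $c_s$ stays pinned at the kink $\alpha+\epsilon$ for roughly $\frac{\beta}{1-\beta}\tau$ rounds. That contributes on the order of $\frac{\beta}{1-\beta}\tau\epsilon$, which can be of order $\frac{\beta}{1-\beta}\lambda_1/\gamma$ and is unrelated to $\sqrt{q_T}$. So bounding $\sum_s(c_s-\alpha)_+$ needs the same prefix information and the same $1/(1-\beta)$ factor. It is not a direct $O(\sqrt{q_T})$ statement.
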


Importantly, this guarantee holds \emph{regardless of the quality of the threshold sequence} $\{c_t\}$. However, the choice of $\{c_t\}$ directly determines the \emph{tightness} of the bound: if $c_t$ is far from the optimal threshold $c^*$, the surrogate objective may be much larger than the true $\operatorname{CVaR}_\beta$, leading to conservative behavior. Thus, while the outer CDT-level ensures validity, the role of the inner-level AdaGrad-FTRL update is to adaptively learn thresholds $c_t$ to avoid being overly conservative, as described in the previous section. 

\begin{remark}{(Range of $\lambda_t$.)}
The update for $\lambda_t$ is intentionally unprojected. The interval
$[\lambda_{\min},\lambda_{\max}]$ is the natural range used for scaling and interpretation, while the theoretical analysis extends the loss outside this range. Theorem~\ref{thm:CDT} shows that this extension does not lead to uncontrolled drift of the iterates.
\end{remark}

\section{Experiments}\label{sec:experiments}

We evaluate the proposed RU Conformal $\operatorname{CVaR}$ control framework on two sequential decision-making problems: (i) toxicity control for LLM outputs and (ii) portfolio management under market distribution shift.

Across both experiments, we discard the first 100 rounds as a burn-in period. We report results on (i) realized $\widehat{\operatorname{CVaR}}_\beta$ control, and (ii) the dynamics of the learned threshold $\lambda_t$.

\subsection{Toxicity Control for LLM Outputs Experiment}

\begin{figure*}
    \includegraphics[width=\linewidth]{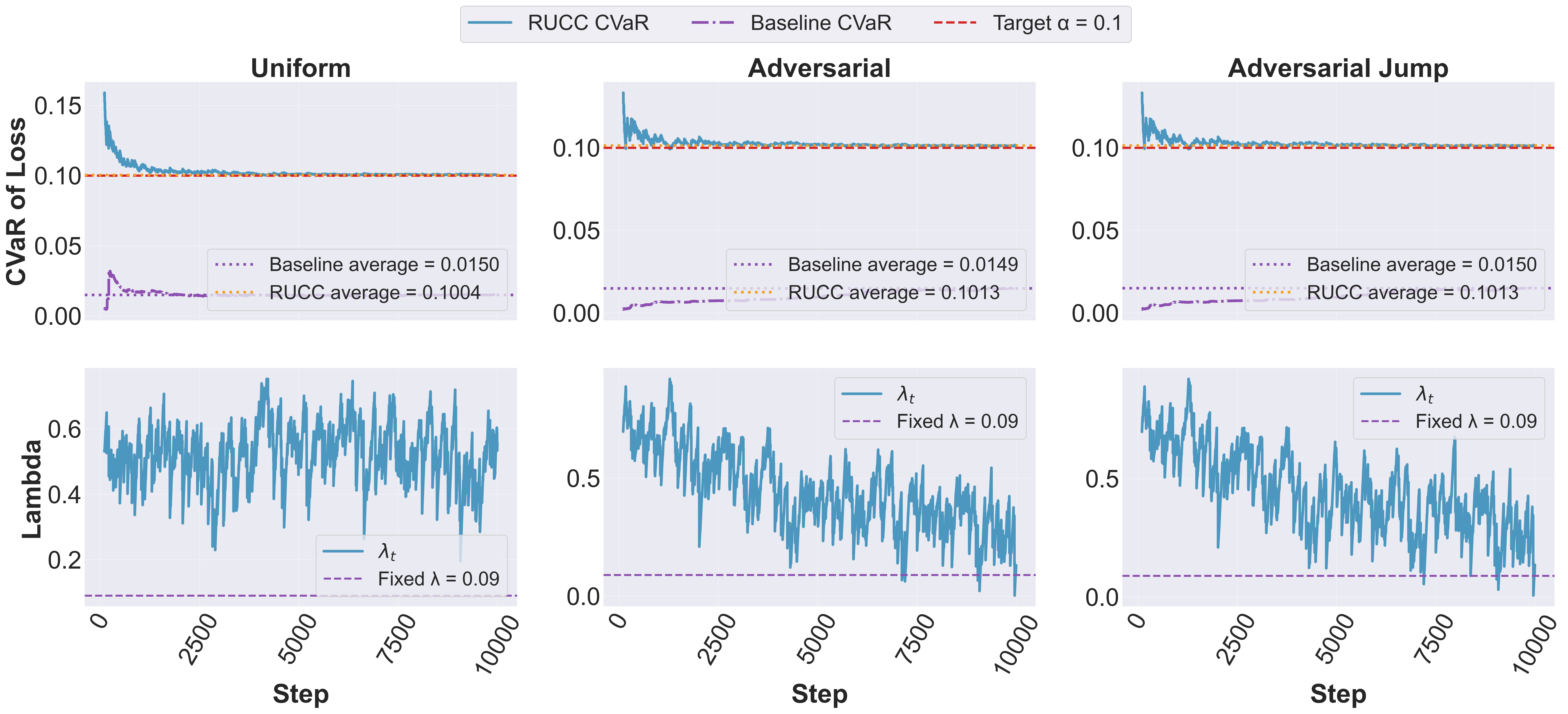}
    \caption{
    \textbf{LLM Toxicity Control Experiment.} Realized empirical $\operatorname{CVaR}_\beta$ and evolution of $\lambda_t$  for $\beta = 0.85$ with target $\alpha = 0.1$, step size $\gamma = 0.05$, and 100 steps of burn-in for the 
     \textit{Uniform} regime in the first column, 
     \textit{Adversarial} regime in middle column, and \textit{Adversarial-Jump} regime in the right column.
    }
    \label{fig:llm_main}
\end{figure*}

\paragraph{Environment.}
We consider a pool of prompts $\{x_i\}$, each paired with multiple candidate LLM-generated responses $\{y_i^j\}_{j=1}^K$.  
Each response is annotated with:
(i) a machine toxicity score $r_{m,i}^j \in [0,1]$ produced by a toxicity classifier, and  
(ii) a human toxicity score $r_i^j \in [0,1]$, which we treat as the ground-truth loss.

At each round $t$, one prompt is selected and $K$ candidate responses are sampled according to a time-varying sampling distribution (described below). The realized loss is the maximum human toxicity score among accepted responses after filtering out responses whose machine toxicity score exceeds $\lambda_t$. Note that this convention corresponds to the LLM abstaining from returning a response
when all candidates are filtered out; if no response is shown, the toxicity loss is zero.

\paragraph{Action parameter.}
The controller selects a threshold $\lambda_t \in [0,1]$ and rejects all candidate responses whose machine toxicity score exceeds $\lambda_t$. This follows the same deployment protocol as in \citet{chen2025conformaltailriskcontrol}.

\paragraph{Models and datasets.}
Following \citet{chen2025conformaltailriskcontrol}, we create an inexpensive semi-synthetic benchmark using an existing machine scoring model as the ``human annotator," and a biased scoring model as the ``machine assessor." Specifically, we use  the Detoxify model \cite{detoxify} for $r(\cdot)$ and retrain the Detoxify model for $r_m(\cdot)$ on a biased subset of the Jigsaw Unintended Bias in Toxicity Classification dataset \cite{jigsaw} that consists of the $15\%$ most and least toxic instances. 

We conduct experiments using Llama 3.2-3B \citep{llama3}. We draw prompts from the \textsc{RealToxicityPrompts} dataset \cite{gehman2020realtoxicityprompts} using the sampling regimes described in the next section. 

\paragraph{Synthetic distribution shift via tail thickening.}
To simulate controlled non-stationary and adversarial shifts in the tail behavior, we generate responses using a time-varying tail-severity parameter.

At each time $t$, we sample a quantile level
\[
p_t \sim \operatorname{Beta}(a_t, b_t),
\]
which controls how extreme the selected response is within the candidate pool.
Given $K$ candidate responses for the current prompt, we sort them by machine toxicity score and select the response with rank
\[
k_t = \min \{K, \max\{1, \left\lceil K \, p_t \right\rceil\}\} .
\]
Thus, small values of $p_t$ select benign responses, while values of $p_t$ near $1$ select highly toxic responses.

We consider three Beta distribution sampling regimes: (i) \textit{uniform} regime, with $a_t = b_t = 1$, (ii) \textit{adversarial/non-stationary} regime with time-varying $a_t$ and fixed $b_t = 1$, and (iii) \textit{adversarial- jumps} regime, where $b_t = 1$, and $a_t \in [0.5, 5]$ changes over time with random spikes and dips throughout the sampling window. Our adversarial constructions allow the sampling distribution to gradually concentrate more mass near $1$ as time passes, thereby increasingly selecting high machine-toxicity responses over time and inducing a controlled tail-thickening distribution shift in regime (ii), and a volatile shift in regime (iii). Visualizations of the sampling distributions of both regimes can be found in Appendix \ref{sec:llm}.


\subsection{Results for Toxicity Control for LLM Outputs}

We evaluate the proposed RU Conformal $\operatorname{CVaR}$ (RUCC) controller in the three sampling regimes: \textit{uniform}, \textit{adversarial}, and \textit{adversarial-jump}. In all three settings, the target risk level is set to $\alpha = 0.1$ with $\gamma = 0.05$, and we report results for $\beta = 0.85$ in Figure~\ref{fig:llm_main}. For results of $\beta \in \{0.75, 0.8, 0.85, 0.9\}$, see Appendix \ref{sec:llm}.

\paragraph{Evolution of the empirical $\operatorname{CVaR}_\beta$.}
The top row of Figure~\ref{fig:llm_main} displays the per-step realized empirical $\operatorname{CVaR}_\beta$ trajectories of our RUCC controller (blue line), with the corresponding average realized $\operatorname{CVaR}_\beta$ (orange dotted line) and the target risk level (red dashed line). Results are shown for the uniform regime (left panel), adversarial regime (middle panel), and adversarial-jump regime (right panel).

We compare RUCC against a static baseline with a fixed $\lambda$  throughout the experiment. Specifically, we estimate a constant threshold $\hat{\lambda}$ using distortion risk control via L-statistics calibrated on the first 1000 samples in our sample set \cite{chen2025conformaltailriskcontrol}. For this baseline, we report the realized empirical $\operatorname{CVaR}_\beta$ trajectory (purple dashed line) and its average realized $\operatorname{CVaR}_\beta$ (purple dotted line).

From the top panel of Figure~\ref{fig:llm_main}, we observe that the RUCC controller rapidly stabilizes the realized empirical $\operatorname{CVaR}_\beta$, keeping it close to the target level across all three settings. In the uniform regime, the trajectory settles relatively smoothly near the target. In contrast, the adversarial and adversarial-jump regimes exhibit more stochastic fluctuations, reflecting the greater difficulty of controlling tail risk when the data-generating process changes over time.

The figure also highlights the limitation of the baseline which selects a very small fixed value of $\hat{\lambda}$ that keeps $\operatorname{CVaR}_\beta$ well below the target, making the LLM severely overly conservative. By contrast, RUCC adjusts $\lambda_t$ over time to maintain tail risk control while avoiding excessive conservatism.


\paragraph{Evolution of $\lambda_t$.} Next, we examine the evolution of $\lambda_t$ in the LLM toxicity control task under the uniformly sampled prompt stream and the adversarially sampled streams, shown in the bottom panel of Figure~\ref{fig:llm_main}. 

In the uniform regime, $\lambda_t$ remains relatively stable throughout the trajectory, exhibiting only mild fluctuations. This behavior is consistent with the underlying loss distribution being comparatively stationary, so the algorithm does not need to adapt the threshold aggressively in order to maintain the target $\operatorname{CVaR}_\beta$ level.

In contrast, under the adversarial and adversarial-jump regimes, $\lambda_t$ decreases substantially over time and exhibits noticeably larger fluctuations than in the uniform setting. This reflects the increased difficulty of the control problem under distribution shift, where the underlying loss distribution changes over time. Nevertheless, despite these non-stationary environments, RUCC continues to adapt effectively and maintains robust control of the realized $\operatorname{CVaR}_\beta$.


\subsection{Portfolio Management under Distribution Shift Experiment}

\begin{figure*}
    \centering
    \includegraphics[width=\linewidth]{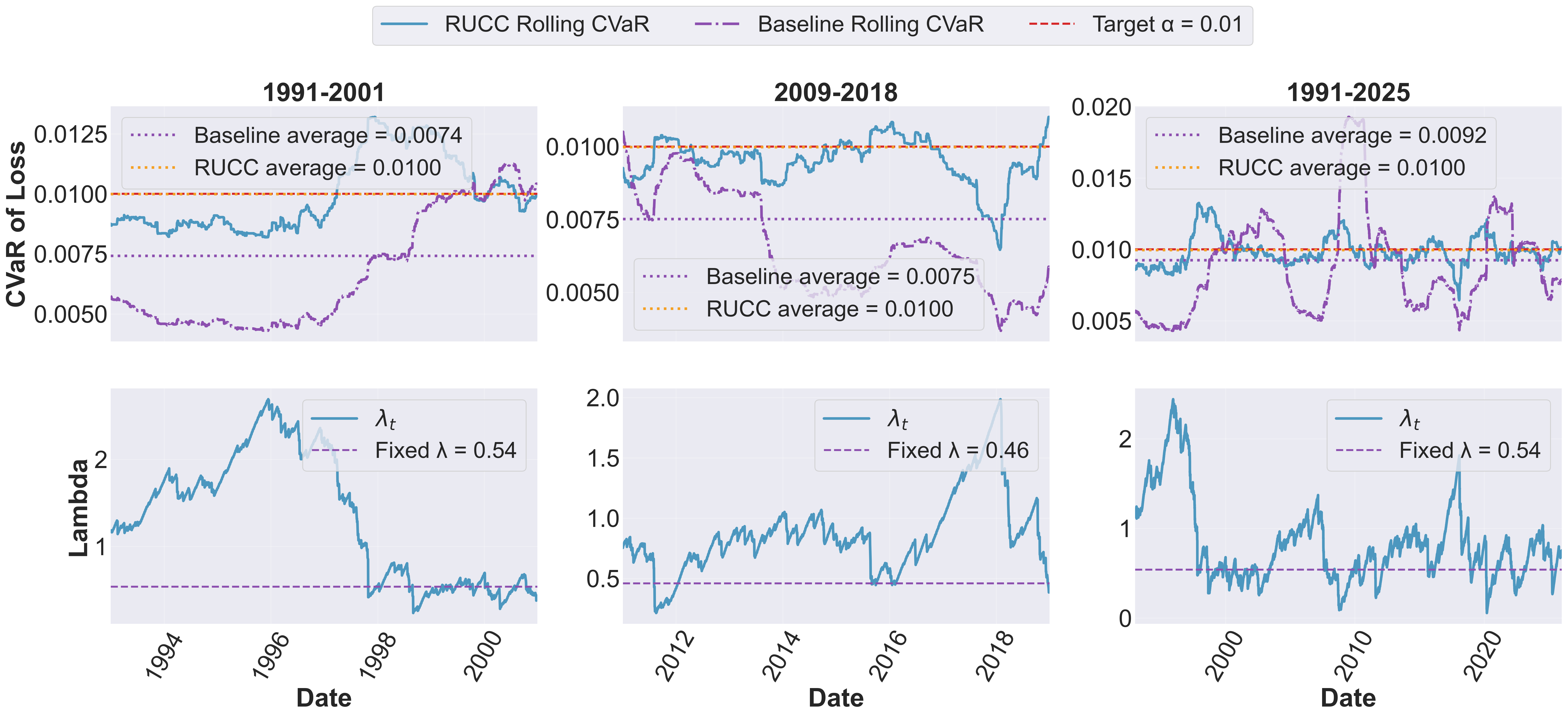}
    \caption{
    \textbf{Portfolio Management Experiment.} Realized empirical 180-day rolling $\operatorname{CVaR}_\beta$ (top row) and evolution of $\lambda_t$ (bottom row) for 
     $\beta = 0.85,\, \gamma = 0.05$ with target $\alpha = 0.01$ and burn-in period of 100 days during 1991--2001 in the first column,  2009--2018 in second column,  1991--2025 in the third column.
    }
    \label{fig:portfolio_main}
\end{figure*}



\paragraph{Environment.}
We consider a two-asset portfolio consisting of
(i) a risk-free asset (10-Year Treasury total-return index), and  
(ii) a risky asset (S\&P 500 index).

\paragraph{Action parameter.}
Let $P_t$ denote the asset price at time $t$. We define the $h=1$ log-return as follows,
\[
r_t^{(1)} = \log \frac{P_t}{P_{t-1}}.
\]
At each time $t$, the controller selects a portfolio weight $\lambda_t \in [0,1]$, representing the fraction invested in the risky asset. The portfolio return is then,
\[
r_t^p(\lambda_t) = \lambda_t r_t^{\text{risky}} + (1-\lambda_t) r_t^{\text{risk-free}},
\]
and the loss is defined as
\[
R_t(\lambda_t) = -r_t^p(\lambda_t),
\]
where $r_t^{\text{risky}}$ and $r_t^{\text{risk-free}}$ denote the return of the risky asset and the return of the risk-free asset, respectively. 

\paragraph{Distribution shift.}
We evaluate performance across market regimes from 1991 to 2025, covering both periods of economic stability and multiple market shocks including the dot-com crash, the 2008 financial crisis, COVID-19, and the subsequent high-inflation period. 

\paragraph{Datasets.} 
For our risky asset, we use historical S\&P 500 index data via the \texttt{yfinance} API \citep{yfinance2020}, which retrieves data from Yahoo Finance. For our risk-free asset, we use the 10-Year Treasury total-return index from the Federal Reserve Economic Data (FRED) database \citep{fred}.

\subsection{Results for Portfolio Management Experiment}
We evaluate the proposed RU Conformal $\operatorname{CVaR}$ controller on the portfolio management task across three time horizons: 1991--2001, 2009--2018, and 1991--2025. The 1991--2001 period corresponds to a relatively stable economic environment, while 2009--2018 captures a substantially more volatile regime that includes the global financial crisis and its aftermath. The full 1991--2025 horizon evaluates the long-term robustness of our proposed method across multiple market conditions.

Figure~\ref{fig:portfolio_main} reports the realized empirical 180-day rolling $\operatorname{CVaR}_\beta$ (top row) and the corresponding control parameter $\lambda_t$ (bottom row), for $\beta=0.85$, target risk level $\alpha=0.01$, and step size $\gamma=0.05$. The rolling-window CVaR is included as a diagnostic
for the local behavior of the controller.Additional results for
$\beta\in\{0.75,0.80,0.85,0.90\}$ are provided in
Appendix~\ref{sec:portfolio}.

\paragraph{Evolution of the rolling empirical $\operatorname{CVaR}_\beta$.}
The top row of Figure~\ref{fig:portfolio_main} displays the 180-day rolling realized empirical $\operatorname{CVaR}_\beta$ trajectories produced by the RUCC controller (blue line), together with the corresponding average realized $\operatorname{CVaR}_\beta$ (orange dotted line) and the target risk level (red dashed line) for $\beta = 0.85$.

We compare RUCC against a static baseline in which the control parameter $\lambda$ is fixed throughout the experiment. Specifically, we tune $\lambda$ over a grid of candidate values and select a fixed $\hat{\lambda}$ that controls the realized empirical $\operatorname{CVaR}_\beta$ over the first 1000 days of the sample period. For this baseline, we report both the 180-day rolling realized empirical $\operatorname{CVaR}_\beta$ trajectory (purple dashed line) and its average realized $\operatorname{CVaR}_\beta$ (purple dotted line).

The three evaluation windows: 1991--2001, 2009--2018, and 1991--2025, are shown in the left, middle, and right panels of the top row of Figure~\ref{fig:portfolio_main}, respectively. Across all three periods, the RUCC controller consistently maintains the realized empirical $\operatorname{CVaR}_\beta$ close to the target level, whereas the static baseline exhibits substantially different behavior across market regimes.

During both the relatively stable 1991--2001 period, and the more volatile 2009--2018 period, the baseline is overly conservative, producing realized $\operatorname{CVaR}_\beta$ values substantially below the target level. These results illustrate the difficulty of selecting a single fixed value of $\lambda$ that performs well across changing market conditions. By contrast, the RUCC controller adaptively adjusts $\lambda_t$ over time and is able to maintain stable tail-risk control across both stable and volatile environments. 

Moreover, over the full 1991--2025 horizon, the method remains robust through multiple periods of economic stress. Although the realized $\operatorname{CVaR}_\beta$ increases modestly during episodes such as the dot-com bubble, the 2008 financial crisis, and the post-COVID market turbulence beginning in 2020, the rolling average empirical $\operatorname{CVaR}_\beta$ remains close to the target risk level throughout the sample period.


\paragraph{Evolution of $\lambda_t$.}
The bottom row of Figure~\ref{fig:portfolio_main} shows the evolution of $\lambda_t$, which represents the fraction of wealth allocated to the risky asset. During the 1991--2001 period, $\lambda_t$ exhibits a gradual long-term decline, indicating that the controller progressively reduces exposure to the risky asset over time while still maintaining moderate market participation. 

In the 2009--2018 period, the controller initially maintains a relatively low value of $\lambda_t$ after the 2008 financial crisis and subsequently increases exposure as market conditions stabilize and the economy recovers. Over the full 1991--2025 horizon, the largest downward movements in $\lambda_t$ coincide with major episodes of market stress, including the 2008 financial crisis and the post-COVID turbulence beginning in 2020. This behavior illustrates that the controller dynamically responds to elevated tail-risk realizations by reducing risky exposure during periods of heightened uncertainty.

Note that the update for $\lambda_t$ is unconstrained, so $\lambda_t$ may temporarily leave the range $[0,1]$. When evaluating portfolio decisions, we clip the realized allocation to the corresponding boundary value, so $\lambda_t<0$ means zero risky-asset exposure and $\lambda_t>1$ means maximal exposure. Thus, the realized portfolio allocation remains feasible. Theorem~\ref{thm:CDT} shows that this unconstrained update cannot drift too far in aggregate, i.e., the cumulative excess over the target is only $O(\sqrt{T})$. 

Overall, these experiments demonstrate that our method is capable of maintaining tail-risk control over multi-decade horizons spanning multiple crisis regimes while adaptively adjusting portfolio exposure over time. The results provide strong empirical evidence that the algorithm is robust to severe non-stationarity in the long run, achieving tight empirical $\operatorname{CVaR}_\beta$ control in realistic financial environments.

\section{Discussion and Extensions}
A key lever in our approach is the RU variational representation of $\operatorname{CVaR}_\beta$, which allows tail risk control to be reduced to online optimization over an auxiliary parameter. This idea is not unique to $\operatorname{CVaR}$. Many classical and modern risk measures admit similar variational representations, suggesting that our framework extends beyond tail risks. We describe illustrative examples below.

\subsection{Optimized Certainty Equivalent (OCE) Risk Measures}
A large class of risk measures known as \emph{optimized certainty equivalents} (OCEs) \cite{ben-tal2007optimized} admit representations of the form
\[
R(X) = \inf_{c\in \R} \; c + \mathbb{E}\bigl[\phi(X-c)\bigr]
\]
for a suitable convex loss $\phi$. Recent work has shown how to control such risk measures using conformal methods in stationary, offline, and i.i.d. settings \cite{yeh2025conformalrisktrainingendtoend}. Our results suggest a clear extension to these guarantees in \emph{online, non-stationary, and adversarial environments}.

\subsection{Variational Risk Representations}
The variance admits a well-known representation
\[
\operatorname{Var}(X)
=
\mathbb{E}\bigl[(X - \mathbb{E}[X])^2\bigr]
=
\min_{a \in \mathbb{R}} \, \mathbb{E}\bigl[(X - a)^2\bigr],
\]
which is widely used in quality control and robust estimation. This shows that variance control can likewise be interpreted as controlling the expectation of a surrogate loss indexed by an auxiliary parameter, making it amenable to our optimization framework.

\section*{Impact Statement}

This work contributes to the development of machine learning frameworks with increased safety and reliability by providing a distribution-free method for controlling tail risk in non-stationary and adversarial environments. By enabling online control of the Conditional Value-at-Risk ($\operatorname{CVaR}$), the proposed framework helps reduce the likelihood of rare but catastrophic failures in safety-critical applications such as financial decision-making and large language model deployment. 

Beyond $\operatorname{CVaR}$, the framework provides a general template for controlling a broad class of nonlinear risk measures that admit a variational representation. We expect this contribution to help facilitate the design of more robust and trustworthy learning systems in environments subject to distribution shifts and strategic manipulation. We do not foresee negative societal impacts arising from this work; instead, its primary effect is to improve the safety and reliability of deployed machine learning systems.

\nocite{langley00}

\bibliography{example_paper}
\bibliographystyle{icml2026}

\newpage
\appendix
\onecolumn

\section{Appendix}

\subsection{Notation} \label{sec:notation}
For convenience, we summarize all notation used throughout the paper in Table~\ref{tab:notation}. All quantities with a hat denote empirical (finite-sample) versions. Unless stated otherwise, all sequences $\{R_t\}_{t=1}^T$ may be non-stationary, or adversarial.

\begin{table*}[!t]
\centering
\caption{Summary of notation used throughout the paper.}
\begin{tabular}{ll}
\toprule
\textbf{Symbol} & \textbf{Meaning} \\
\midrule
$t = 1,\dots,T$ & Time index and total horizon \\

$R_t(\cdot)$ & Loss function revealed at round $t$ \\
$R_t = R_t(\lambda_t)$ & Realized loss at round $t$ \\

$R_{1:T}$ & Sequence of realized losses $(R_1,\dots,R_T)$ \\

$\beta \in (0,1)$ & CVaR level confidence level \\
$\alpha$ & Target CVaR risk level \\

$\lambda_t$ & Outer-update decision parameter (controls system action) \\
$\gamma$ & Step size for outer-update (CDT-style) \\

$c_t$ & Inner-update threshold parameter at round $t$ \\
$c^*$ & Optimal threshold in hindsight (RU minimizer) \\

$f_t(c)$ & RU surrogate loss: $c + \frac{1}{1-\beta}(R_t - c)_+$ \\
$F_T(c)$ & Empirical RU objective: $\frac{1}{T}\sum_{t=1}^T f_t(c)$ \\

$\widehat{\operatorname{CVaR}}_\beta(R_{1:T})$ & Empirical CVaR of realized sequence \\

$\hat{q}_{\beta}(R)$ & Empirical $\beta$-quantile of $\{R_1,\dots,R_T\}$ \\

$\widehat{\mathbb{E}}_T[\cdot]$ & Empirical average over $t=1,\dots,T$ \\

$(x)_+$ & Positive part: $\max\{x,0\}$ \\

$\Reg_T(c)$ & Cumulative RU regret of inner update \\

$g_t$ & Subgradient of $f_t$ at $c_{t}$ \\

$\mathbf{1}\{\cdot\}$ & Indicator function \\

\bottomrule
\end{tabular}
\label{tab:notation}
\end{table*}

\section{Proofs} \label {appendix:proof}

As discussed at the beginning of Section \ref{subsec:inner}, we can transform $\lambda$ and $R(\cdot)$ by \eqref{eq:rescaling1} and \eqref{eq:rescaling2}, and assume 
\[\lambda_{\min} = 0, \quad\lambda_{\max} = 1, \quad R_{\min} = 0, \quad R_{\max} = 1, \quad \alpha \in (0, 1).\]

\subsection{Proof of Proposition~\ref{prop:CVaR_goal}}
Define the RU objective
\[
f_t(c) \;=\; c + \frac{1}{1-\beta}(R_t - c)_+,
\qquad
F_T(c) \;=\; \frac{1}{T}\sum_{t=1}^T f_t(c).
\]
By the RU variational representation (applied to the realized sequence $R_{1:T}$),
\begin{equation*}
\widehat{\operatorname{CVaR}}_\beta(R_{1:T})
\;=\;
\min_{c\in \mathbb{R}} F_T(c).
\end{equation*}
Since $R_t\in [0, 1]$, $f_t$ is decreasing on $(-\infty, 0]$ and increasing on $[1, \infty)$. Thus, 
\begin{equation}
\label{eq:RU-rep-proof}
\widehat{\operatorname{CVaR}}_\beta(R_{1:T})
\;=\;
\min_{c\in [0,1]} F_T(c).
\end{equation}
By the regret definition,
\begin{align}
\label{eq:regret-identity-proof}
\frac{1}{T}\sum_{t=1}^T f_t(c_t)
& =
\min_{c\in [0,1]} F_T(c) + \frac{\max_{c\in [0,1]}\Reg_T(c)}{T}\\
& =
\widehat{\operatorname{CVaR}}_\beta(R_{1:T}) + \frac{\max_{c\in [0,1]}\Reg_T(c)}{T}.
\end{align}
Under \eqref{eq:outer_optim_goal}, 
\begin{equation}
\label{eq:outer-control-proof}
\frac{1}{T}\sum_{t=1}^T f_t(c_t) \;\le\; \alpha + o(1).
\end{equation}

Combining \eqref{eq:regret-identity-proof} and \eqref{eq:outer-control-proof} yields
\[
\widehat{\operatorname{CVaR}}_\beta(R_{1:T})
\le
\alpha+o(1)
-
\frac{\max_{c\in [0,1]}\operatorname{Reg}_T(c)}{T}.
\]
By assumption (3), uniformly over $c\in [0,1]$,
\[
\frac{|\max_{c\in [0,1]}\operatorname{Reg}_T(c)|}{T}=o(1).
\]
Hence
\[
\widehat{\operatorname{CVaR}}_\beta(R_{1:T})
\le
\alpha+o(1),
\]
which proves (1).

\subsection{Proof of Theorem~\ref{thm:cvar_control}}
By \eqref{eq:regret-identity-proof}, 
\[\widehat{\operatorname{CVaR}}_\beta(R_{1:T})\le \alpha + \left(\frac{1}{T}\sum_{t=1}^T f_t(c_t) - \alpha\right) + \frac{|\max_{c\in [0,1]}\Reg_T(c)|}{T}.\]
The proof is then completed by applying Theorems \ref{thm:regret_bound} and \ref{thm:CDT}. 

\subsection{Proof of Theorem \ref{thm:regret_bound}}\label{subsec:proof_thm_regret_bound}
We prove a slightly more general version that only requires $q_0 \ge \max\{1, \beta/(1-\beta)\}^2$. Let
\[
    F_t(c)=\sum_{s=1}^t f_s(c),
    \qquad
    \psi_t(c)=\frac{1}{2\eta_t}\left(c-\frac12\right)^2,
    \qquad
    M_t=\min_{c\in[0,1]}\{F_t(c)+\psi_t(c)\}.
\]
Then $c_{t+1}$ minimizes $F_t+\psi_t$, and $M_0=0$.

We first prove the lower bound. Since $c_t$ minimizes $F_{t-1}+\psi_{t-1}$,
\[
\begin{aligned}
    M_t
    &\le F_t(c_t)+\psi_t(c_t) \\
    &= M_{t-1}+f_t(c_t)+\psi_t(c_t)-\psi_{t-1}(c_t).
\end{aligned}
\]
The sequence $(\eta_t)$ is nonincreasing and $|c_t-1/2|\le 1/2$, so
\[
    \psi_t(c_t)-\psi_{t-1}(c_t)
    \le
    \frac18\left(\frac1{\eta_t}-\frac1{\eta_{t-1}}\right).
\]
Thus
\[
    f_t(c_t)
    \ge
    M_t-M_{t-1}
    -\frac18\left(\frac1{\eta_t}-\frac1{\eta_{t-1}}\right).
\]
Summing over $t$ and using $M_T\ge \min_{c\in[0,1]}F_T(c)$ gives
\[
    \sum_{t=1}^T f_t(c_t)
    \ge
    \min_{c\in[0,1]}F_T(c)
    -\frac18\left(\frac1{\eta_T}-\frac1{\eta_0}\right)
    =
    \min_{c\in[0,1]}\sum_{t=1}^T f_t(c)-\frac{\sqrt{q_T}-\sqrt{q_0}}{4}.
\]

We now prove the upper bound. The function $F_{t-1}+\psi_{t-1}$ is $1/\eta_{t-1}$-strongly convex and is minimized at $c_t$. Hence, for every $c\in[0,1]$,
\[
    F_{t-1}(c)+\psi_{t-1}(c)
    \ge
    M_{t-1}+\frac{1}{2\eta_{t-1}}(c-c_t)^2.
\]
By convexity of $f_t$,
\[
    f_t(c)\ge f_t(c_t)+g_t(c-c_t).
\]
Since $\psi_t\ge\psi_{t-1}$, we obtain
\[
\begin{aligned}
    M_t
    &=\min_{c\in[0,1]}\{F_{t-1}(c)+\psi_{t-1}(c)+f_t(c)+\psi_t(c)-\psi_{t-1}(c)\} \\
    &\ge
    M_{t-1}+f_t(c_t)
    +\min_{c\in[0,1]}
    \left\{
        \frac{1}{2\eta_{t-1}}(c-c_t)^2+g_t(c-c_t)
    \right\} \\
    &\ge
    M_{t-1}+f_t(c_t)-\frac{\eta_{t-1}}{2}g_t^2.
\end{aligned}
\]
Therefore
\[
    f_t(c_t)\le M_t-M_{t-1}+\frac{\eta_{t-1}}{2}g_t^2.
\]
After summing,
\[
    \sum_{t=1}^T f_t(c_t)\le M_T+\frac12\sum_{t=1}^T \eta_{t-1}g_t^2.
\]
Let $u\in\arg\min_{c\in[0,1]}F_T(c)$. Since $|u-1/2|\le1/2$,
\[
    M_T\le F_T(u)+\psi_T(u)
    \le \min_{c\in[0,1]}F_T(c)+\frac{1}{8\eta_T}
    =\min_{c\in[0,1]}\sum_{t=1}^T f_t(c)+\frac{\sqrt{q_T}}{4}.
\]
It remains to bound the AdaGrad sum more sharply. For $x=q_{t-1}$ and $a=g_t^2$,
\[
    \frac{a}{\sqrt{x}}
    =2\bigl(\sqrt{x+a}-\sqrt{x}\bigr)
    +\sqrt{x}\left(\sqrt{1+\frac{a}{x}}-1\right)^2.
\]
Let $s=\sqrt{1+a/x}$. Since $a\le q_0\le x$, 
\[
    xs(s-1)=x+a-\sqrt{x(x+a)}\le a\le q_0.
  \]
Rearranging it implies
\[
    \sqrt{x}\left(\sqrt{1+\frac{a}{x}}-1\right)^2
    \le
    q_0\left(\frac1{\sqrt{x}}-\frac1{\sqrt{x+a}}\right).
\]  
Consequently,
\[
    \frac{g_t^2}{\sqrt{q_{t-1}}}
    \le
    2\bigl(\sqrt{q_t}-\sqrt{q_{t-1}}\bigr)
    +q_0\left(\frac1{\sqrt{q_{t-1}}}-\frac1{\sqrt{q_t}}\right).
\]
Summing over $t$ gives
\[
\begin{aligned}
    \sum_{t=1}^T\frac{g_t^2}{\sqrt{q_{t-1}}}
    &\le
    2\bigl(\sqrt{q_T}-\sqrt{q_0}\bigr)
    +q_0\left(\frac1{\sqrt{q_0}}-\frac1{\sqrt{q_T}}\right) \\
    &=2\sqrt{q_T}-\sqrt{q_0}-\frac{q_0}{\sqrt{q_T}}.
\end{aligned}
\]
Since $\eta_{t-1}=1/(2\sqrt{q_{t-1}})$,
\[
\begin{aligned}
    \frac12\sum_{t=1}^T \eta_{t-1}g_t^2
    &=\frac14\sum_{t=1}^T\frac{g_t^2}{\sqrt{q_{t-1}}} \\
    &\le
    \frac12\sqrt{q_T}-\frac14\sqrt{q_0}-\frac{q_0}{4\sqrt{q_T}}.
\end{aligned}
\]
Combining the preceding displays yields
\begin{equation}\label{eq:regret_bound_qT}
    \sum_{t=1}^T f_t(c_t)-\min_{c\in[0,1]}\sum_{t=1}^T f_t(c)
    \le
    \frac34\sqrt{q_T}
    -\frac14\sqrt{q_0}
    -\frac{q_0}{4\sqrt{q_T}}
    \le
    \frac34\sqrt{q_T},
\end{equation}
which is the claimed upper bound.

Let
\[
    N_+(T)=\#\{t:g_t=1\},
    \qquad
    N_-(T)=\#\left\{t:g_t=-\frac{\beta}{1-\beta}\right\},
\]
then
\begin{align*}
  q_T& =q_0+N_+(T)+\left(\frac{\beta}{1-\beta}\right)^2N_-(T) \\
     & \le \max\left\{1, \left(\frac{\beta}{1-\beta}\right)^2\right\} (1 + N_+(T) + N_-(T))\\
  & = \max\left\{1, \left(\frac{\beta}{1-\beta}\right)^2\right\}(T+1).
\end{align*}
\qed

\subsection{Proof of Theorem \ref{thm:regret_lower_bound}}\label{subsec:proof_regret_lower_bound}

\begin{proof}
Let $R_t$ be independent with
\[
    \mathbb P(R_t=0)=\beta,
    \qquad
    \mathbb P(R_t=1)=1-\beta.
\]
For $c\in[0,1]$ and $b = \frac{\beta}{1-\beta}$,
\[
    f_t(c)=c \quad\text{if } R_t=0,
    \qquad
    f_t(c)=\frac{1}{1-\beta}-bc \quad\text{if } R_t=1.
\]
Thus the additive constants cancel in comparison with a fixed threshold. Define
\[
g_t :=
\begin{cases}
1, & R_t = 0,\\
-b, & R_t = 1.
\end{cases}
\]
Then
\[
f_t(c)=\text{constant}+g_t c,
\]
and
\[
\max_{c\in[0,1]}\Reg_T(c)
=
\sum_{t=1}^T g_t c_t
+
\left(
-\sum_{t=1}^T g_t
\right)_+.
\]
where the independent slopes satisfy
\[
    \mathbb P(g_t=1)=\beta,
    \qquad
    \mathbb P(g_t=-b)=1-\beta.
\]
They have mean zero. Since $c_t$ is chosen before $R_t$ is drawn,
\[
    \mathbb E[g_t c_t]=0.
\]
Therefore
\[
    \mathbb E\max_{c\in[0, 1]}\Reg_T(c)
    =
    \mathbb E\left[\left(-\sum_{t=1}^T g_t\right)_+\right].
\]
If $N_T=\#\{t:g_t=1\}$, then $N_T\sim\operatorname{Binomial}(T,\beta)$ and
\[
    \sum_{t=1}^T g_t
    =N_T-b(T-N_T)
    =\frac{N_T-\beta T}{1-\beta}.
\]
Hence
\[\mathbb E\max_{c\in[0, 1]}\left[\Reg_T(c)\right]=\mathbb E\left[\left(-\frac{N_T-\beta T}{1-\beta}\right)_{+}\right].\] 
Thus, there exists a realization of $(R_t)_{t=1}^T$ such that
\[\max_{c\in[0, 1]}\Reg_T(c) \ge \mathbb E\left[\left(-\frac{N_T-\beta T}{1-\beta}\right)_{+}\right].\]
Finally,
\[
    \frac{N_T-\beta T}{\sqrt{T\beta(1-\beta)}}\Rightarrow Z,
    \qquad Z\sim N(0,1),
\]
and the normalized variables are uniformly integrable because their second moments are bounded. Thus
\[
    \frac{1}{\sqrt T} \mathbb E\left[\left(-\frac{N_T-\beta T}{1-\beta}\right)_{+}\right]
    \to
    \frac{\sqrt{\beta(1-\beta)}}{1-\beta}\,\mathbb E[(-Z)_+]
    =
    \frac{1}{\sqrt{2\pi}}\sqrt{\frac{\beta}{1-\beta}}.
\]
\end{proof}

\subsection{Matching the lower bound with knowledge of $T$}\label{subsec:better_upper_bound}

\begin{theorem}\label{thm:regret_bound_optimal}
For every $T\ge1$, there is a deterministic online learning algorithm with the knowledge of $T$, such that
\[
    \max_{c\in[0, 1]}\Reg_T(c)\le \frac{1}{\sqrt{2\pi}}
    \sqrt{\frac{\beta}{1-\beta}}\sqrt T\cdot (1+o(1)),
\]
for every sequence $R_1,\ldots,R_T\in[0,1]$.
\end{theorem}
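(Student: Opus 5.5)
The plan is to reduce to a one-dimensional online linear game with two possible gradients and then play a potential-based, random-walk strategy whose value is exactly the expectation that appears in the proof of Theorem~\ref{thm:regret_lower_bound}. Write $b=\beta/(1-\beta)$ and let $g_t\in\{1,-b\}$ be the subgradient of $f_t$ at $c_t$ used in Algorithm~\ref{alg:ruc}. By convexity, $f_t(c_t)-f_t(c)\le g_t(c_t-c)$ for every $c\in[0,1]$. Setting $G_t=\sum_{s\le t}g_s$ gives
\[
\max_{c\in[0,1]}\Reg_T(c)\le \sum_{t=1}^T g_t c_t-\min_{c\in[0,1]} cG_T=\sum_{t=1}^T g_tc_t+(-G_T)_+ .
\]
Since $g_t$ always lies in $\{1,-b\}$, whatever the value of $R_t(\lambda_t)$, it suffices to bound the right-hand side against an adversary that picks $g_t\in\{1,-b\}$ adaptively.

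\textbf{Potential.} Let $\xi_1,\xi_2,\ldots$ be i.i.d.\ with $\mathbb P(\xi=1)=\beta$ and $\mathbb P(\xi=-b)=1-\beta$, so that $\mathbb E\xi=0$. Define
\[
\Phi_t(x)=\mathbb E\Bigl[\bigl(-x-\textstyle\sum_{s=t+1}^T\xi_s\bigr)_+\Bigr],\qquad \Phi_T(x)=(-x)_+ .
\]
Then $\Phi_{t-1}(x)=\beta\Phi_t(x+1)+(1-\beta)\Phi_t(x-b)$. Each $\Phi_t$ is convex, nonincreasing and $1$-Lipschitz in $x$. With knowledge of $T$, the algorithm plays
\[
c_t=\frac{\Phi_t(G_{t-1}-b)-\Phi_t(G_{t-1}+1)}{1+b}.
\]
Monotonicity and the Lipschitz property give $0\le \Phi_t(x-b)-\Phi_t(x+1)\le 1+b$, so $c_t\in[0,1]$ is feasible.

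\textbf{Telescoping.} This choice equalizes the two outcomes: with $x=G_{t-1}$ we have $c_t+\Phi_t(x+1)=-bc_t+\Phi_t(x-b)$. This common value equals its $(\beta,1-\beta)$-average, and that average is $\Phi_{t-1}(x)+c_t(\beta-(1-\beta)b)=\Phi_{t-1}(x)$ because $\beta=(1-\beta)b$. Hence, for either adversarial choice of $g_t$,
\[
g_tc_t+\Phi_t(G_t)=\Phi_{t-1}(G_{t-1}).
\]
Summing over $t$ yields
\[
\sum_t g_tc_t+(-G_T)_+=\Phi_0(0)=\mathbb E\Bigl[\Bigl(-\textstyle\sum_{s=1}^T\xi_s\Bigr)_+\Bigr],
\]
and this holds deterministically for every sequence $R_1,\ldots,R_T$.

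\textbf{Asymptotics.} Writing $\sum_s\xi_s=(N_T-\beta T)/(1-\beta)$ with $N_T\sim\operatorname{Binomial}(T,\beta)$, the CLT plus uniform integrability argument from the proof of Theorem~\ref{thm:regret_lower_bound} shows that $\Phi_0(0)=\frac{1}{\sqrt{2\pi}}\sqrt{\beta/(1-\beta)}\sqrt T\,(1+o(1))$, which is the claimed bound.

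\textbf{Main obstacle.} I expect the main care to be needed in checking that the linearization loses nothing essential. In particular, the subgradient convention must give $g_t\in\{1,-b\}$ even when $R_t=c_t$. This is fine, since the paper's indicator convention returns $g_t=1$ in that case and $1\in\partial f_t(c_t)$. The other point to verify is that the equalizing choice of $c_t$ stays in $[0,1]$, which rests on the monotonicity and $1$-Lipschitz property of $\Phi_t$ inherited from $(\cdot)_+$. The remaining steps are routine.
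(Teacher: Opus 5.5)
Your proposal is correct and follows essentially the same argument as the paper: the same random-walk potential (your $\Phi_t$ is the paper's $\Phi_{T-t}$), the same equalizing choice of $c_t$ giving $g_tc_t+\Phi_t(G_t)=\Phi_{t-1}(G_{t-1})$, the same convexity linearization to $\sum_t g_tc_t+(-G_T)_+$, and the same CLT and uniform-integrability asymptotics taken from the lower-bound proof. Your explicit check that the common value of the two outcomes equals its $(\beta,1-\beta)$-average fills in a step the paper only asserts.
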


\begin{proof}
Let $g_1,g_2,\ldots$ be independent with
\[
    \mathbb P(g_j=1)=\beta,
    \qquad
    \mathbb P(g_j=-b)=1-\beta,
\]
and define, for $n\ge0$,
\[
    \Phi_n(s)=
    \mathbb E\left[\left(-s-\sum_{j=1}^n g_j\right)_+\right].
\]
Set $S_0=0$. At round $t$, play
\[
    c_t=
    \frac{\Phi_{T-t}(S_{t-1}-b)-\Phi_{T-t}(S_{t-1}+1)}{1+b}.
\]
After observing $R_t$, set
\[
    g_t = 1 - \frac{1}{1-\beta}\mathbf{1}\{R_t > c_t\}
    \in \left\{1,-b\right\},
\]
The choice $g_t=1$ at $R_t=c_t$ is a valid subgradient because $\partial f_t(c_t)=[-b,1]$ at the kink.

The function $\Phi_n$ is nonincreasing and 1-Lipschitz. Since $S_{t-1}-b\le S_{t-1}+1$, the numerator defining $c_t$ is nonnegative and at most $1+b$. Hence $c_t\in[0,1]$. Also, since $\beta=b/(1+b)$,
\[
    \Phi_{n+1}(s)
    =\frac{b\Phi_n(s+1)+\Phi_n(s-b)}{1+b}.
\]
The definition of $c_t$ gives, for either $g\in\{1,-b\}$,
\[
    gc_t+\Phi_{T-t}(S_{t-1}+g)
    =
    \Phi_{T-t+1}(S_{t-1}).
\]
With $g=g_t$, this telescopes to
\[
    \sum_{t=1}^T g_t c_t+(-S_T)_+=\Phi_T(0).
\]
For every fixed $c\in[0,1]$, convexity gives
\[
    f_t(c_t)-f_t(c)\le g_t(c_t-c).
\]
Taking the maximum over $c\in[0,1]$,
\[
\begin{aligned}
    \max_{c\in[0, 1]}\Reg_T(c)
    &\le
    \sum_{t=1}^T g_t c_t
    -\min_{c\in[0,1]} c\sum_{t=1}^T g_t \\
    &=
    \sum_{t=1}^T g_t c_t+(-S_T)_+ \\
    &=\Phi_T(0) = \mathbb E\left[\left(-\sum_{t=1}^T g_t\right)_+\right].
\end{aligned}
\]
The upper bound follows from the same calculation in the proof of Theorem \ref{thm:regret_lower_bound}.
\end{proof}

\subsection{Proof of Theorem \ref{thm:CDT}}\label{subsec:proof_thm_CDT}
Recall that $R_t$ is extended outside of $[0, 1]$ with
\[
R_t(\lambda)=
\begin{cases}
0, & \lambda<0,\\
1, & \lambda>1,
\end{cases}
\]
and the definition of $f_t$ in \eqref{eq:ft}:
\[f_t(c) = c + \frac{1}{1-\beta}\bigl(R_t(\lambda_t) - c\bigr)_+.\]

We start by proving a lemma. 






\begin{lemma}\label{lem:zero_padding}
Let \(x_1,\ldots,x_s\in[0,1]\) and $x_{\tau} = 0$ for $\tau \ge s+1$. Then, for every \(t\ge s\),
\[
  \min_{c\in[0,1]}
  \sum_{i=1}^t
  \left[c+\frac{(x_i-c)_+}{1-\beta}\right]
  -\alpha t
  \le
  \frac{1}{1-\beta}
  \left(
  \min_{c\in[0,1]}
  \sum_{i=1}^s
  \left[c+\frac{(x_i-c)_+}{1-\beta}\right]
  -\alpha s
  \right)_+.
\]
\end{lemma}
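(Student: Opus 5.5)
The plan is to separate the padded sum into the original $s$ terms plus a linear tail, and then to pick one explicit $c$ that makes the left-hand side small. Write
\[
G_s(c)=\sum_{i=1}^s\left[c+\frac{(x_i-c)_+}{1-\beta}\right]-\alpha s,\qquad m=\min_{c\in[0,1]}G_s(c),
\]
and let $c^*\in[0,1]$ be a minimizer. The padded entries $x_i=0$ with $i>s$ satisfy $(0-c)_+=0$ for $c\ge0$. Hence for every $c\in[0,1]$,
\[
\sum_{i=1}^t\left[c+\frac{(x_i-c)_+}{1-\beta}\right]-\alpha t=G_s(c)+(t-s)(c-\alpha).
\]
So it suffices to exhibit one $c\in[0,1]$ at which this quantity is at most $m_+/(1-\beta)$.

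\emph{Case $c^*\le\alpha$.} Take $c=c^*$. The tail term $(t-s)(c^*-\alpha)$ is nonpositive, so the left-hand side is at most $m\le m_+\le m_+/(1-\beta)$.

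\emph{Case $c^*>\alpha$.} Take $c=\alpha\in(0,1)$. The tail term vanishes, and the left-hand side equals
\[
G_s(\alpha)=\frac{1}{1-\beta}\sum_{i=1}^s(x_i-\alpha)_+ .
\]
Two facts compare this with $m$. First, since $c^*>\alpha$, each term obeys $(x_i-\alpha)_+\le(x_i-c^*)_++(c^*-\alpha)$. Summing and using $\sum_{i=1}^s(x_i-c^*)_+=(1-\beta)\bigl(m-s(c^*-\alpha)\bigr)$ gives
\[
G_s(\alpha)\le m+\frac{\beta}{1-\beta}\,s(c^*-\alpha).
\]
Second, because $\sum_{i=1}^s(x_i-c^*)_+\ge0$, we have $m\ge s(c^*-\alpha)>0$. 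Substituting this into the previous bound yields
\[
G_s(\alpha)\le m\Bigl(1+\frac{\beta}{1-\beta}\Bigr)=\frac{m}{1-\beta}=\frac{m_+}{1-\beta}.
\]

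The main obstacle is the second case. There the naive choice $c=c^*$ fails, because the tail penalty $(t-s)(c^*-\alpha)$ grows without bound in $t$. The fix is to switch to $c=\alpha$, which removes the tail entirely. The cost of this switch is controlled only because $m$ itself dominates $s(c^*-\alpha)$, and this is exactly where the factor $1/(1-\beta)$ in the statement comes from.
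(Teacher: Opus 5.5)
Your proof is correct, and it follows a genuinely different route from the paper's. The paper works with the normalized value $V_t=\frac{1}{t}\min_{c}\sum_{i\le t}[\,c+(x_i-c)_+/(1-\beta)\,]$. It first argues that $V_t$ is nonincreasing for $t\ge s$, since each padded term equals $c$, which lies below every original term. It then splits on the horizon rather than on the minimizer. If $(1-\beta)t\le s$, it combines $V_t\le V_s$ with $t\le s/(1-\beta)$. If $(1-\beta)t>s$, it uses the first-order condition to show that the padded minimizer is exactly $c=0$, so that $tV_t=\frac{1}{1-\beta}\sum_{i\le s}x_i$. It then compares this with $sV_s$ via $x_i\le c+(x_i-c)_+/(1-\beta)$ and $\alpha t>\alpha s/(1-\beta)$.

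You instead write the padded objective as $G_s(c)+(t-s)(c-\alpha)$ and bound its minimum at a single explicit, $t$-independent test point ($c^*$ or $\alpha$), splitting on whether the unpadded minimizer lies below or above $\alpha$. This avoids both the monotonicity claim and the subgradient analysis. It also yields a slightly sharper statement: the left side is at most $m$ when $c^*\le\alpha$, and at most $\frac{1}{1-\beta}\sum_{i\le s}(x_i-\alpha)_+$ otherwise. In particular the left side is at most $m$ whenever $m\le 0$, since $c^*>\alpha$ forces $m>0$. Your choice $c=\alpha$ requires $\alpha\in[0,1]$, which holds under the paper's normalization $\alpha\in(0,1)$.

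What the paper's route offers in return is structural information about the padded problem. Once the original entries make up less than a $(1-\beta)$ fraction of the sample, the RU threshold collapses to $0$ and $tV_t$ stops changing. None of that is needed for the lemma itself.
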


\begin{proof}
Let 
\[V_t = \frac{1}{t}\min_{c\in[0,1]}
  \sum_{i=1}^t
  \left[c+\frac{(x_i-c)_+}{1-\beta}\right].\]
  For any $i\le s$ and $j > s$ 
  \[\left[c+\frac{(x_i-c)_+}{1-\beta}\right]\ge c = \left[c+\frac{(x_j-c)_+}{1-\beta}\right].\]
  Then we are left to prove 
  \begin{equation}\label{eq:VtVs}
  t(V_t - \alpha)\le \frac{s}{1-\beta}(V_s - \alpha)_+.
  \end{equation}
  Thus, $V_t$ is nonincreasing in $t$ for $t\ge s$. In particular, 
\[V_t \le V_s, \quad \forall t\ge s.\]
First suppose \((1-\beta)t\le s\). If $V_s \le \alpha$, 
then $V_t\le \alpha$ and \eqref{eq:VtVs} follows because the LHS is $t(V_t - \alpha)\le 0$ while the RHS is $s/(1-\beta)(V_s - \alpha)_+=0$. 
Otherwise, because \((1-\beta)t\le s\),
\[
 t(V_t - \alpha)\le \frac{s}{1-\beta}(V_s - \alpha) = \frac{s}{1-\beta}(V_s - \alpha)_+.
\]
This proves the claim in the first case.

Now suppose \((1-\beta)t>s\). Let 
\[c_t^* = \operatorname{argmin}_{c\in [0, 1]}\sum_{i=1}^t
  \left[c+\frac{(x_i-c)_+}{1-\beta}\right].\]
The first-order condition implies 
\[0 \in t + \frac{1}{1-\beta}\sum_{i=1}^t \partial_c (x_i - c_t^*)_+.\]
If $c_t^* > 0$, then $\partial_c (x_i - c_t^*)_+ = 0$ for all $t > s$. Then 
\[t + \frac{1}{1-\beta}\sum_{i=1}^t \partial_c (x_i - c_t^*)_+ = t + \frac{1}{1-\beta}\sum_{i=1}^s \partial_c (x_i - c_t^*)_+\ge t - \frac{s}{1 - \beta} > 0.\]
By contradiction, we conclude that $c_t^* = 0$. Therefore, 
  \[tV_t = \frac{1}{1-\beta} \sum_{i=1}^t x_i =  \frac{1}{1-\beta} \sum_{i=1}^s x_i.\]
  Thus, 
  \[t(V_t - \alpha) = \frac{1}{1-\beta} \sum_{i=1}^s x_i - \alpha t < \frac{1}{1-\beta} \left\{\sum_{i=1}^s x_i - \alpha s\right\}\le \frac{1}{1-\beta} \left\{\sum_{i=1}^s \left[c + \frac{(x_i - c)_+}{1-\beta}\right] - \alpha s\right\} = \frac{s}{1-\beta}(V_s - \alpha).\]
Therefore the same bound follows in the second case. This proves the lemma.
\end{proof}

\textbf{Proof of Theorem \ref{thm:CDT}.} We now prove the theorem. If \(\lambda_{T+1}\ge0\), then the outer update gives
\[
  \sum_{t=1}^T f_t(c_t)-\alpha T
  =\frac{\lambda_1-\lambda_{T+1}}{\gamma}
  \le \frac{\lambda_1}{\gamma}
\]
which is already stronger than the claimed bound.

It remains to consider the case \(\lambda_{T+1}<0\). Let \(s\le T\) be last time before $T+1$ at which $\lambda_s \ge 0$, i.e., 
\[
  \lambda_s\ge0,
  \qquad
  \lambda_{s+1},\lambda_{s+2},\ldots,\lambda_{T+1}<0.
\]
Such an $s$ must exist as $\lambda_1 \ge 0$. 
At time $s$, \(0\le f_s(c_s)\le (1-\beta)^{-1}\). Therefore
\[
  \lambda_{s+1}
  =\lambda_s-\gamma(f_s(c_s)-\alpha)
  \ge -\gamma\bigl((1-\beta)^{-1}-\alpha\bigr).
\]
Telescoping the outer update to time \(s\) gives
\[
  \sum_{\tau=1}^s f_\tau(c_\tau)-\alpha s
  =\frac{\lambda_1-\lambda_{s+1}}{\gamma}
  \le
  \frac{\lambda_1}{\gamma}+(1-\beta)^{-1}-\alpha.
\]
Let 
\[C_\beta = \frac{3}{4}\max\left\{1, \frac{\beta}{1-\beta}\right\}.\]
Using the lower bound for $\max_{c\in [0, 1]}\Reg_T(c)$ in Theorem \ref{thm:regret_bound} to the first $s$ periods, 
\begin{equation}
  \min_{c\in[0,1]}\sum_{\tau=1}^s f_\tau(c)-\alpha s
  \le
  \frac{\lambda_1}{\gamma}+(1-\beta)^{-1}-\alpha+\frac{C_\beta}{3}\sqrt{T+1}.
  \label{eq:crossing-prefix}
\end{equation}

For every \(t=s+1,\ldots,T\), we have \(\lambda_t<0\), hence \(R_t(\lambda_t)=0\). Thus the realized losses after time \(s\) are zeros. By Lemma \ref{lem:zero_padding},
\[
  \min_{c\in[0,1]}\sum_{\tau=1}^T f_\tau(c)-\alpha T
  \le
  \frac{\lambda_1/\gamma+(1-\beta)^{-1}-\alpha+C_\beta\sqrt{T+1}}{1-\beta}.
\]
Finally, using the upper bound on $\max_{c\in[0,1]}\Reg_T(c)$ in Theorem \ref{thm:regret_bound},
\[
  \sum_{t=1}^T f_t(c_t)-\alpha T
  \le
  \min_{c\in[0,1]}\sum_{t=1}^T f_t(c)-\alpha T+C_\beta\sqrt{T+1}
\]
\[
  \le
  \frac{\lambda_1/\gamma+(1-\beta)^{-1}-\alpha}{1-\beta}
  +C_\beta\left(1+\frac{1}{3(1-\beta)}\right)\sqrt{T+1}.
\]
Dividing by \(T\) proves the theorem.

\clearpage
\section{Additional Results for Experiment 1: Large Language Model Toxicity Control} \label{sec:llm}

\begin{figure}[!t]
    \centering
    
    \begin{subfigure}{\linewidth}
        \centering
        \includegraphics[width=\linewidth]{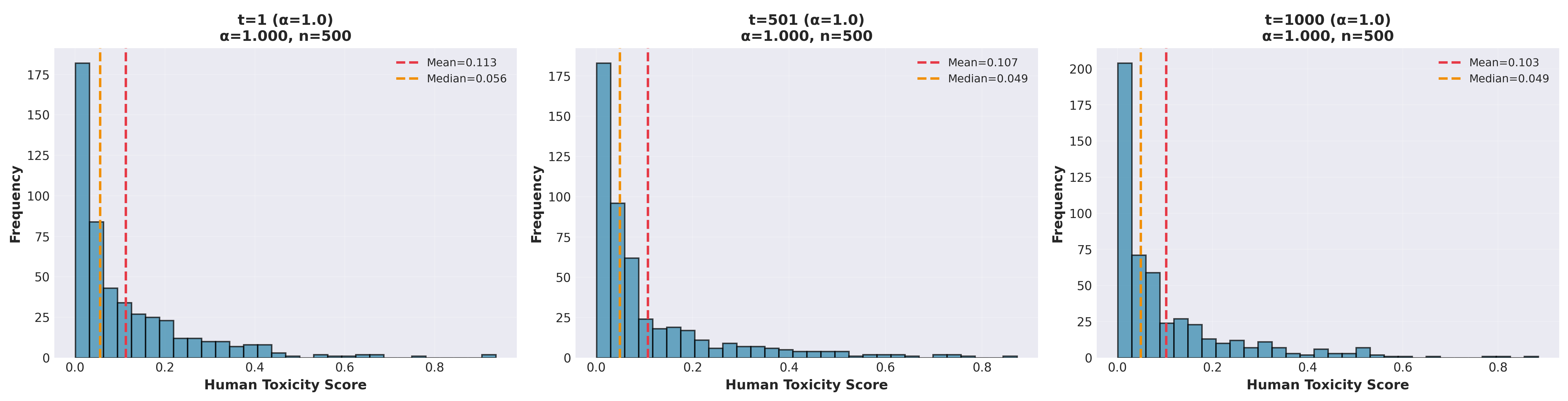}
        \caption{Uniform setting.}
        \label{fig:tox_uniform}
    \end{subfigure}

    \vspace{0.5em}

    \begin{subfigure}{\linewidth}
        \centering
        \includegraphics[width=\linewidth]{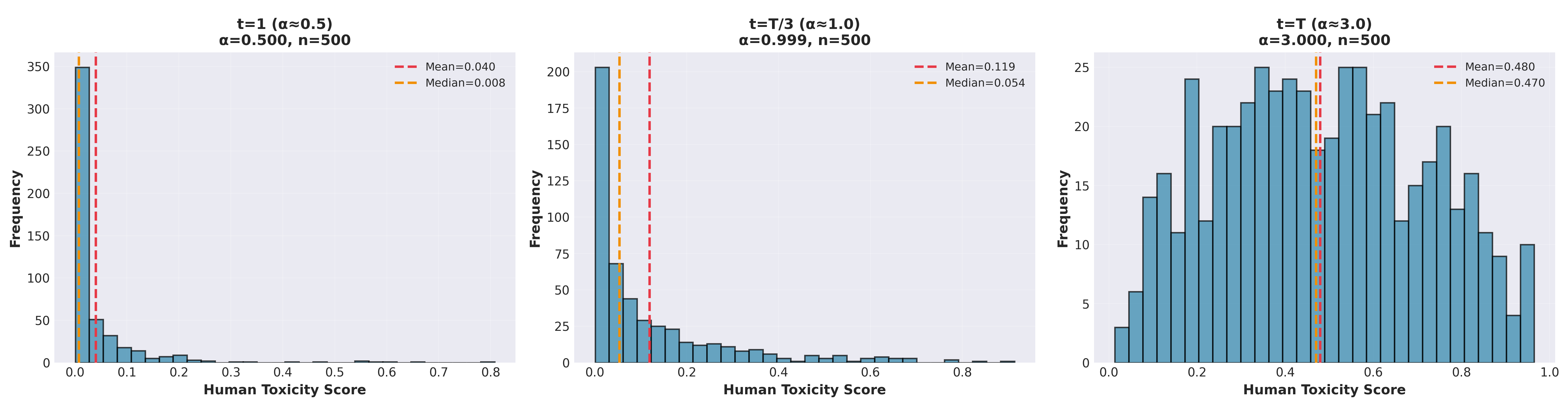}
        \caption{Adversarial setting (time-varying Beta distribution).}
        \label{fig:tox_adversarial}
    \end{subfigure}

    \vspace{0.5em}

    \begin{subfigure}{\linewidth}
        \centering
        \includegraphics[width=\linewidth]{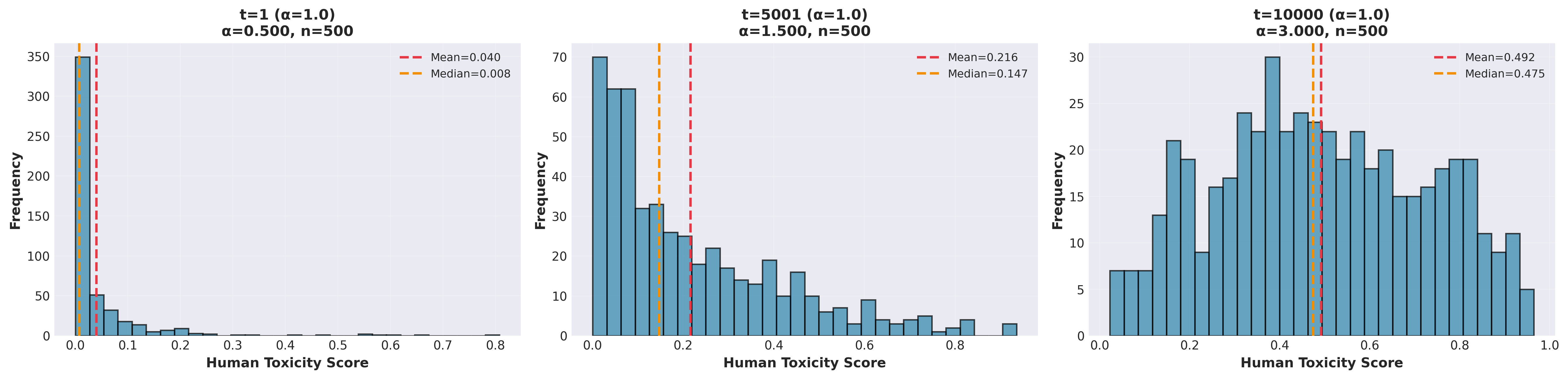}
        \caption{Adversarial-jump setting (time-varying Beta distribution).}
        \label{fig:tox_adversarial_jump}
    \end{subfigure}
    
    \caption{\textbf{LLM Toxicity Control Experiment.} 
    Dataset toxicity distributions under different sampling regimes.
    (Top) Stationary uniform sampling. (Middle) Adversarial setting with progressively tail-thickening sampling. (Bottom)  Adversarial-jump setting.
    }
    \label{fig:tox_distributions}
\end{figure}

Figures~\ref{fig:tox_uniform}, ~\ref{fig:tox_adversarial}, and ~\ref{fig:tox_adversarial_jump} visualize the empirical distribution of human toxicity scores for different values of the Beta sampling parameter $a$ under the uniform, adversarial, and adversarial-jump sampling regimes, respectively, and keeping $b$ fixed.

\subsection{Distributional Effects of the Sampling Parameter}

\paragraph{Uniform regime.}
Figure~\ref{fig:tox_uniform} shows the corresponding distributions when $a_t = b_t = 1$. In this case, the toxicity distribution remains stable over time as the overall shape exhibits no systematic drift in either its center or tail behavior.

\paragraph{Adversarial and adversarial-jump regime.}
Figure~\ref{fig:tox_adversarial} and ~\ref{fig:tox_adversarial_jump} show the empirical toxicity distribution for several values of the Beta shape parameter $a_t$, with $b_t=1$. As $a_t$ increases, the Beta distribution $c_t \sim \mathrm{Beta}(a_t, b_t)$ concentrates more mass near $1$, causing the sampling procedure to increasingly select more toxic responses. This results in a pronounced rightward shift and thickening of the upper tail of the toxicity distribution. Figure~\ref{fig:tox_distributions_alpha} illustrates the evolution of $a_t$ as a function of the time step for the adversarial and adversarial-jump sampling regime in the left and right panel, respectively. From the figures, it can be observed that $a_t$ is increasing overall across time for both regimes, while spontaneous shocks are injected to the adversarial-jump regime, where $a_t$ can take values as low as 0.5, or as high as 5.

\begin{figure}[!t]
    \centering

    \begin{subfigure}[t]{0.48\textwidth}
        \centering
        \includegraphics[width=\linewidth]{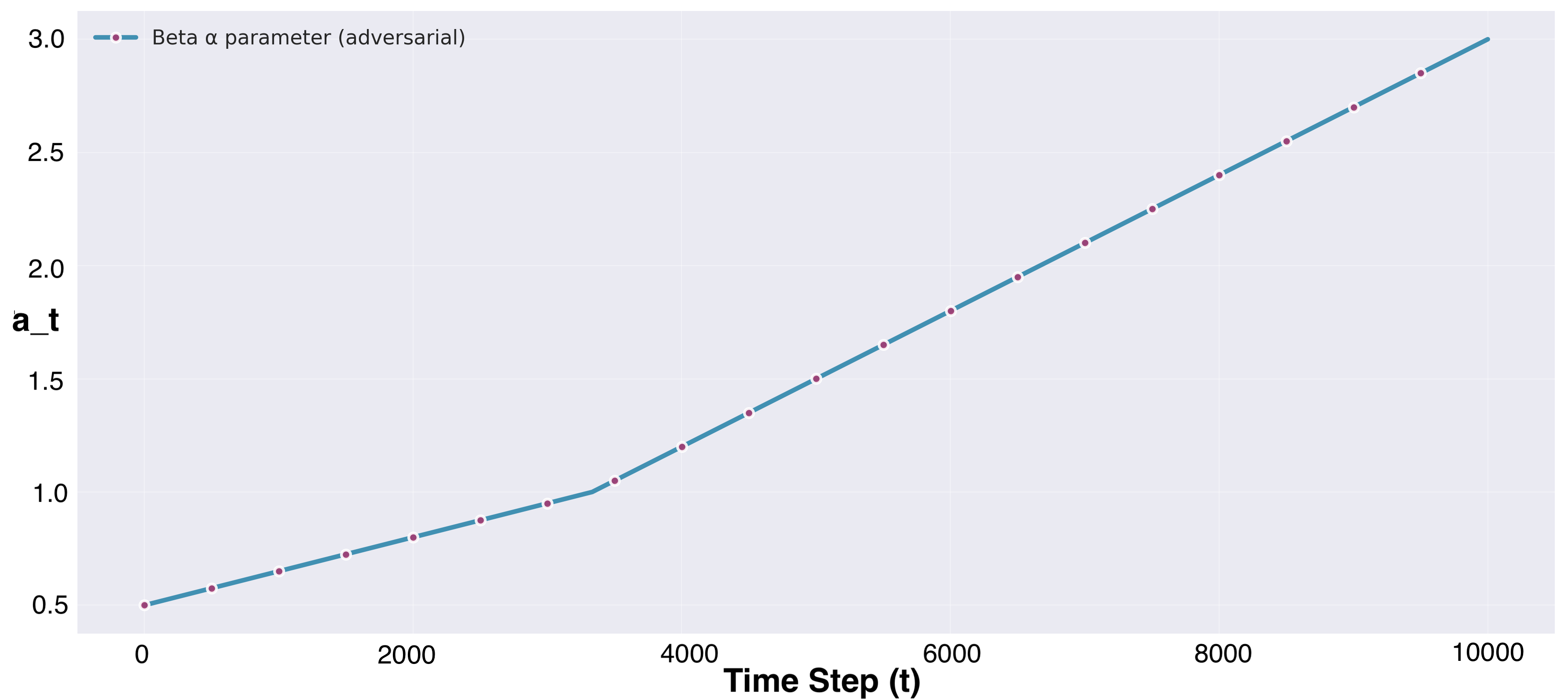}
        \caption{Adversarial setting.}
        \label{fig:tox_uniform_alpha}
    \end{subfigure}
    \hfill
    \begin{subfigure}[t]{0.48\textwidth}
        \centering
        \includegraphics[width=\linewidth]{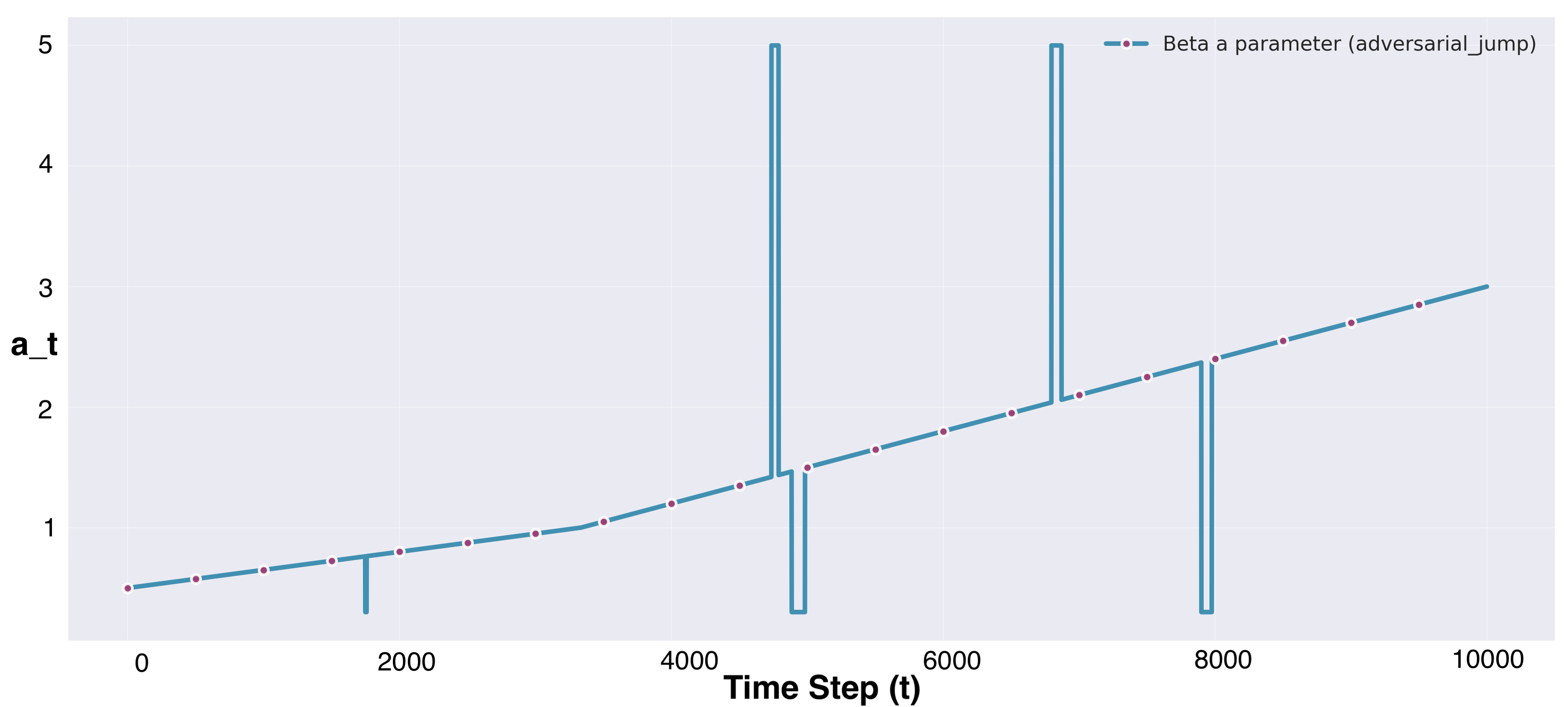}
        \caption{Adversarial-jump setting.}
        \label{fig:tox_adversarial_alpha}

    \end{subfigure}
    
    \caption{\textbf{LLM Toxicity Control Experiment.} 
    Evolution of $a_t$ parameter in Beta distribution under different sampling regimes: 
    adversarial setting (left), adversarial-jump setting (right).
    }
    \label{fig:tox_distributions_alpha}
\end{figure}

\subsection{Additional Experiment Results}

We evaluate the proposed RUCC controller under {uniform}, {adversarial}, and {adversarial}-jump sampling regimes, for tail levels $\beta \in \{0.75, 0.8, 0.85, 0.9\}$ with target risk level $\alpha = 0.1$. In all experiments, we use step size $\gamma = 0.05$ and discard the first $100$ rounds as a burn-in period.

\begin{figure*}[!t]
    \centering
    
    \begin{subfigure}[t]{0.48\textwidth}
        \centering
        \includegraphics[width=\linewidth]{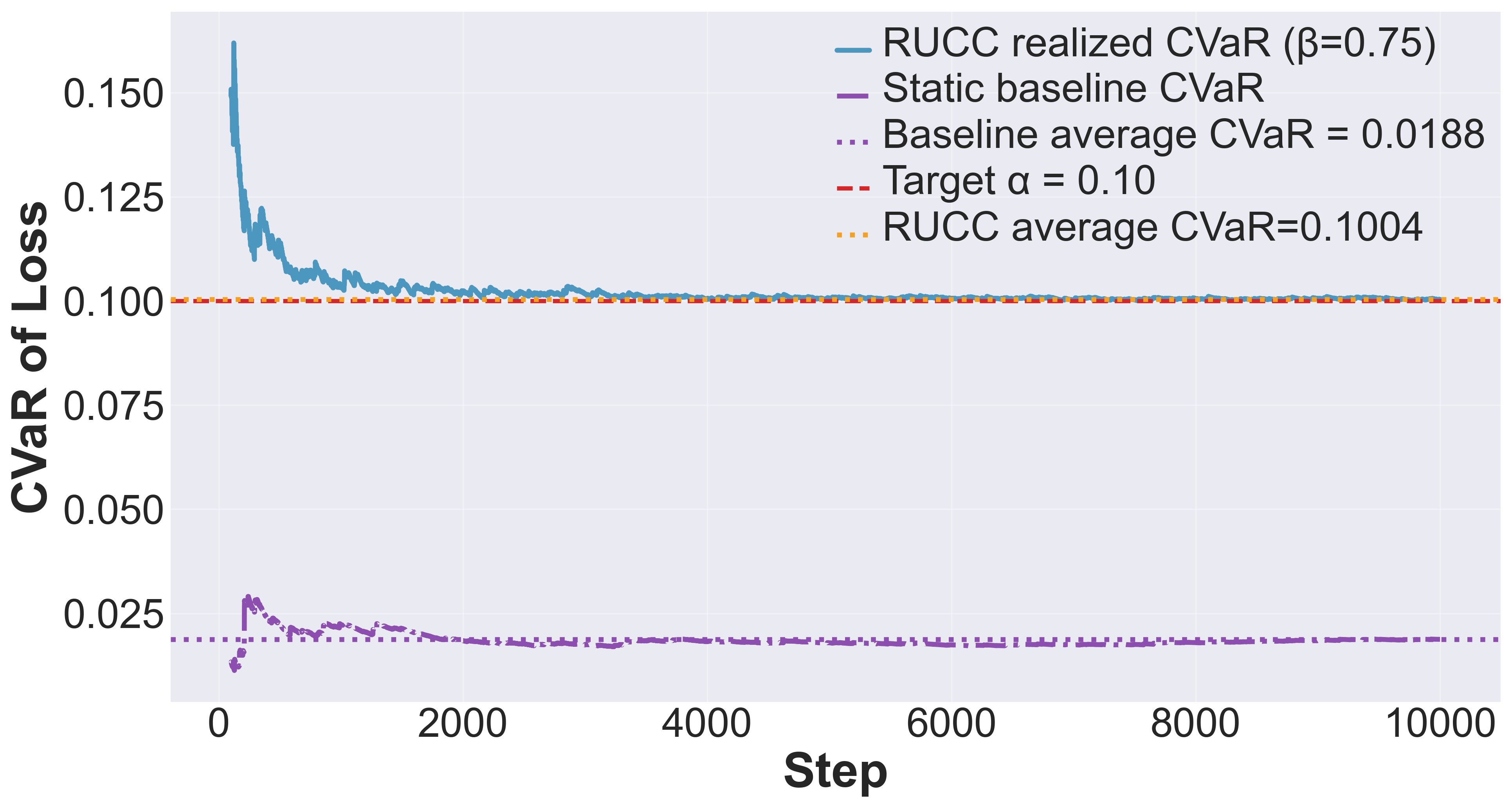}
        \caption{Realized $\operatorname{CVaR}_\beta$ ($\beta = 0.75$)}
        \label{fig:uniform_0.75}
    \end{subfigure}
    \hfill
    \begin{subfigure}[t]{0.48\textwidth}
        \centering
        \includegraphics[width=\linewidth]{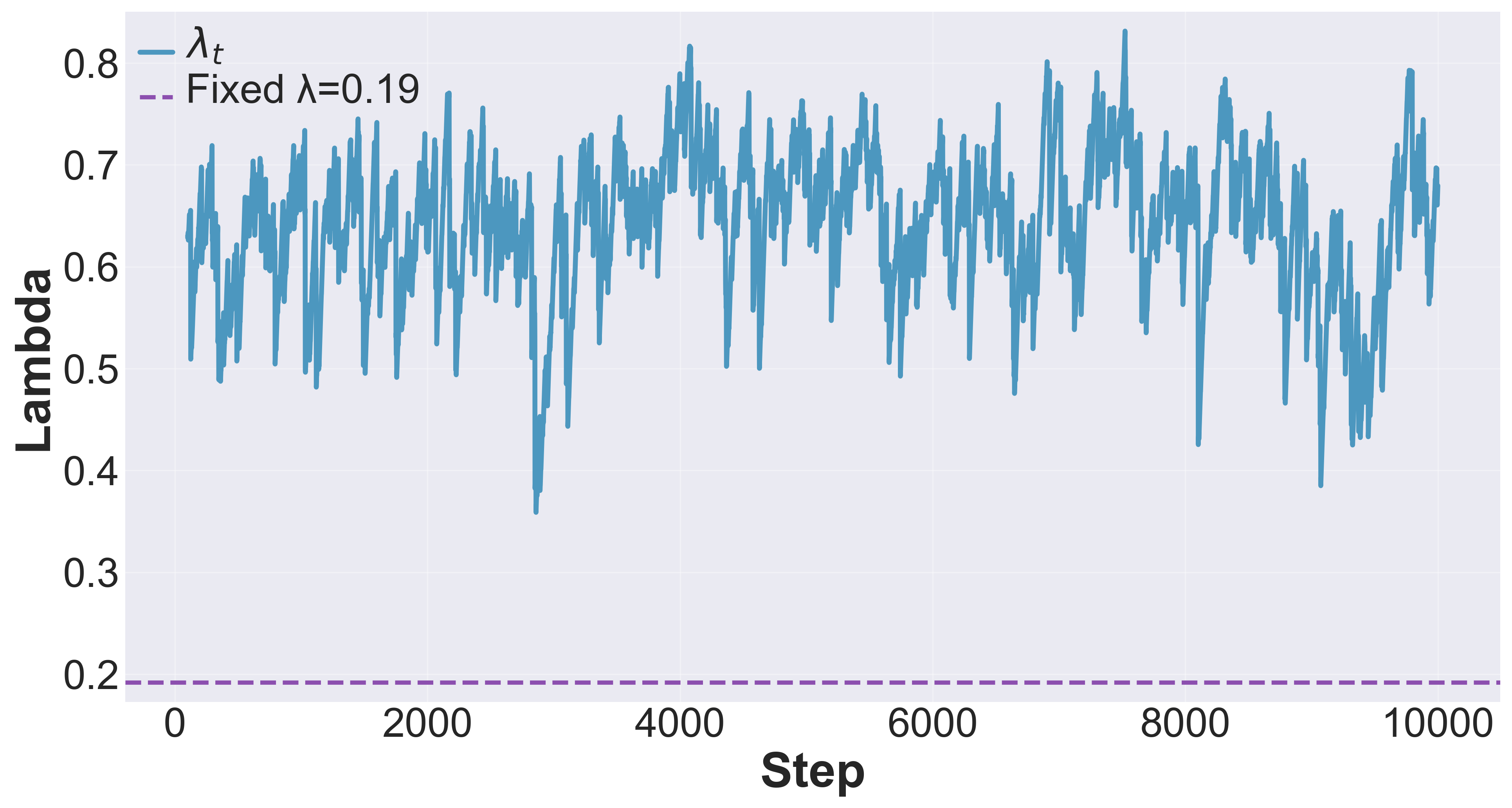}
        \caption{Evolution of $\lambda_t$ ($\beta = 0.75$)}
        \label{fig:uniform_lambda_0.75}
    \end{subfigure}
    
    \vspace{0.5em}

    \begin{subfigure}[t]{0.48\textwidth}
        \centering
        \includegraphics[width=\linewidth]{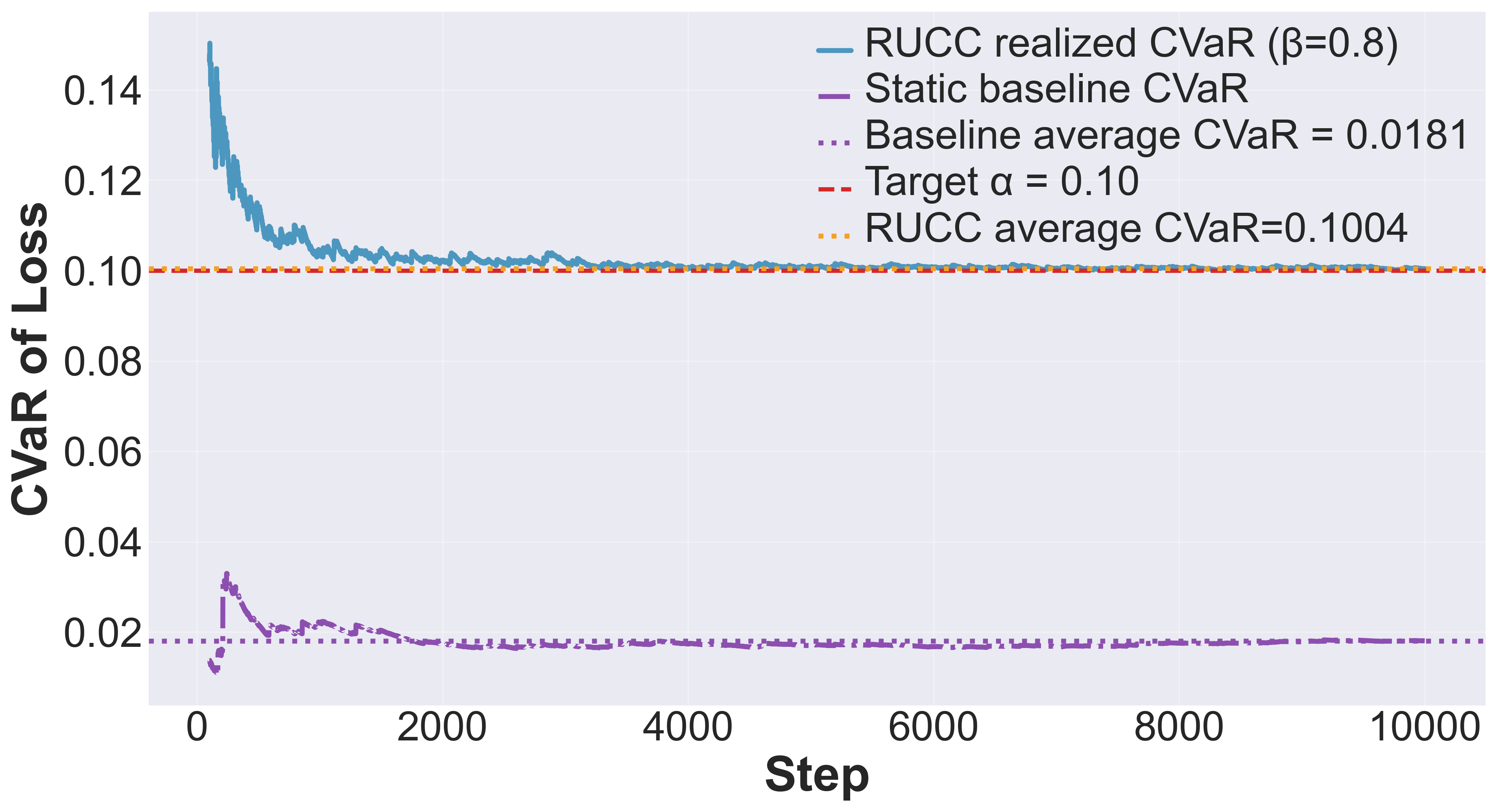}
        \caption{Realized $\operatorname{CVaR}_\beta$ ($\beta = 0.80$)}
        \label{fig:uniform_0.8}
    \end{subfigure}
    \hfill
    \begin{subfigure}[t]{0.48\textwidth}
        \centering
        \includegraphics[width=\linewidth]{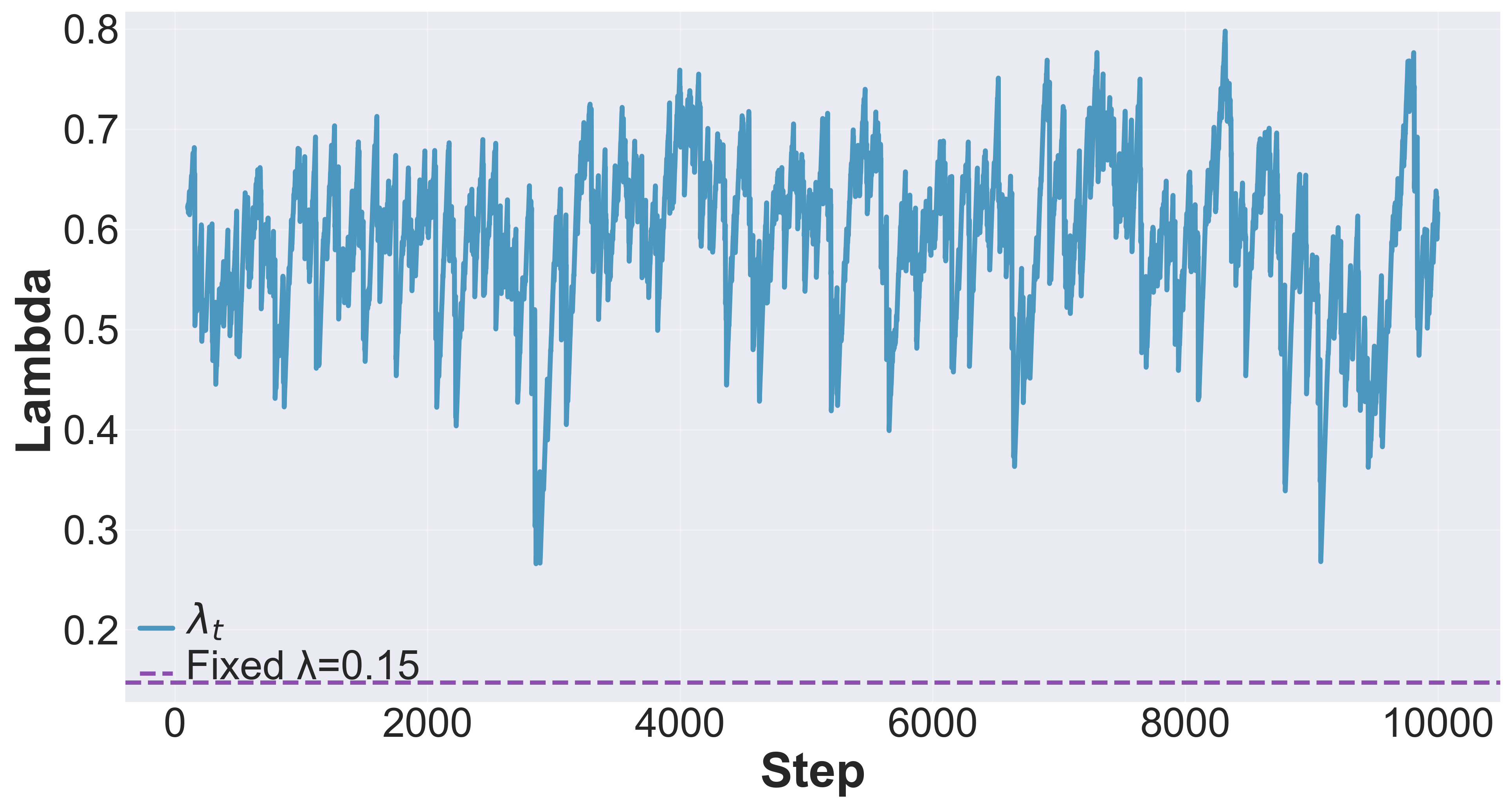}
        \caption{Evolution of $\lambda_t$ ($\beta = 0.80$)}
        \label{fig:uniform_lambda_0.8}
    \end{subfigure}
    
    \vspace{0.5em}

    \begin{subfigure}[t]{0.48\textwidth}
        \centering
        \includegraphics[width=\linewidth]{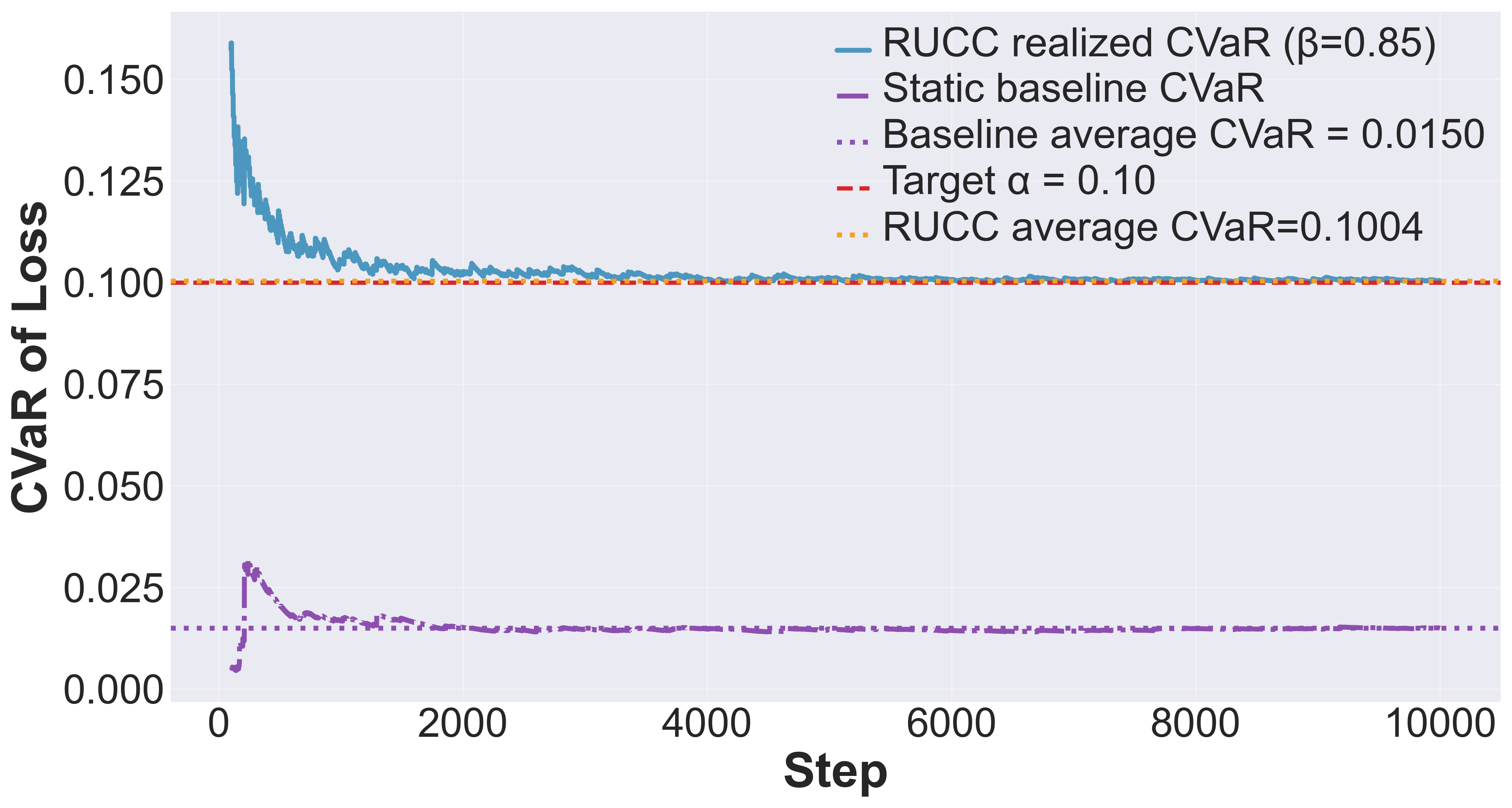}
        \caption{Realized $\operatorname{CVaR}_\beta$ ($\beta = 0.85$)}
        \label{fig:uniform_0.85}
    \end{subfigure}
    \hfill
    \begin{subfigure}[t]{0.48\textwidth}
        \centering
        \includegraphics[width=\linewidth]{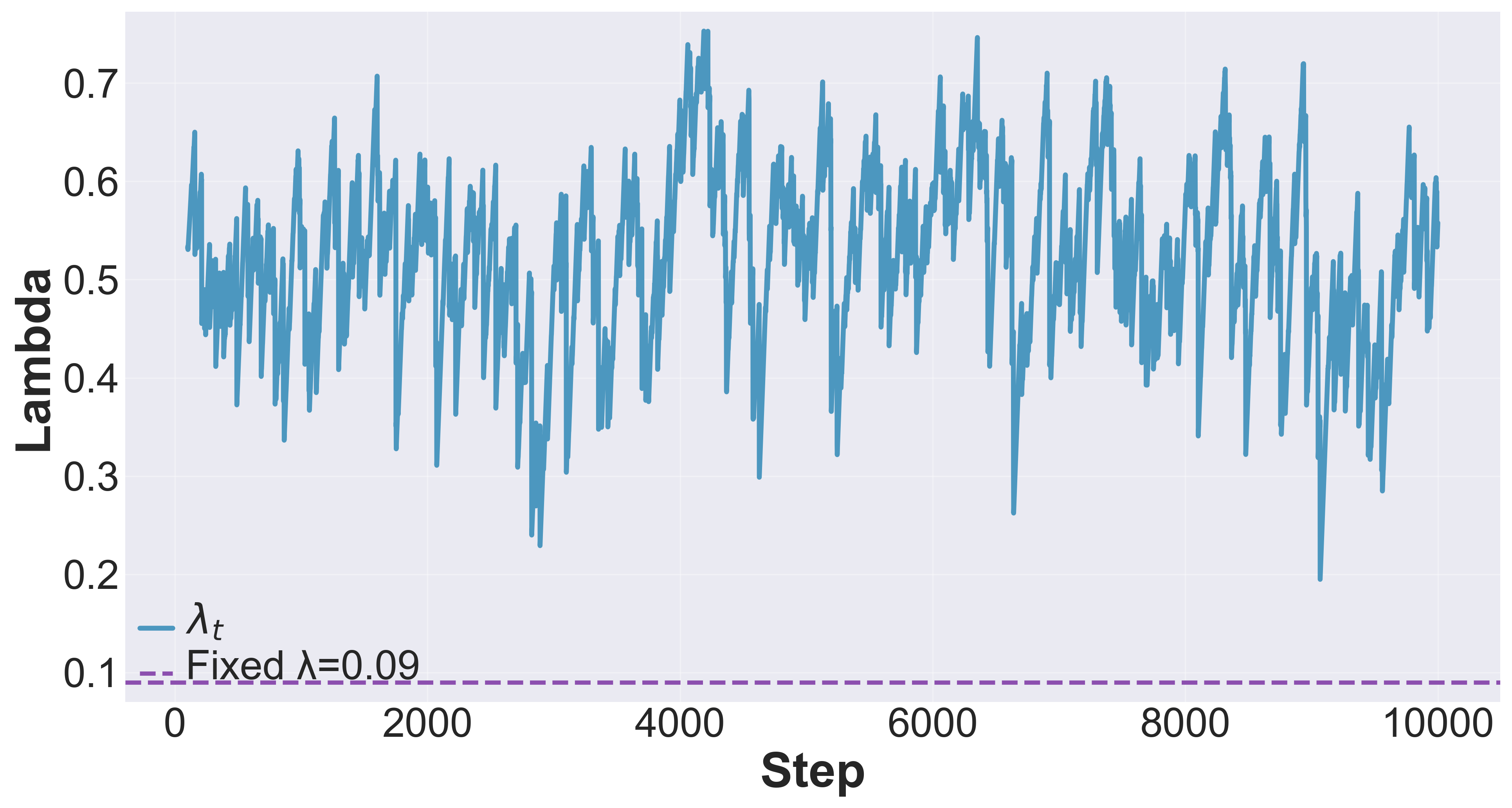}
        \caption{Evolution of $\lambda_t$ ($\beta = 0.85$)}
        \label{fig:uniform_lambda_0.85}
    \end{subfigure}
    
    \vspace{0.5em}

    \begin{subfigure}[t]{0.48\textwidth}
        \centering
        \includegraphics[width=\linewidth]{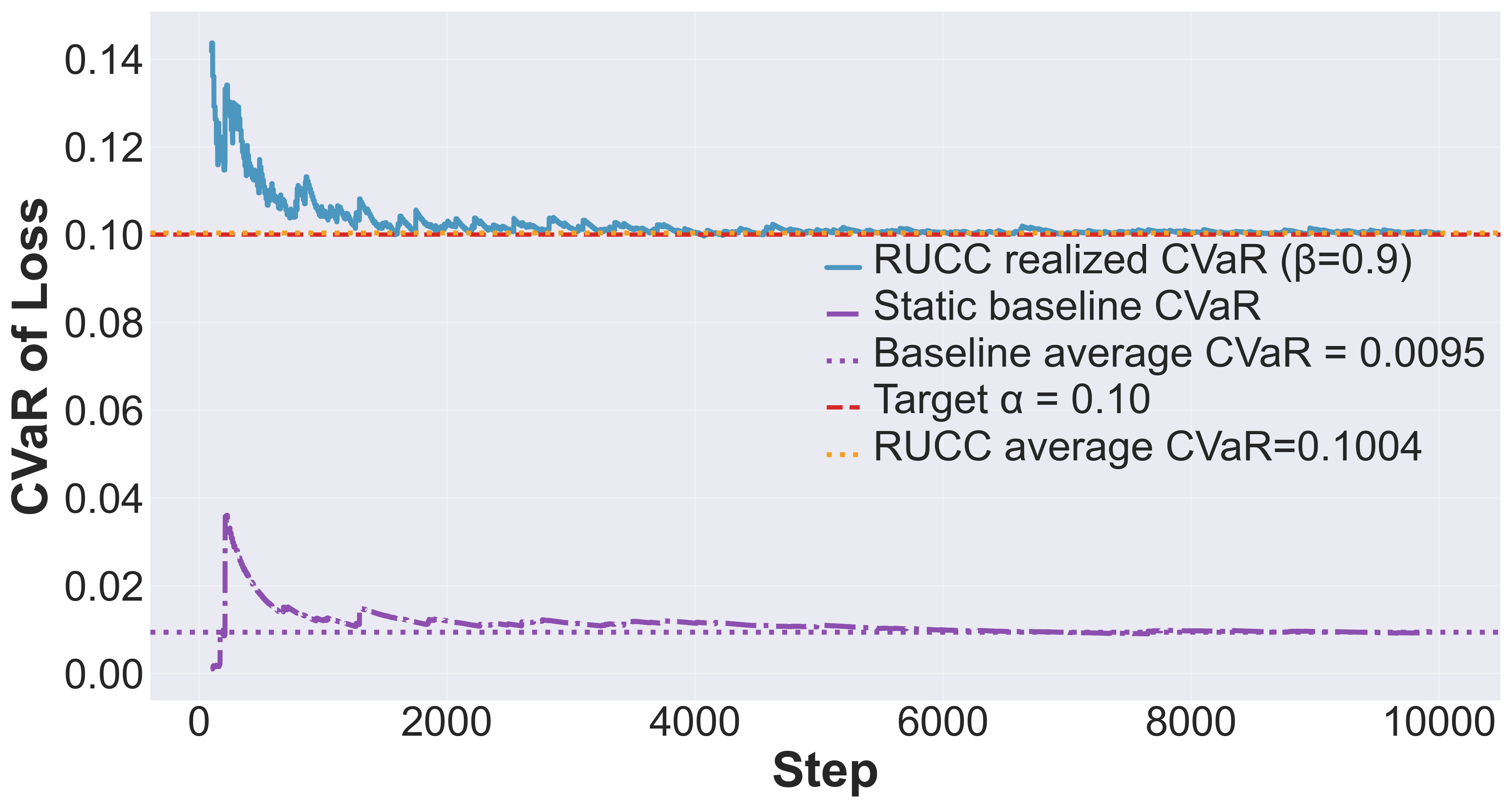}
        \caption{Realized $\operatorname{CVaR}_\beta$ ($\beta = 0.90$)}
        \label{fig:uniform_0.9}
    \end{subfigure}
    \hfill
    \begin{subfigure}[t]{0.48\textwidth}
        \centering
        \includegraphics[width=\linewidth]{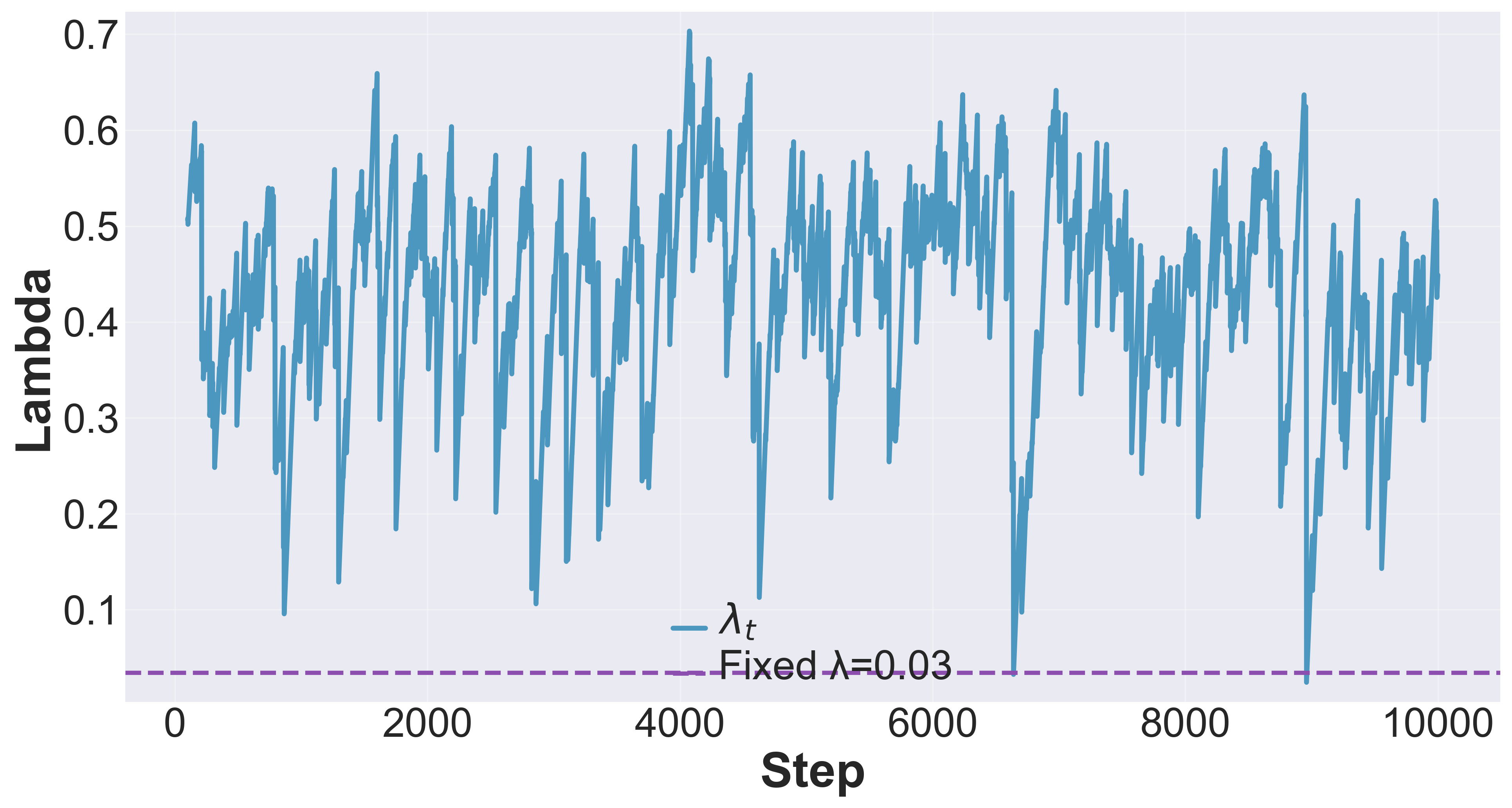}
        \caption{Evolution of $\lambda_t$ ($\beta = 0.90$)}
        \label{fig:uniform_lambda_0.9}
    \end{subfigure}   
    \caption{\textbf{LLM Toxicity Control Experiment.} Realized empirical $\operatorname{CVaR}_\beta$ and evolution of $\lambda_t$ with 
     $\gamma = 0.05$, and target $\alpha = 0.1$ for the uniform regime}
    \label{fig:llm_uniform}
    
\end{figure*}

\begin{figure*}[!t]
    \centering
    
    \begin{subfigure}[t]{0.48\textwidth}
        \centering
        \includegraphics[width=\linewidth]{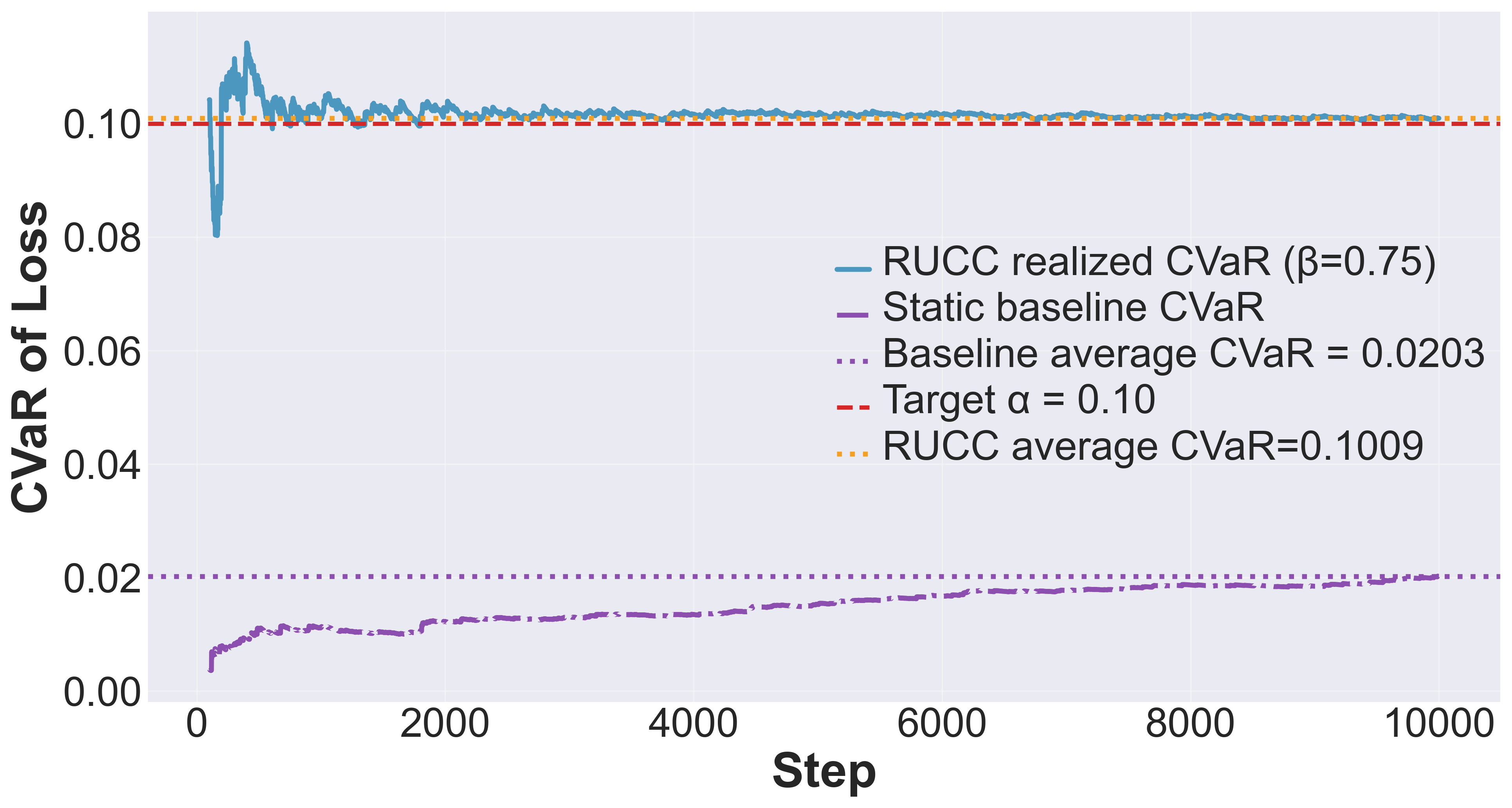}
        \caption{Realized $\operatorname{CVaR}_\beta$ ($\beta = 0.75$)}
        \label{fig:adversarial_0.75}
    \end{subfigure}
    \hfill
    \begin{subfigure}[t]{0.48\textwidth}
        \centering
        \includegraphics[width=\linewidth]{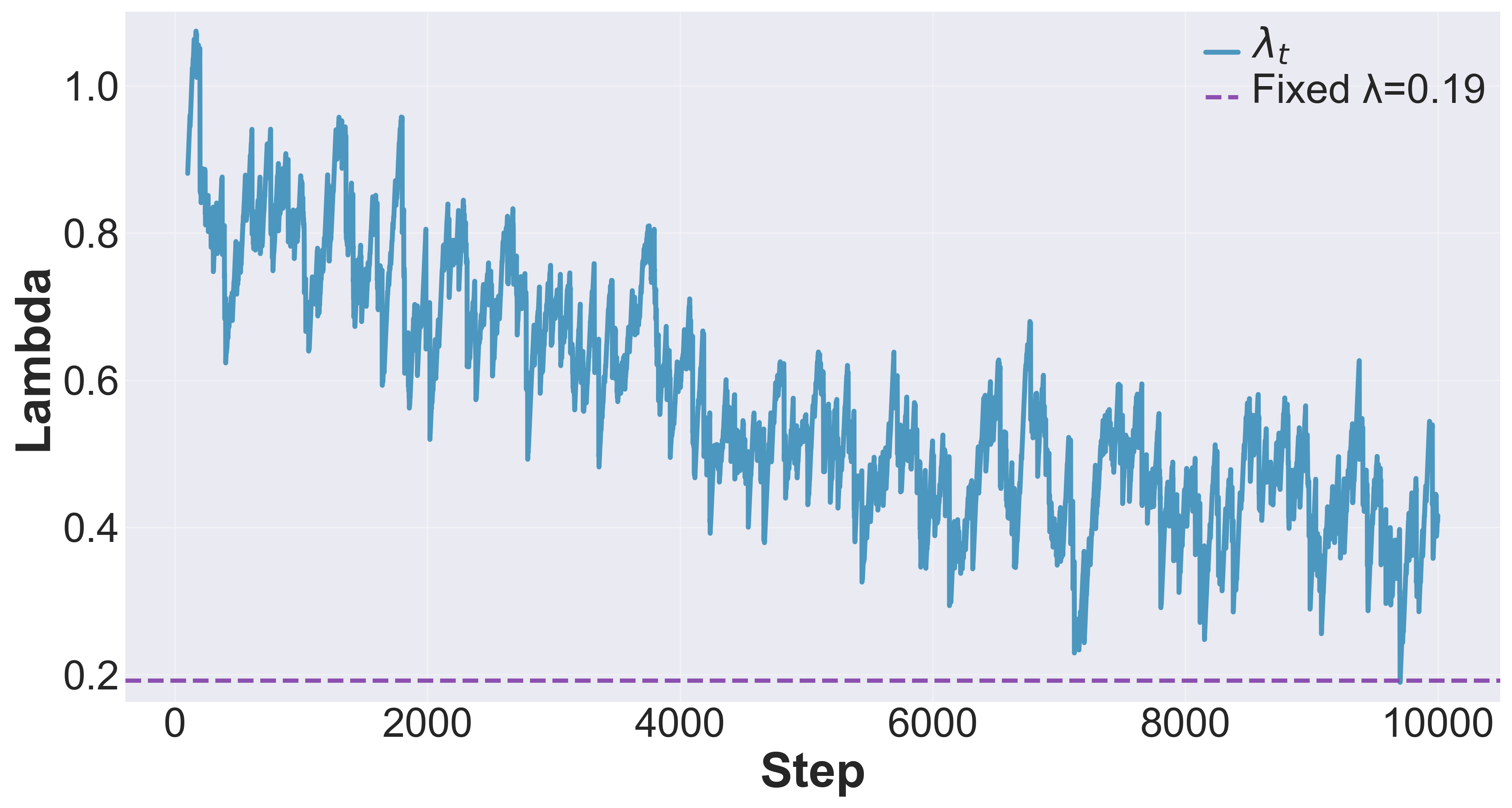}
        \caption{Evolution of $\lambda_t$ ($\beta = 0.75$)}
        \label{fig:adversarial_lambda_0.75}
    \end{subfigure}
    
    \vspace{0.5em}

    \begin{subfigure}[t]{0.48\textwidth}
        \centering
        \includegraphics[width=\linewidth]{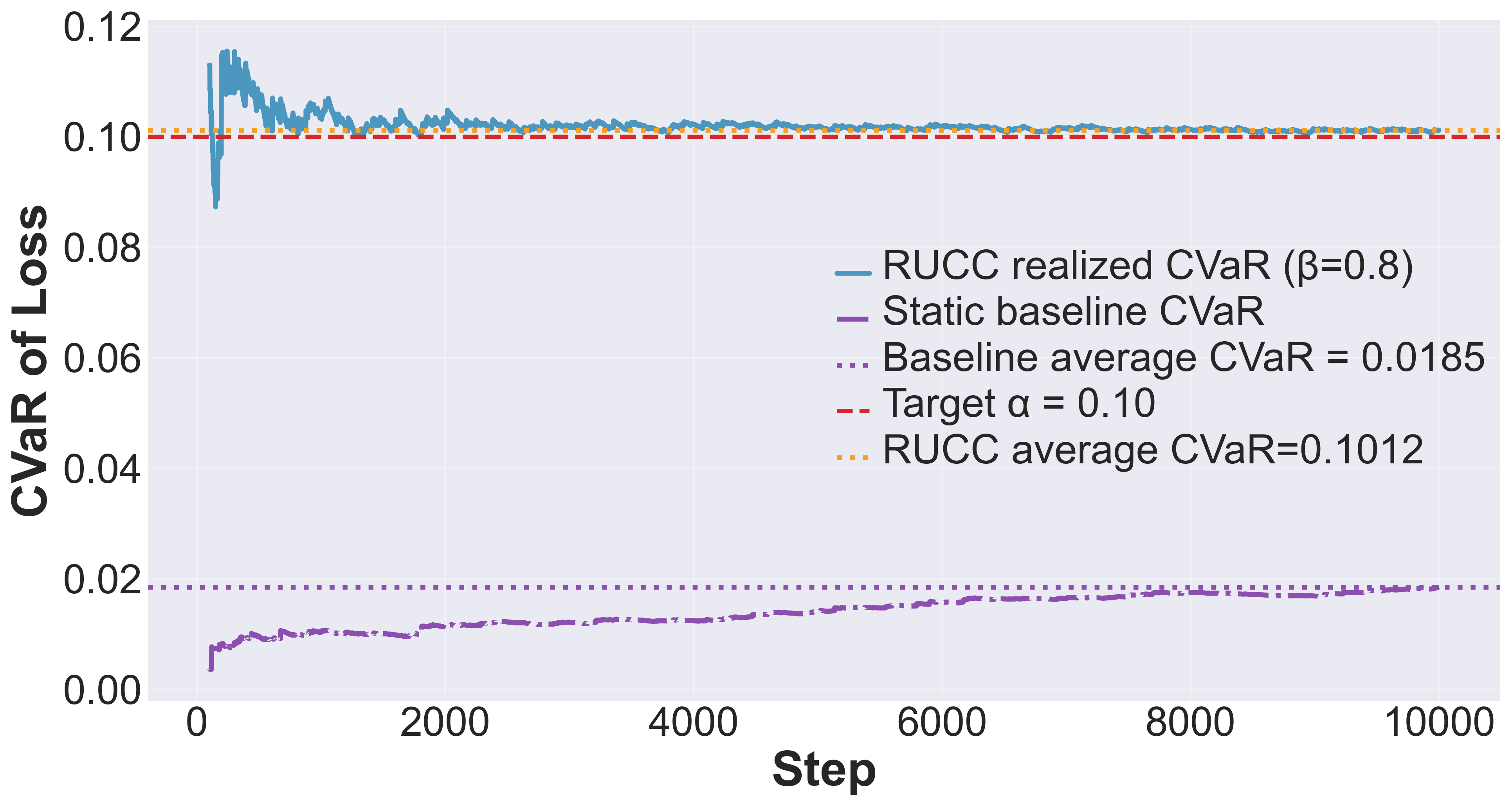}
        \caption{Realized $\operatorname{CVaR}_\beta$ ($\beta = 0.80$)}
        \label{fig:adversarial_0.8}
    \end{subfigure}
    \hfill
    \begin{subfigure}[t]{0.48\textwidth}
        \centering
        \includegraphics[width=\linewidth]{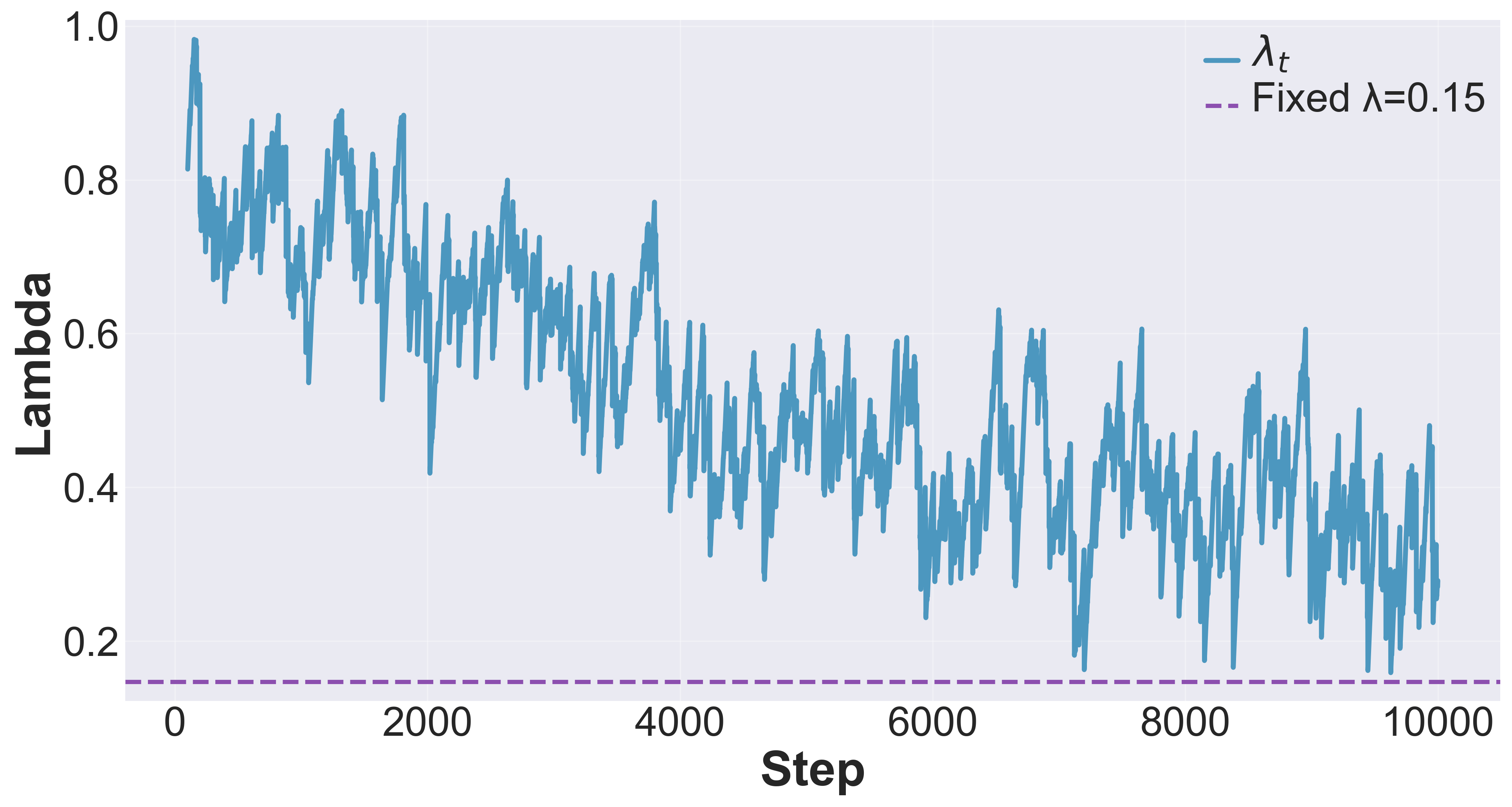}
        \caption{Evolution of $\lambda_t$ ($\beta = 0.80$)}
        \label{fig:adversarial_lambda_0.8}
    \end{subfigure}
    
    \vspace{0.5em}

    \begin{subfigure}[t]{0.48\textwidth}
        \centering
        \includegraphics[width=\linewidth]{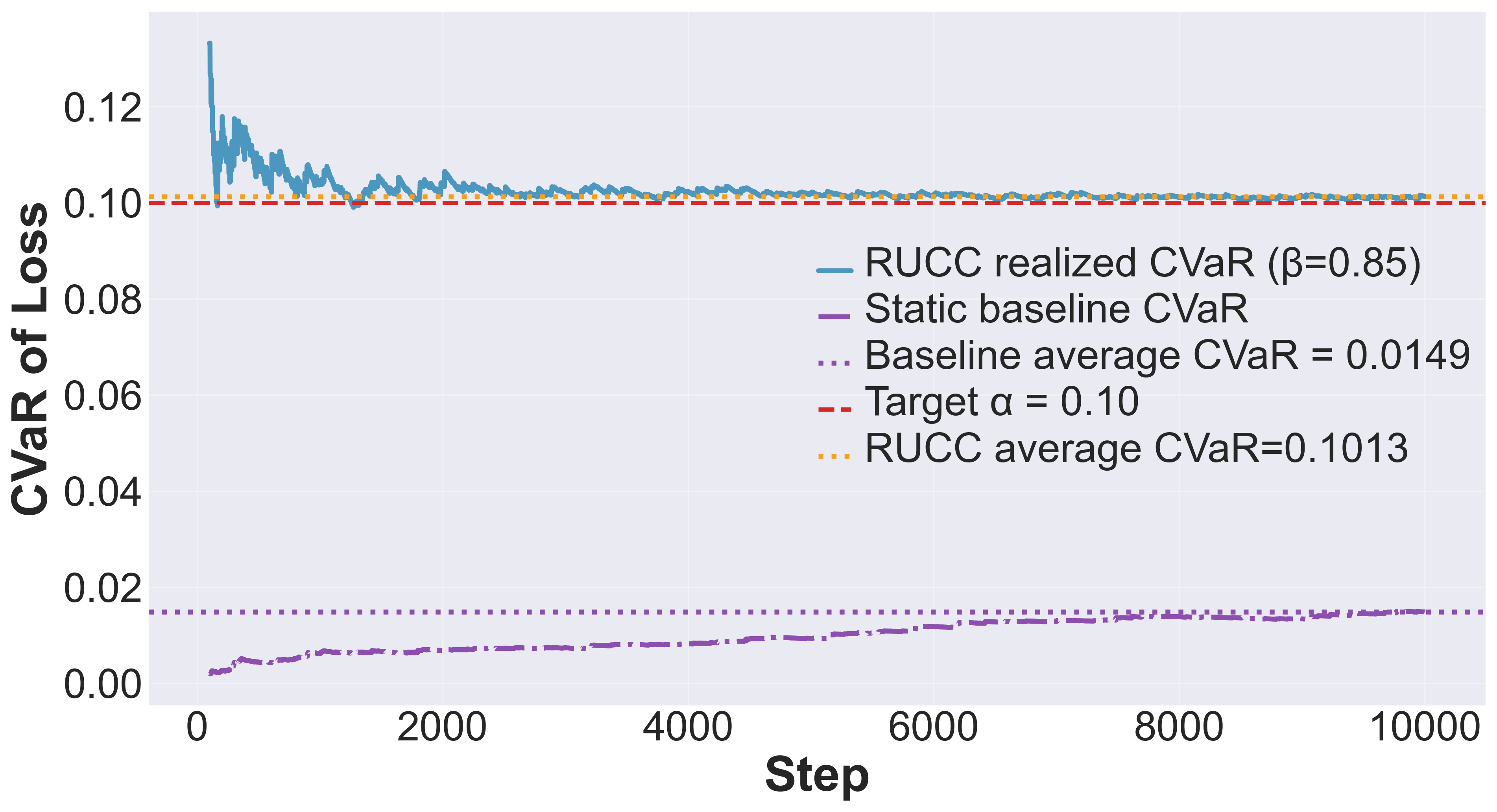}
        \caption{Realized $\operatorname{CVaR}_\beta$ ($\beta = 0.85$)}
        \label{fig:adversarial_0.85}
    \end{subfigure}
    \hfill
    \begin{subfigure}[t]{0.48\textwidth}
        \centering
        \includegraphics[width=\linewidth]{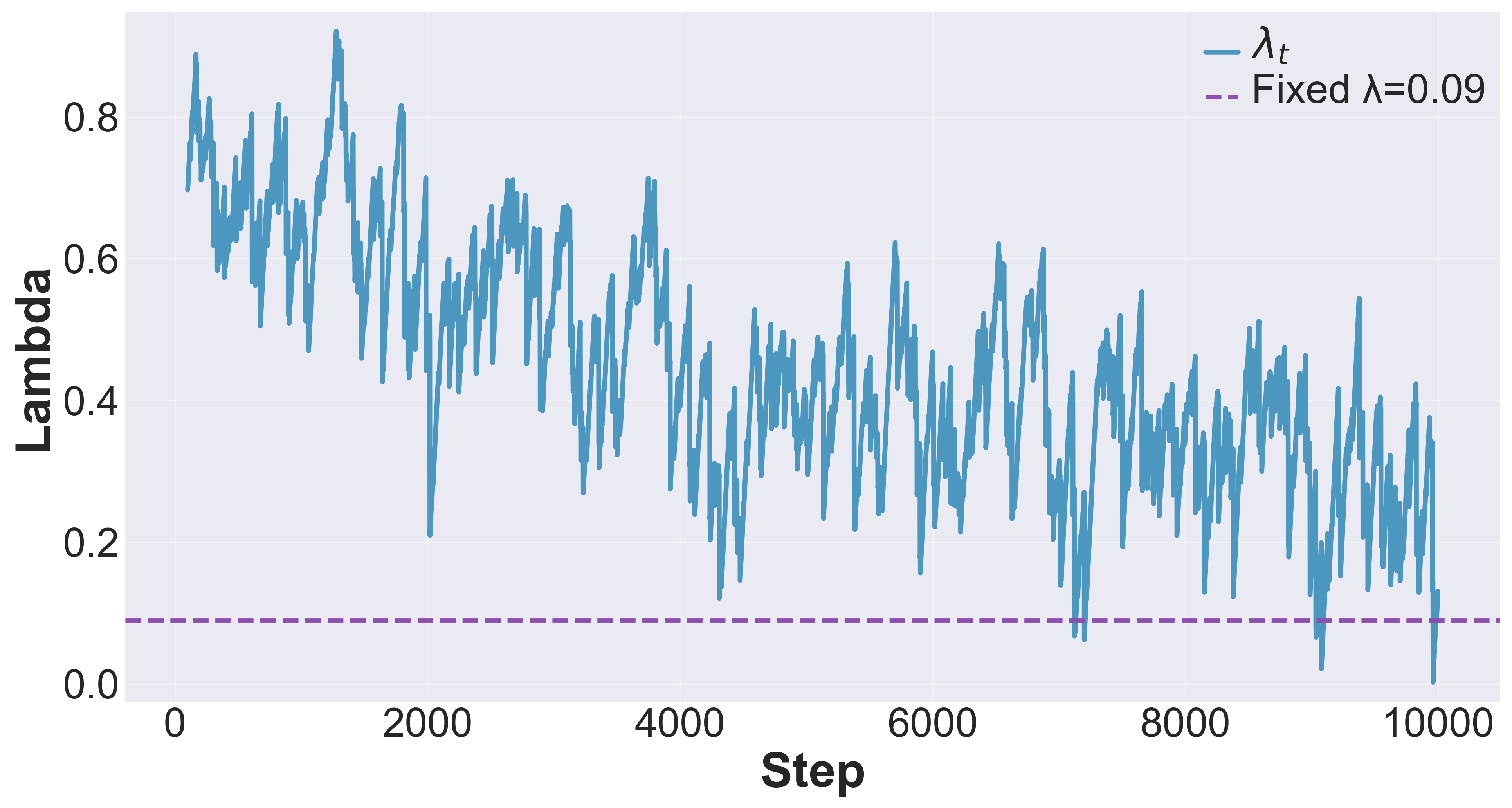}
        \caption{Evolution of $\lambda_t$ ($\beta = 0.85$)}
        \label{fig:adversarial_lambda_0.85}
    \end{subfigure}
    
    \vspace{0.5em}

    \begin{subfigure}[t]{0.48\textwidth}
        \centering
        \includegraphics[width=\linewidth]{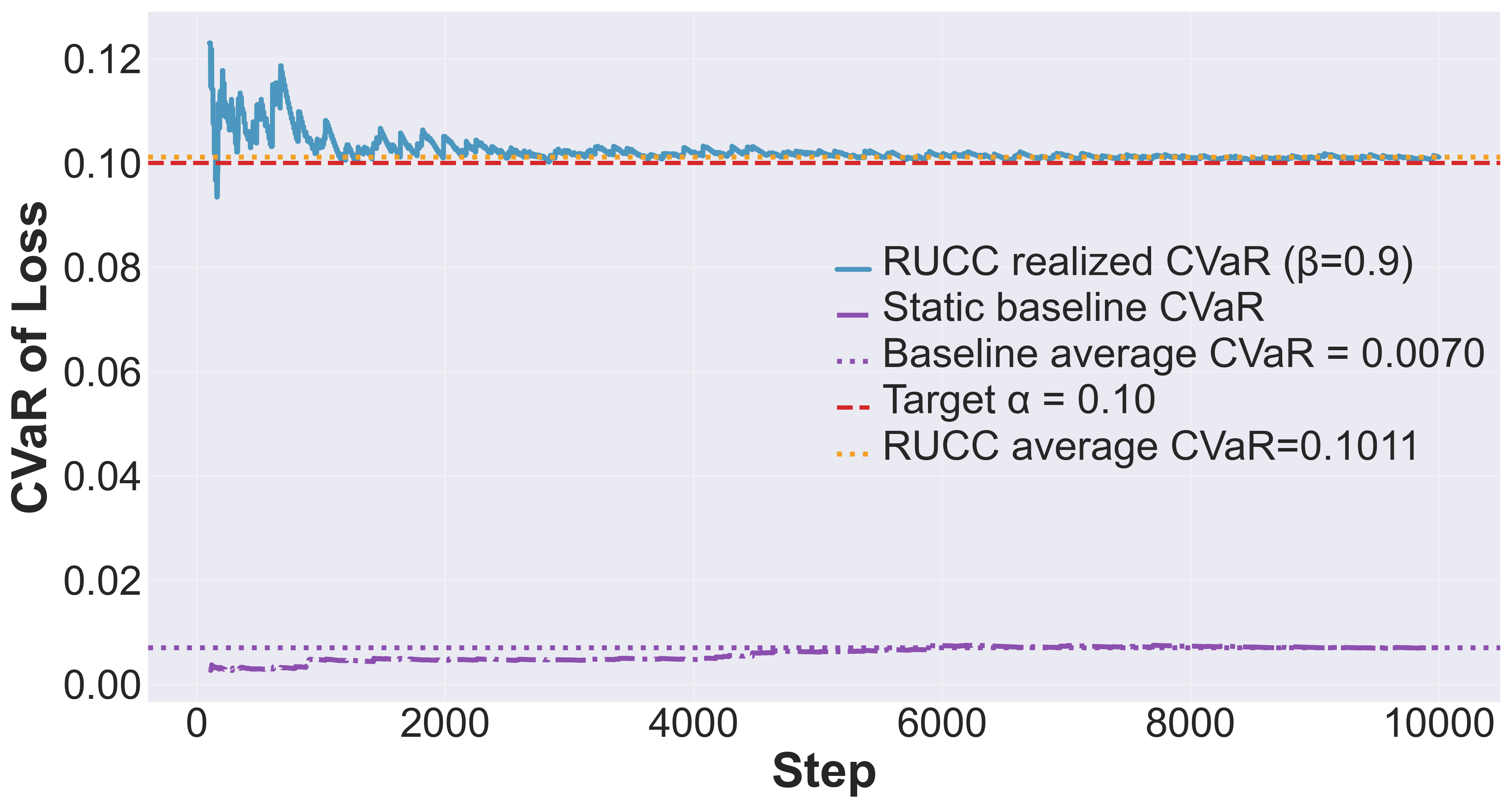}
        \caption{Realized $\operatorname{CVaR}_\beta$ ($\beta = 0.90$)}
        \label{fig:adversarial_0.9}
    \end{subfigure}
    \hfill
    \begin{subfigure}[t]{0.48\textwidth}
        \centering
        \includegraphics[width=\linewidth]{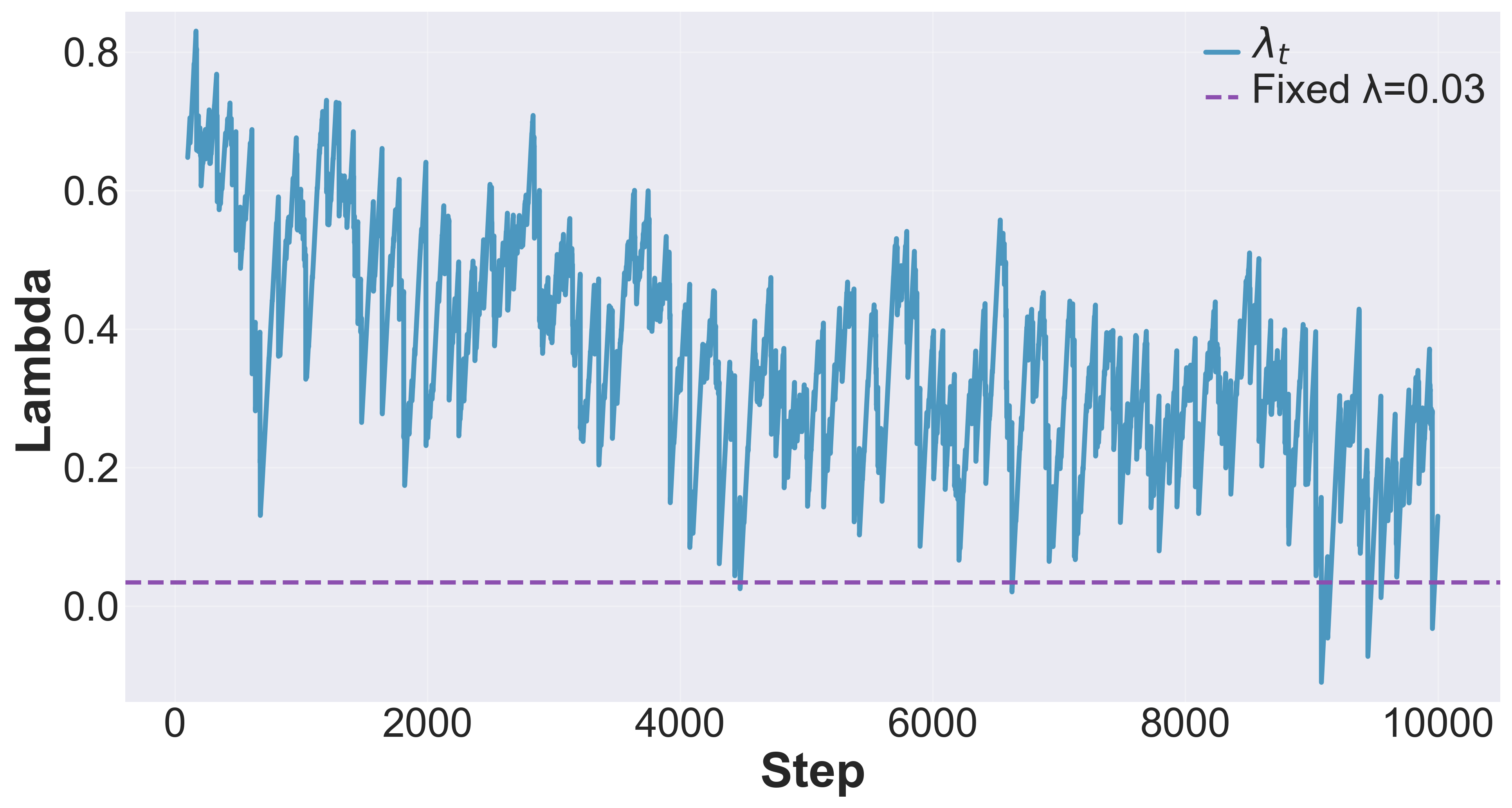}
        \caption{Evolution of $\lambda_t$ ($\beta = 0.90$)}
        \label{fig:adversarial_lambda_0.9}
    \end{subfigure}    
    \caption{\textbf{LLM Toxicity Control Experiment.} Realized empirical $\operatorname{CVaR}_\beta$ and evolution of $\lambda_t$ with 
     $\gamma = 0.05$, and target $\alpha = 0.1$ for the adversarial regime}
    \label{fig:llm_adversarial}
    
\end{figure*}

\begin{figure*}[!t]
    \centering
    
    \begin{subfigure}[t]{0.48\textwidth}
        \centering
        \includegraphics[width=\linewidth]{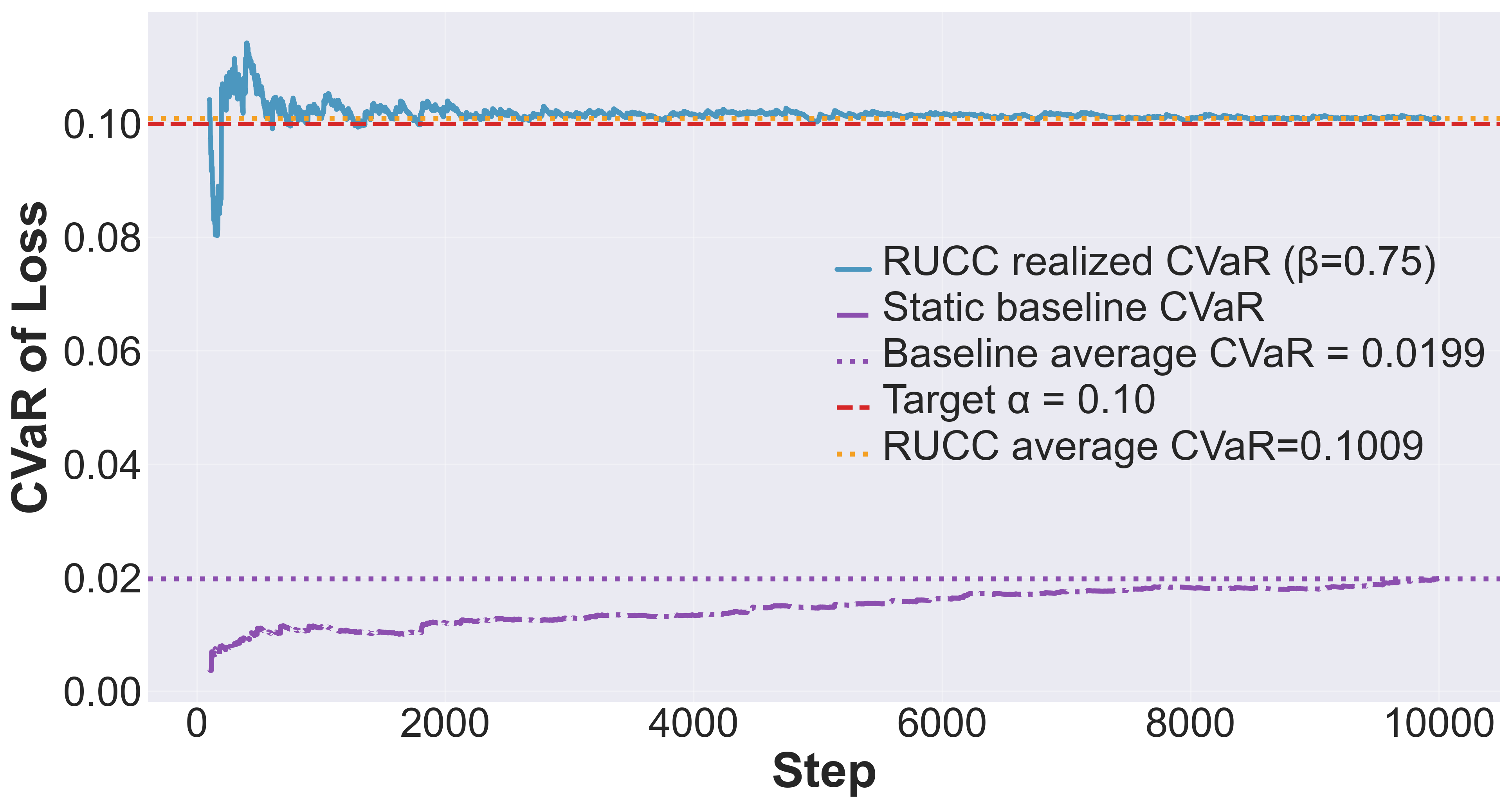}
        \caption{Realized $\operatorname{CVaR}_\beta$ ($\beta = 0.75$)}
        \label{fig:adversarial_jump_0.75}
    \end{subfigure}
    \hfill
    \begin{subfigure}[t]{0.48\textwidth}
        \centering
        \includegraphics[width=\linewidth]{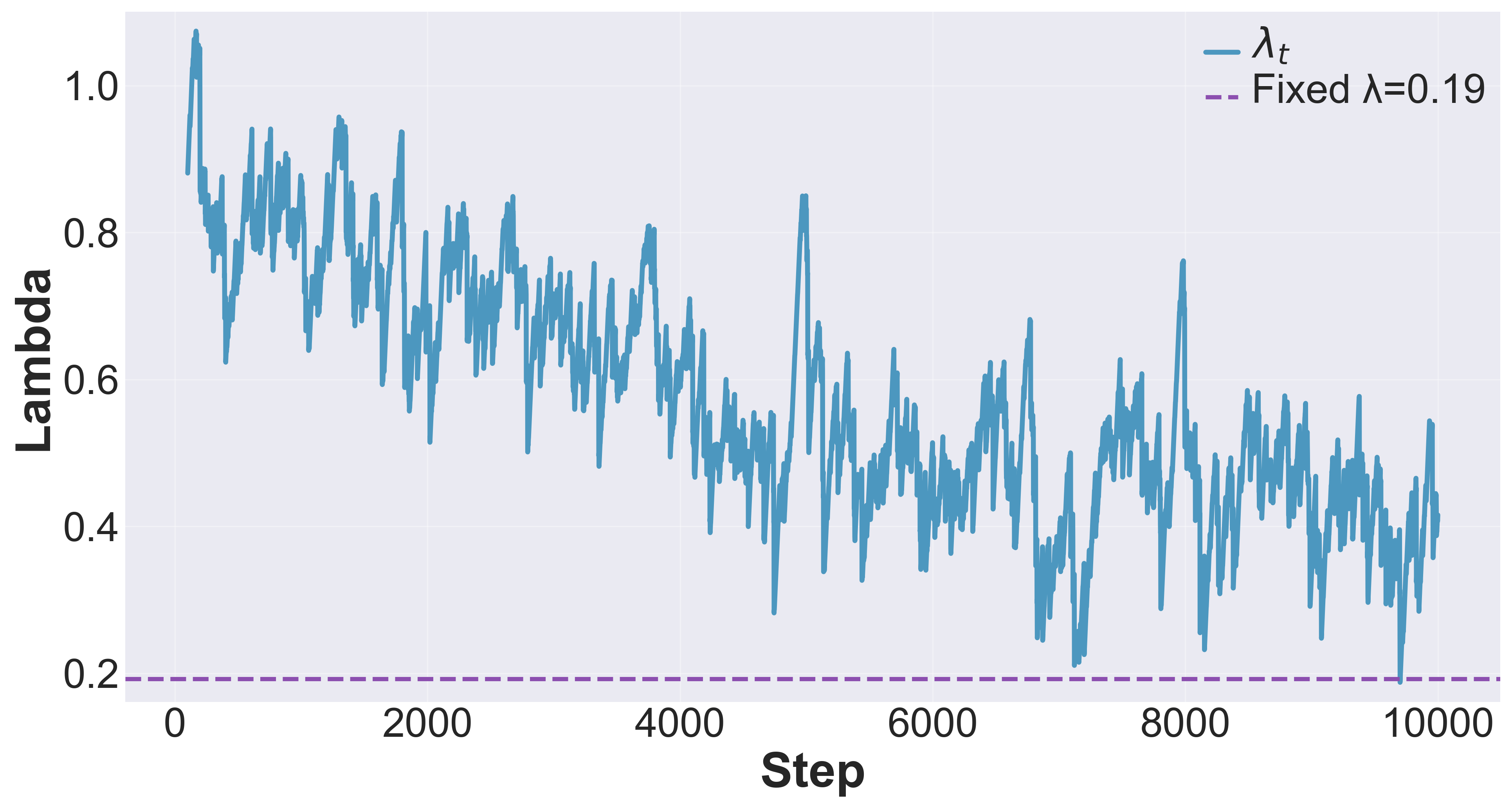}
        \caption{Evolution of $\lambda_t$ ($\beta = 0.75$)}
        \label{fig:adversarial_jump_lambda_0.75}
    \end{subfigure}
    
    \vspace{0.5em}

    \begin{subfigure}[t]{0.48\textwidth}
        \centering
        \includegraphics[width=\linewidth]{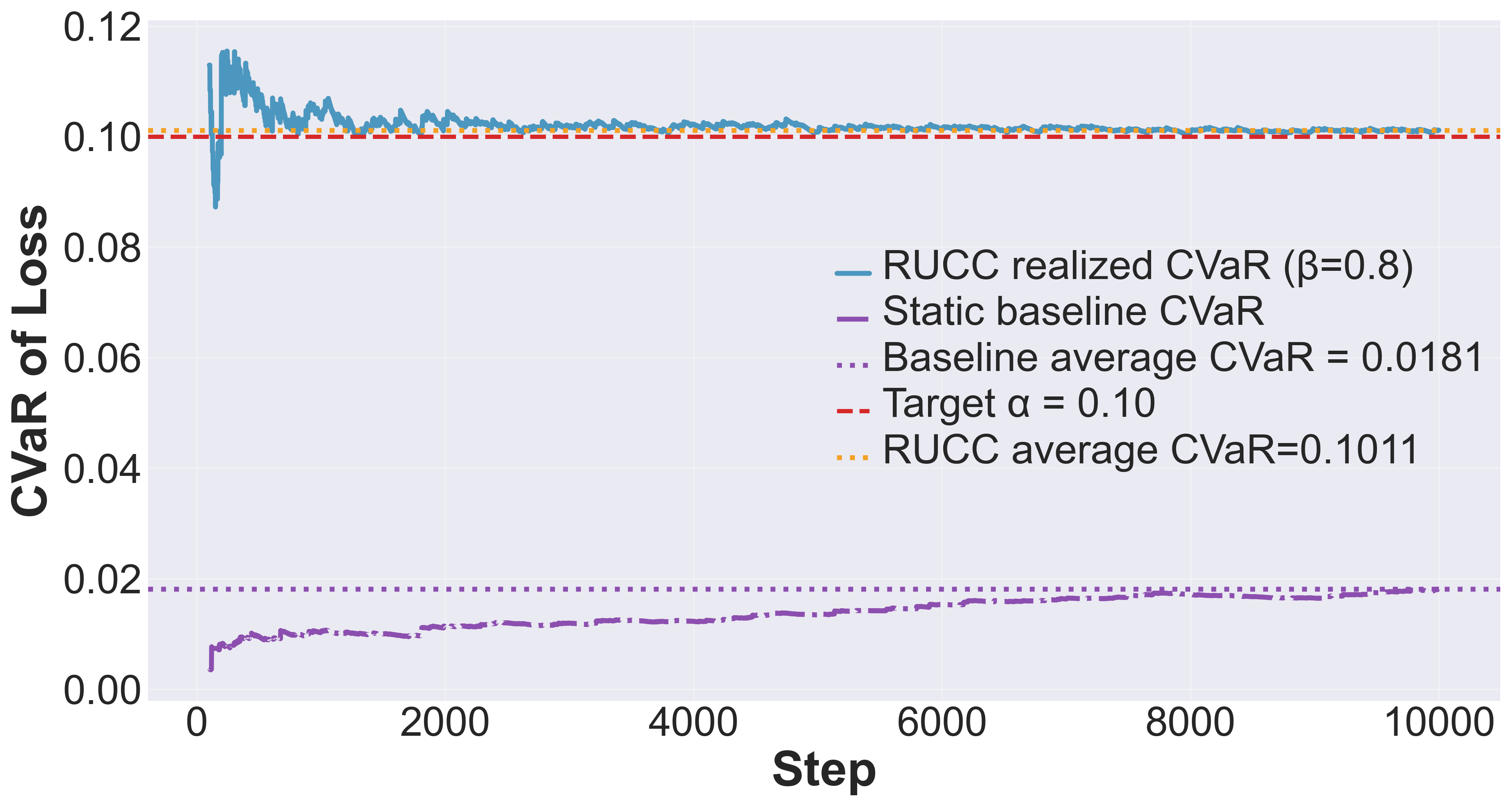}
        \caption{Realized $\operatorname{CVaR}_\beta$ ($\beta = 0.80$)}
        \label{fig:adversarial_jump_0.8}
    \end{subfigure}
    \hfill
    \begin{subfigure}[t]{0.48\textwidth}
        \centering
        \includegraphics[width=\linewidth]{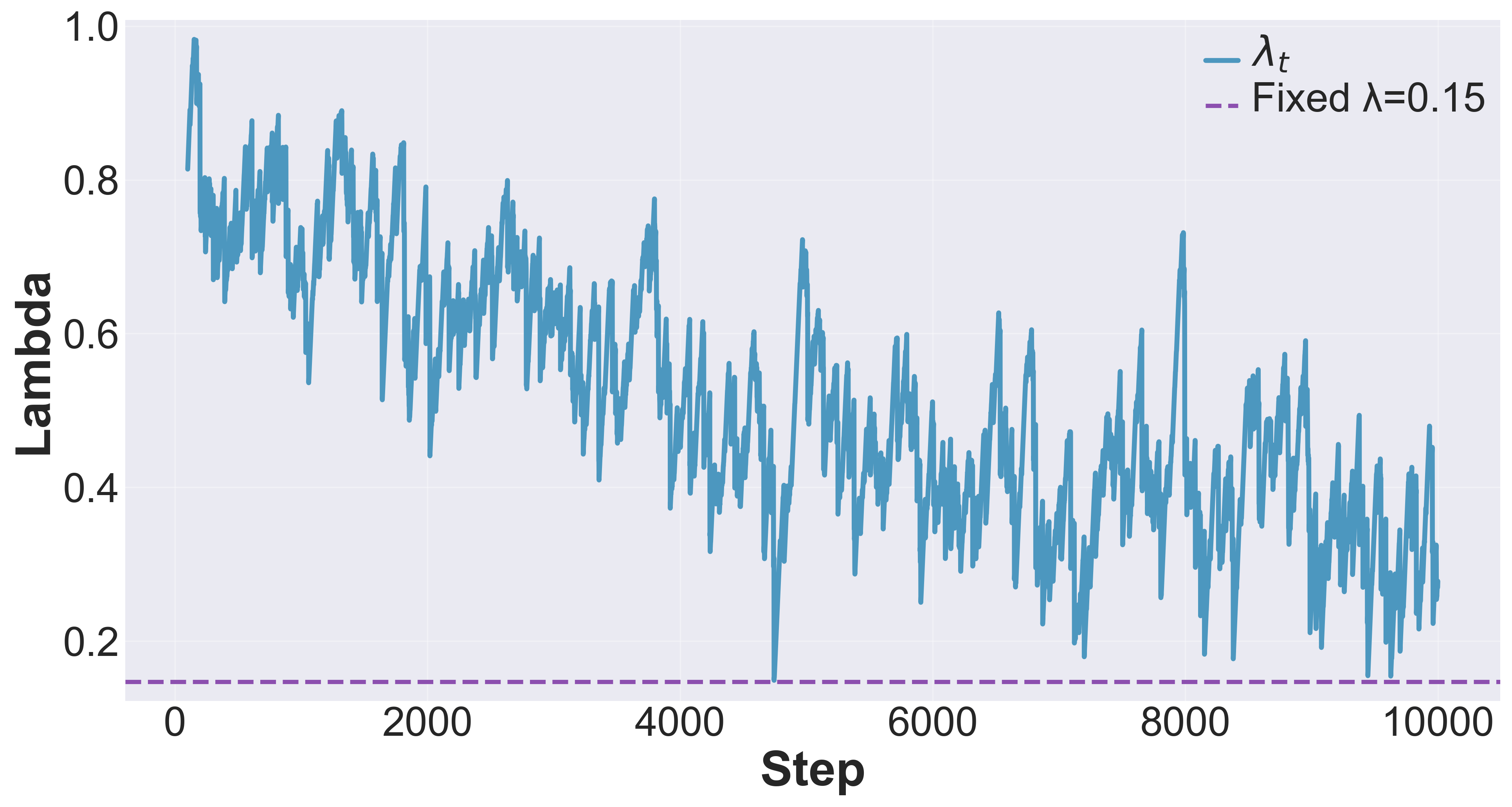}
        \caption{Evolution of $\lambda_t$ ($\beta = 0.80$)}
        \label{fig:adversarial_jump_lambda_0.8}
    \end{subfigure}
    
    \vspace{0.5em}

    \begin{subfigure}[t]{0.48\textwidth}
        \centering
        \includegraphics[width=\linewidth]{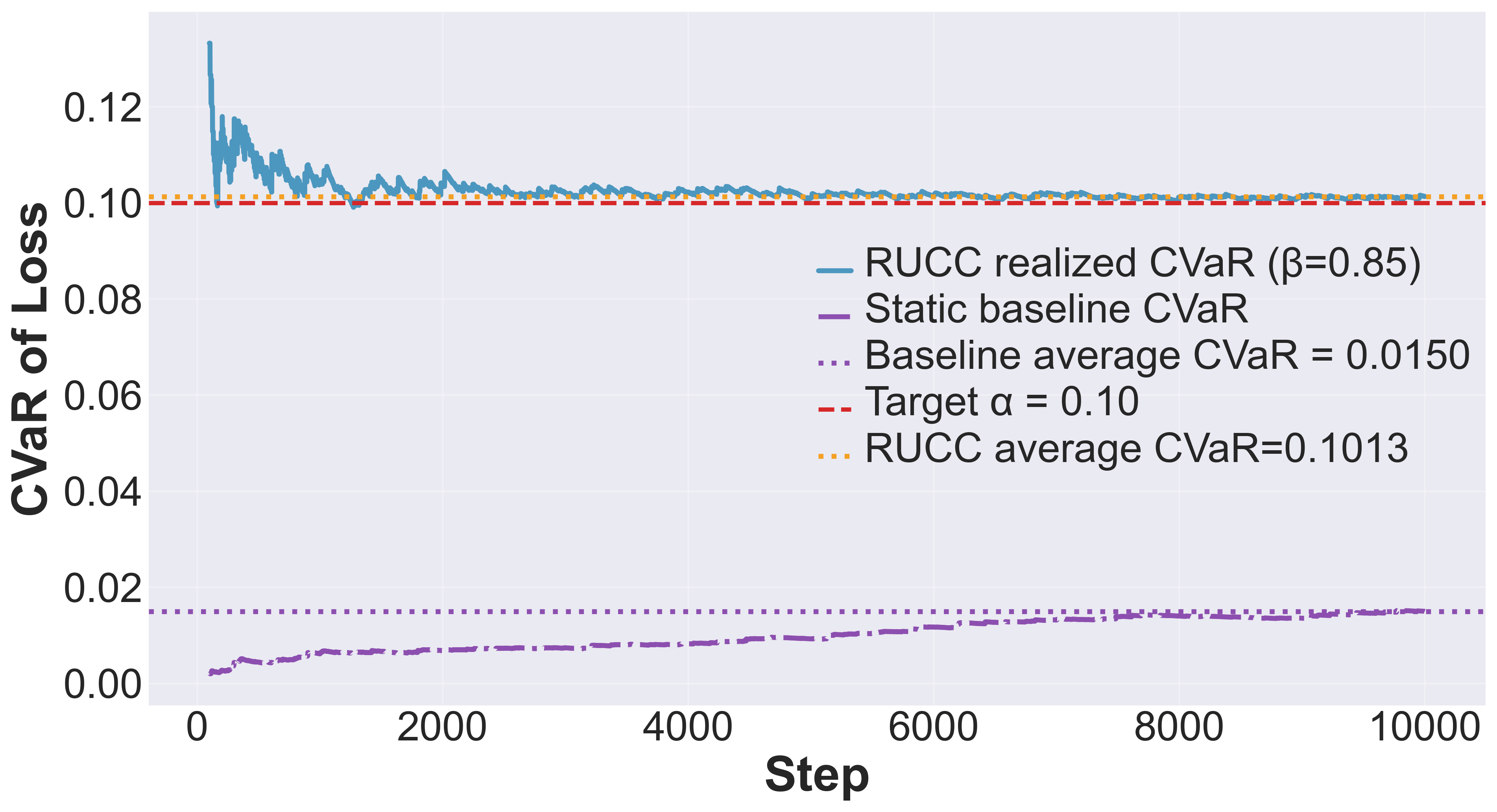}
        \caption{Realized $\operatorname{CVaR}_\beta$ ($\beta = 0.85$)}
        \label{fig:adversarial_jump_0.85}
    \end{subfigure}
    \hfill
    \begin{subfigure}[t]{0.48\textwidth}
        \centering
        \includegraphics[width=\linewidth]{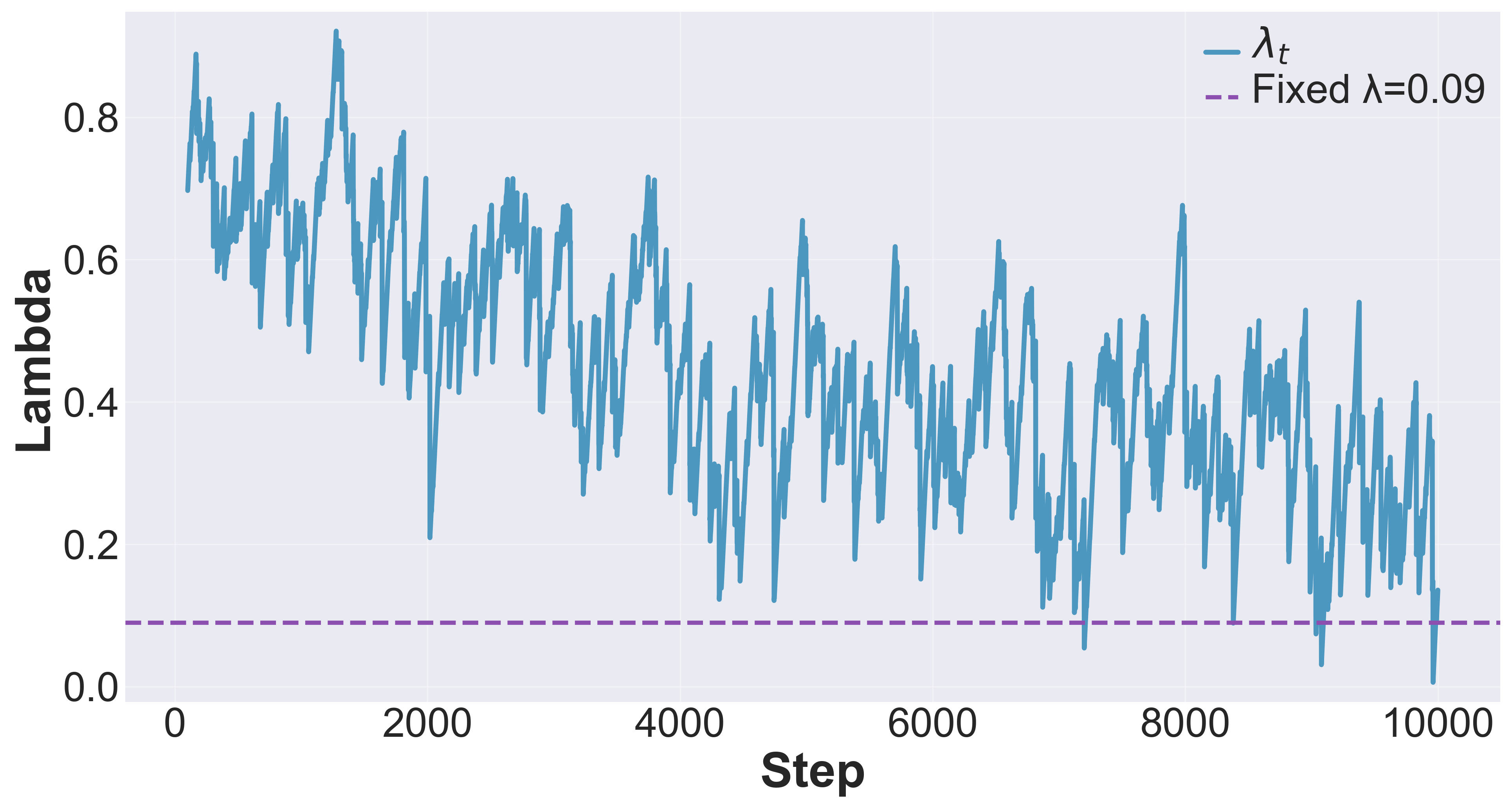}
        \caption{Evolution of $\lambda_t$ ($\beta = 0.85$)}
        \label{fig:adversarial_jump_lambda_0.85}
    \end{subfigure}
    
    \vspace{0.5em}

    \begin{subfigure}[t]{0.48\textwidth}
        \centering
        \includegraphics[width=\linewidth]{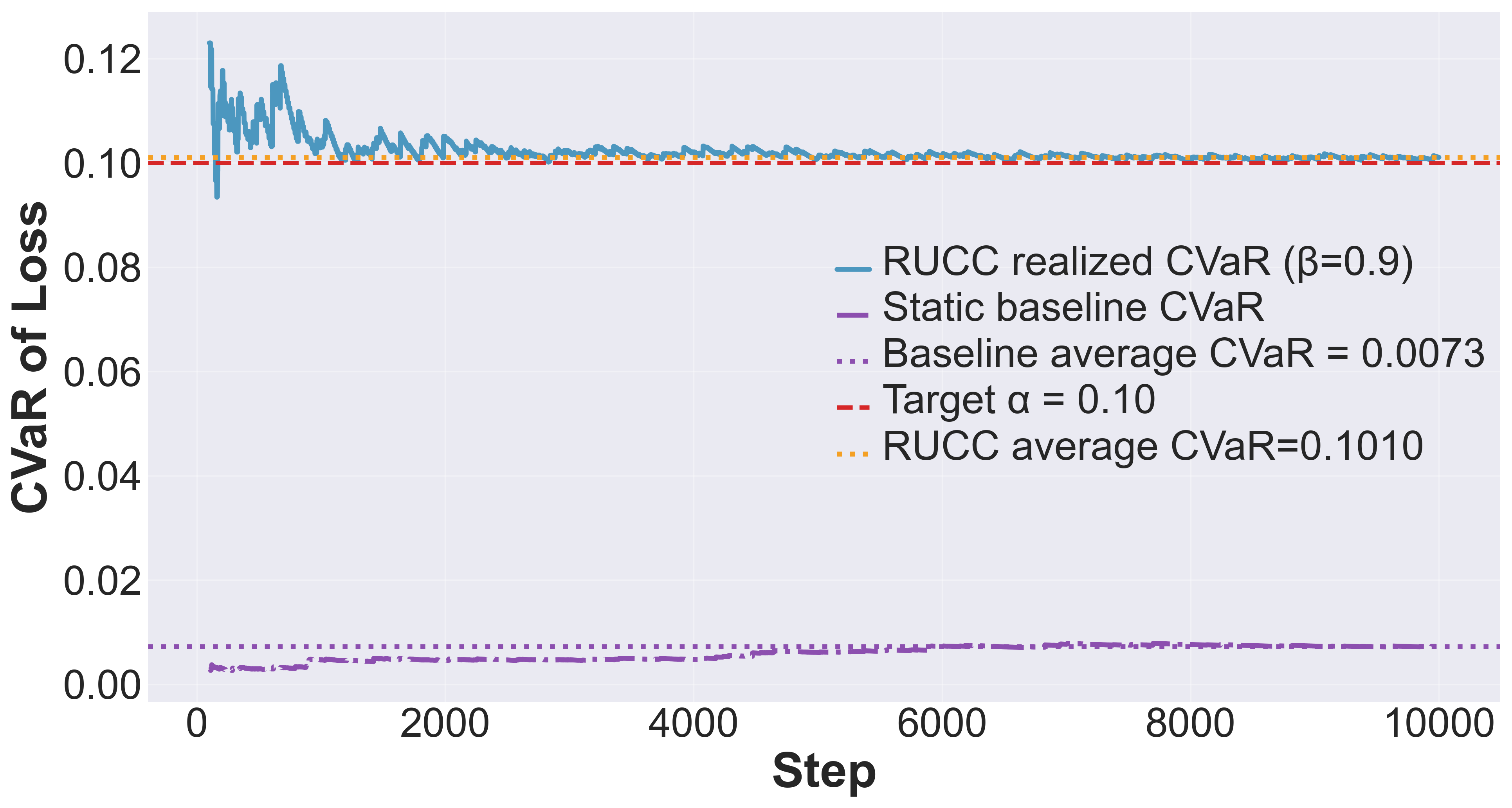}
        \caption{Realized $\operatorname{CVaR}_\beta$ ($\beta = 0.90$)}
        \label{fig:adversarial_jump_0.9}
    \end{subfigure}
    \hfill
    \begin{subfigure}[t]{0.48\textwidth}
        \centering
        \includegraphics[width=\linewidth]{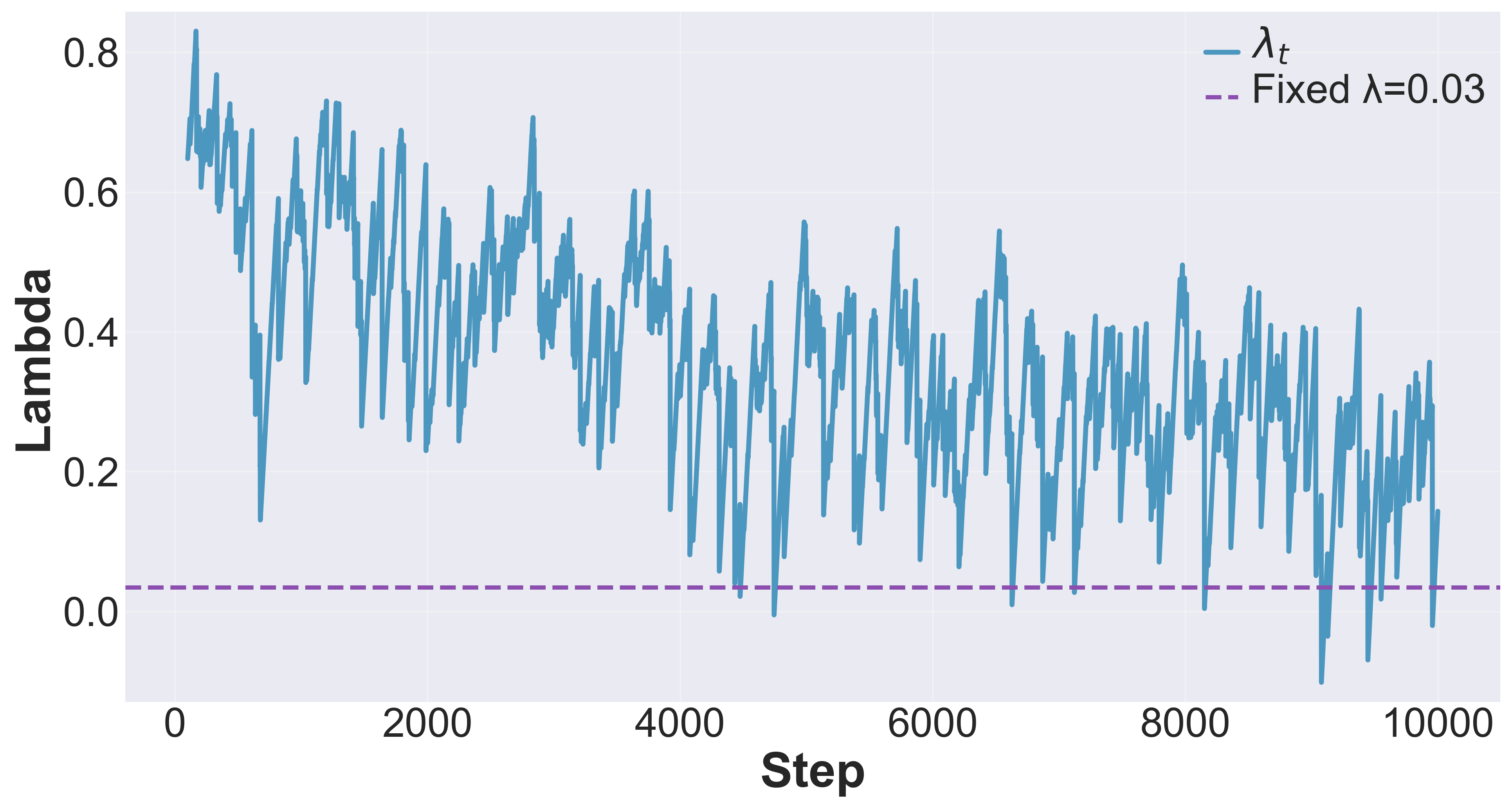}
        \caption{Evolution of $\lambda_t$ ($\beta = 0.90$)}
        \label{fig:adversarial_jump_lambda_0.9}
    \end{subfigure}    
    \caption{\textbf{LLM Toxicity Control Experiment.} Realized empirical $\operatorname{CVaR}_\beta$ and evolution of $\lambda_t$ with 
     $\gamma = 0.05$, and target $\alpha = 0.1$ for the adversarial-jump regime}
    \label{fig:llm_adversarial_jump}
    
\end{figure*}
    
    
    
    

\paragraph{Evolution of the empirical $\operatorname{CVaR}_\beta$.}
The left panel of Figures~\ref{fig:llm_uniform}, ~\ref{fig:llm_adversarial}, and ~\ref{fig:llm_adversarial_jump} show the evolution of the realized empirical $\operatorname{CVaR}_\beta$ under the uniform,  adversarial, and adversarial-jump regimes, respectively.

In the \emph{uniform} setting (Figure~\ref{fig:llm_uniform}), all curves exhibit a clear decay toward the target level $\alpha = 0.1$. Higher $\beta$ values (more risk-averse tail control) start from larger initial $\operatorname{CVaR}_\beta$ values but converge at a comparable rate. After sufficient time, the realized $\operatorname{CVaR}_\beta$ for all $\beta$ stabilizes tightly around the target level, demonstrating that the method can achieve {tight, non-conservative control}.

In the \emph{adversarial} setting (Figure~\ref{fig:llm_adversarial}), and \emph{adversarial-jump} setting (Figure~\ref{fig:llm_adversarial_jump}), the controller adapts and drives the realized $\operatorname{CVaR}_\beta$ downward for all $\beta$, eventually stabilizing near the target level. The convergence is slower and exhibits more fluctuation than in the uniform case, reflecting the intrinsic difficulty of controlling tail risk under adversarial distribution shift.

In all three settings, the RUCC controller is able to maintain tail risk control through adaptive $\lambda_t$ updates while avoiding overconservative behavior--which is demonstrated by the baseline realized $\operatorname{CVaR}_\beta$ that selects a static $\lambda_t$.

\begin{remark}(Effect of the tail level $\beta$.)
    Across both regimes, larger $\beta$ values (corresponding to more extreme tail sensitivity) lead to systematically higher $\operatorname{CVaR}_\beta$ trajectories, especially in the early stages. However, for all values of $\beta$, the empirical $\operatorname{CVaR}_\beta$ eventually achieves stable control near $\alpha$.
\end{remark}

\paragraph{Evolution of the control parameter $\lambda_t$.}
The right panel of Figures~\ref{fig:llm_uniform}, ~\ref{fig:llm_adversarial}, and ~\ref{fig:llm_adversarial_jump} show the evolution of the control parameter $\lambda_t$ in the uniform, adversarial, and adversarial-jump regimes, respectively.

In the \emph{uniform} setting (Figure~\ref{fig:llm_uniform}), $\lambda_t$ evolves within a comparatively constrained band, indicating a relatively stable environment. It can also be observed that the range of the band depends on $\beta$ in an increasing manner, demonstrating the inherent increase in difficulty of controlling risks at higher tails. 

In the \emph{adversarial} setting (Figure~\ref{fig:llm_adversarial}) and the \emph{adversarial-jump} setting (Figure~\ref{fig:llm_adversarial_jump}), $\lambda_t$ exhibits a clear decreasing trend over time for all $\beta$. Initially, $\lambda_t$ remains large, corresponding to less aggressive filtering. As the controller accumulates evidence and adapts to the realized tail distribution, $\lambda_t$ is gradually reduced, reflecting an increasingly strict control of the risky region. Larger $\beta$ values lead to systematically smaller $\lambda_t$, as expected for more stringent tail-risk control. Furthermore, in the \emph{adversarial-jump} setting (Figure~\ref{fig:llm_adversarial_jump}), in addition to the decreasing trend in $\lambda_t$, there are sporadic spikes in $\lambda_t$ that correspond to the dips in $a_t$ of the Beta sampling distribution, and the analogous dips in $\lambda_t$ that correspond to the jumps in $a_t$, demonstrating the robustness of the RUCC controller and its ability to adapt to sudden distributional changes, as well as general distribution drifts.

\clearpage 
\section{Additional Results for Experiment 2: Portfolio Management} \label{sec:portfolio}

We evaluate the proposed RU Conformal $\operatorname{CVaR}$ controller on various horizons of real-market portfolio management tasks covering the period 1991--2001, 2009--2018, and the full period 1991--2026. We set the target risk level to $\alpha = 0.01$, and report results for $\beta \in \{0.75, 0.8, 0.85, 0.9\}$, with $\gamma = 0.05$, and remove the first $100$ points as burn-in.

\paragraph{Evolution of the rolling empirical $\operatorname{CVaR}_\beta$.}
The left panel of Figures~\ref{fig:portfolio_1994_2000}, ~\ref{fig:portfolio_2008_2018}, and ~\ref{fig:portfolio_1994_2026} show the evolution of the 180-day rolling realized empirical $\operatorname{CVaR}_\beta$ for different $\beta$ during 1991--2001, 2009--2018, and the full period 1991--2025, respectively. 

During 1991--2001, we observe that the $\operatorname{CVaR}_\beta$ of the baseline controller grows over time with a static $\lambda_t$ that eventually exceeds the target level. This phenomenon can be explained by the fact that 1991--2001 is an economically stable period with steady growth that ended with the dot-com crash in 2001. Accordingly, as the baseline controller has a fixed $\lambda_t$, its associated realized $\operatorname{CVaR}_\beta$ grows beyond the target level over time. In contrast, the realized tail risk of the RUCC controller initially starts below the target level and steadily increases over time as the controller adapts, and moves tightly around the target level as time passes.

During 2009--2018, we observe that the $\operatorname{CVaR}_\beta$ of the baseline controller decreases overtime with a static $\lambda_t$ that eventually falls well below the target level. This observation can be explained by the fact that 2009--2018 is the post-financial crisis period in which the economy is slowly improving over time (with decreasing risks). As the baseline controller has a fixed $\lambda_t$, its associated realized $\operatorname{CVaR}_\beta$ falls below the target level over time. In contrast, the realized tail risk of the RUCC controller initially starts below the target level (as a response to the financial crisis) and steadily increases over time as the controller adapts to the improving economic environment, to be close to the target tail risk level.

Finally, by observing the full-period 1991--2025, we can highlight the robust adaptability of the RUCC controller. While the realized tail risk of the baseline controller rises and falls drastically in response to the changing economic environment, including dramatic rises during the 2008 financial crisis and post-COVID period, the tail risk associated with the RUCC controller moves tightly around the target level with moderate fluctuations during the same economically tumultuous times.

In all cases, the final realized tail risk values are close to the target level (approximately $0.01$), demonstrating that the method achieves tight long-run tail risk control without persistent conservatism. Higher $\beta$ values (corresponding to more extreme tail control) exhibit slightly slower convergence, as expected, but still reach the target level within the time horizon.

\paragraph{Evolution of the control parameter $\lambda_t$.}
The right panel of Figures~\ref{fig:portfolio_1994_2000}, ~\ref{fig:portfolio_2008_2018}, and ~\ref{fig:portfolio_1994_2026} shows the evolution of the control parameter $\lambda_t$ for different $\beta$. 

During 1991--2001, we observe that the RUCC $\lambda_t$ grows then falls over time, indicating that the controller first increases then reduces exposure to the risky asset as tail risks accumulate. The initial increase in $\lambda_t$ is consistent with the economic growth during the period, the later decrease in $\lambda_t$  is consistent with the dot-com crash in 2000, that requires more stringent risk management to control the tail risk at the target level. On the other hand, the baseline fixed $\lambda$ stays at a comparatively low level causing it to be overly conservative.

In contrast, during 2009--2018, we observe that the RUCC $\lambda_t$ grows over time, indicating that the controller increases exposure to the risky asset as tail risks decrease. The growth in $\lambda_t$ over time is consistent with the growing stability of the economy after the 2008 financial crisis, allowing for more lenient risk management to control the tail risk at the desired target level. Conversely, the baseline fixed $\lambda$ stays at a consistently low level due to the financial crisis, and fails to adapt to the changing economic environment.

Finally, during 1991--2025, we observe that the RUCC $\lambda_t$ grows and falls over time, where $\lambda_t$ peaks during economically stable periods, such as pre-dot-com crash, pre-financial crisis, and pre-COVID. The larger values of $\lambda_t$ encourage investment in the risky asset while controlling the tail risk at the target level. On the other hand, $\lambda_t$ accordingly drops during economically tumultuous periods, such as post-dot-com crash, post-financial crisis, and post-COVID periods. The smaller values of $\lambda_t$ allow for less aggressive investment in the risky asset to avoid the burden of tail risks.

Note that in every time period for larger $\beta$, $\lambda_t$ shrinks accordingly as the algorithm must be more conservative in order to control more extreme tail events.

Overall, these results demonstrate that our proposed RU conformal inference controller successfully enforces long-run $\operatorname{CVaR}_\beta$ control on real financial data over a 25-year horizon, including multiple crisis periods (e.g., the dot-com crash, 2008 financial crisis, and COVID-19).

\begin{figure*}[t]
    \centering
    
    \begin{subfigure}[t]{0.48\textwidth}
        \centering
        \includegraphics[width=\linewidth]{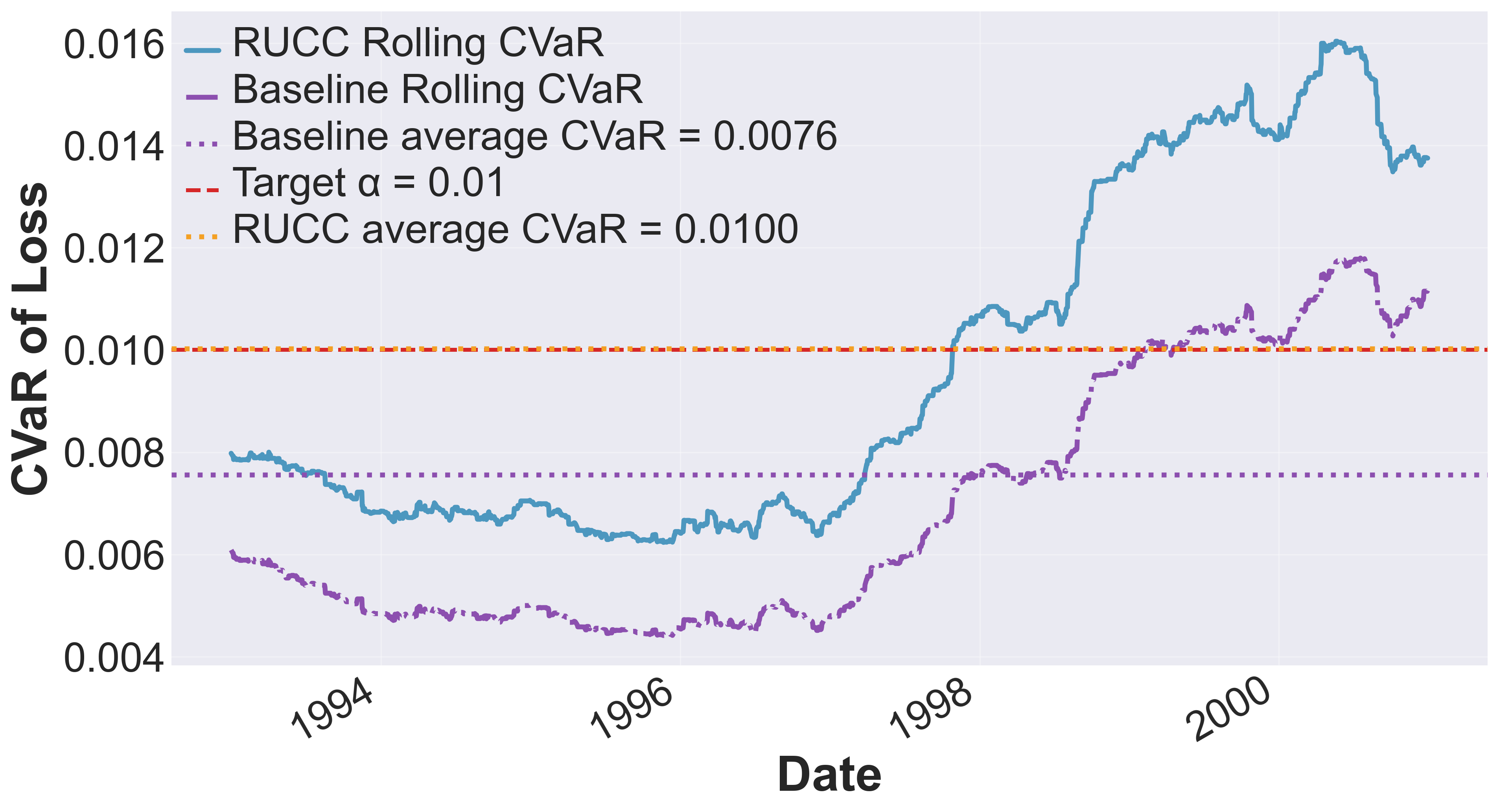}
        \caption{Realized $\operatorname{CVaR}_\beta$ ($\beta = 0.75$)}
        \label{fig:portfolio_cvar_adversarial_1994_2000_0.75}
    \end{subfigure}
    \hfill
    \begin{subfigure}[t]{0.48\textwidth}
        \centering
        \includegraphics[width=\linewidth]{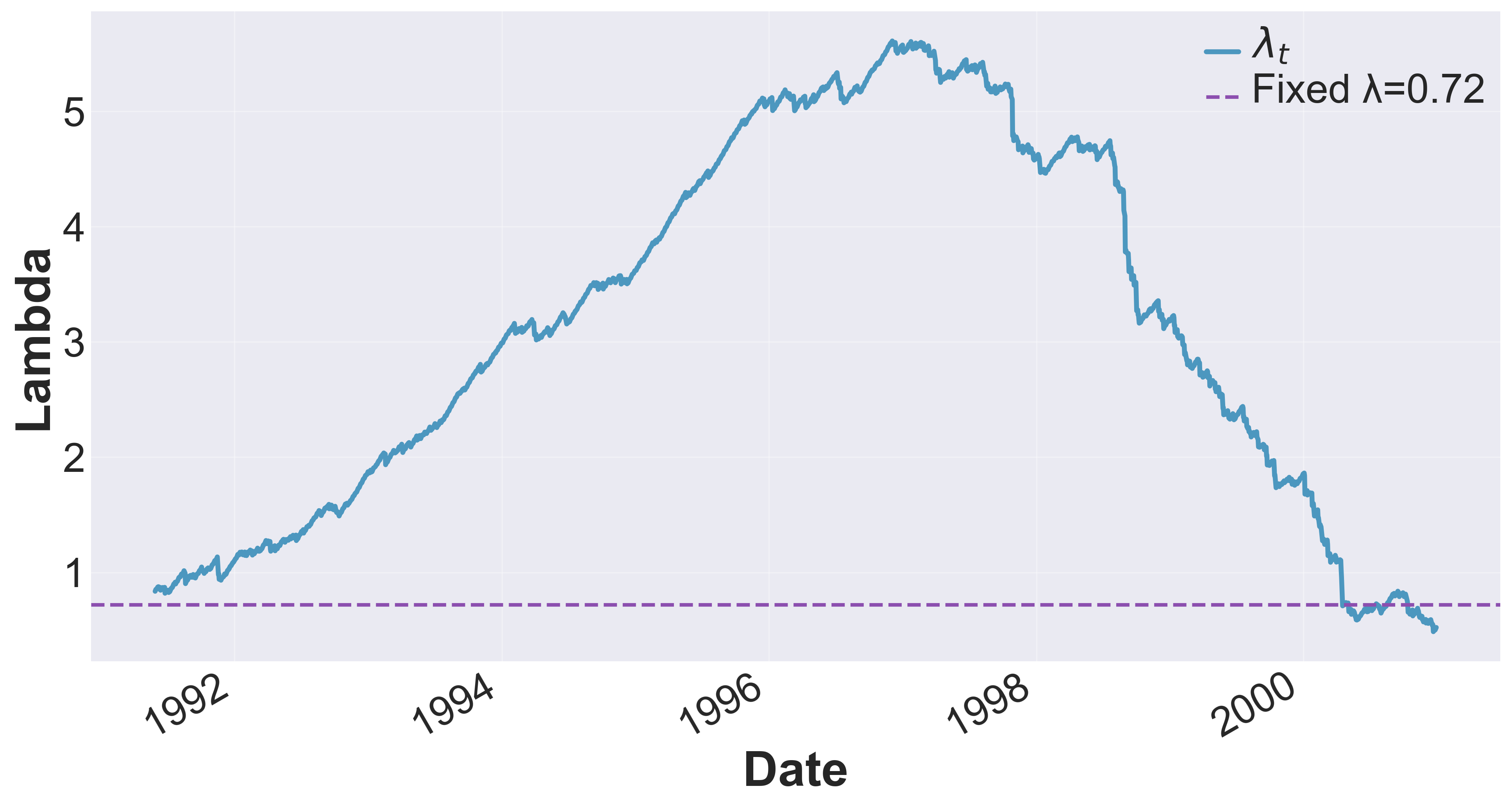}
        \caption{Evolution of $\lambda_t$ ($\beta = 0.75$)}
        \label{fig:portfolio_lambda_adversarial_1994_2000_0.75}
    \end{subfigure}
    
    \vspace{0.5em}
    
    \begin{subfigure}[t]{0.48\textwidth}
        \centering
        \includegraphics[width=\linewidth]{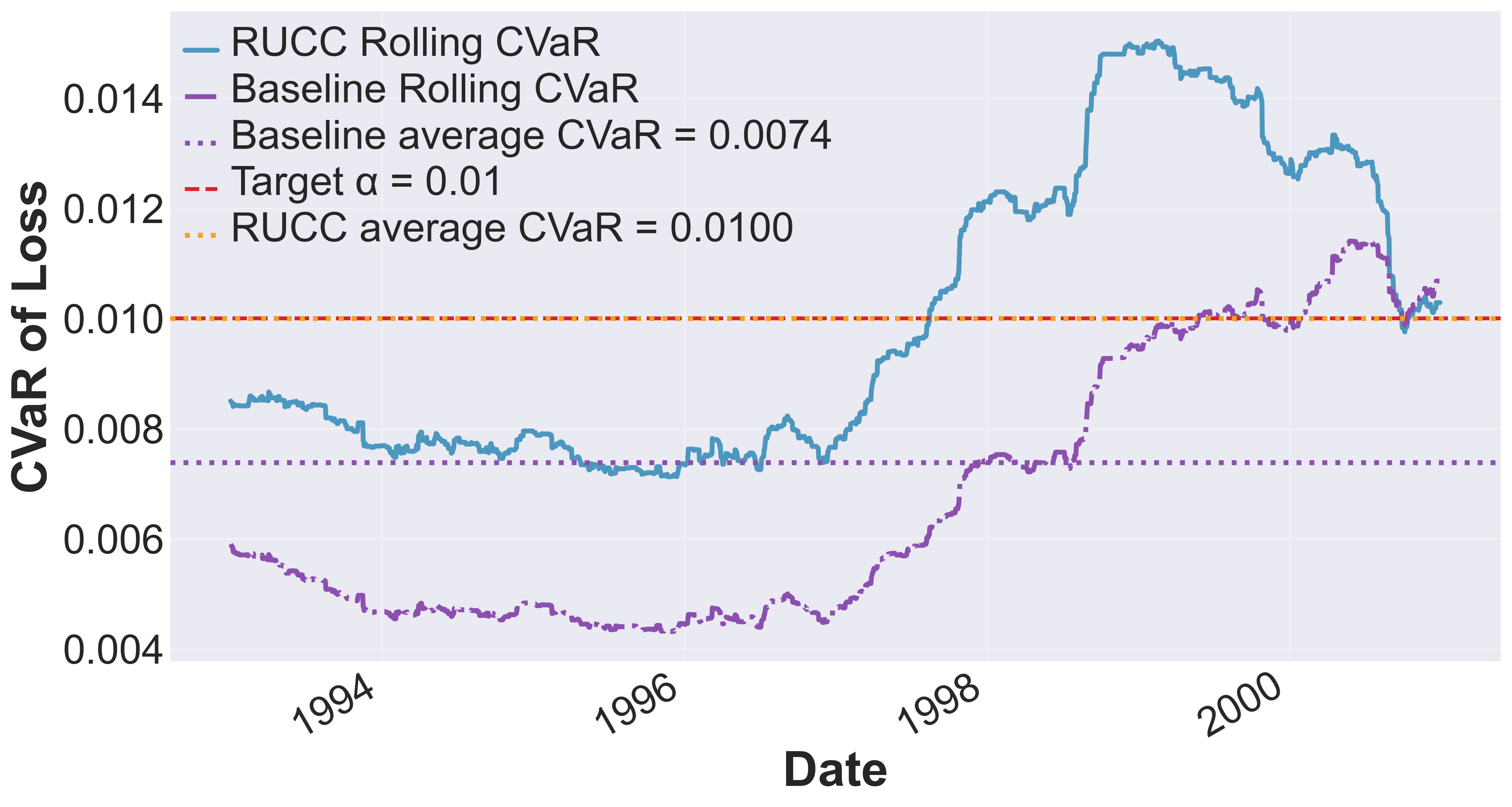}
        \caption{Realized $\operatorname{CVaR}_\beta$ ($\beta = 0.80$)}
        \label{fig:portfolio_cvar_uniform_1994_2000_0.8}
    \end{subfigure}
    \hfill
    \begin{subfigure}[t]{0.48\textwidth}
        \centering
        \includegraphics[width=\linewidth]{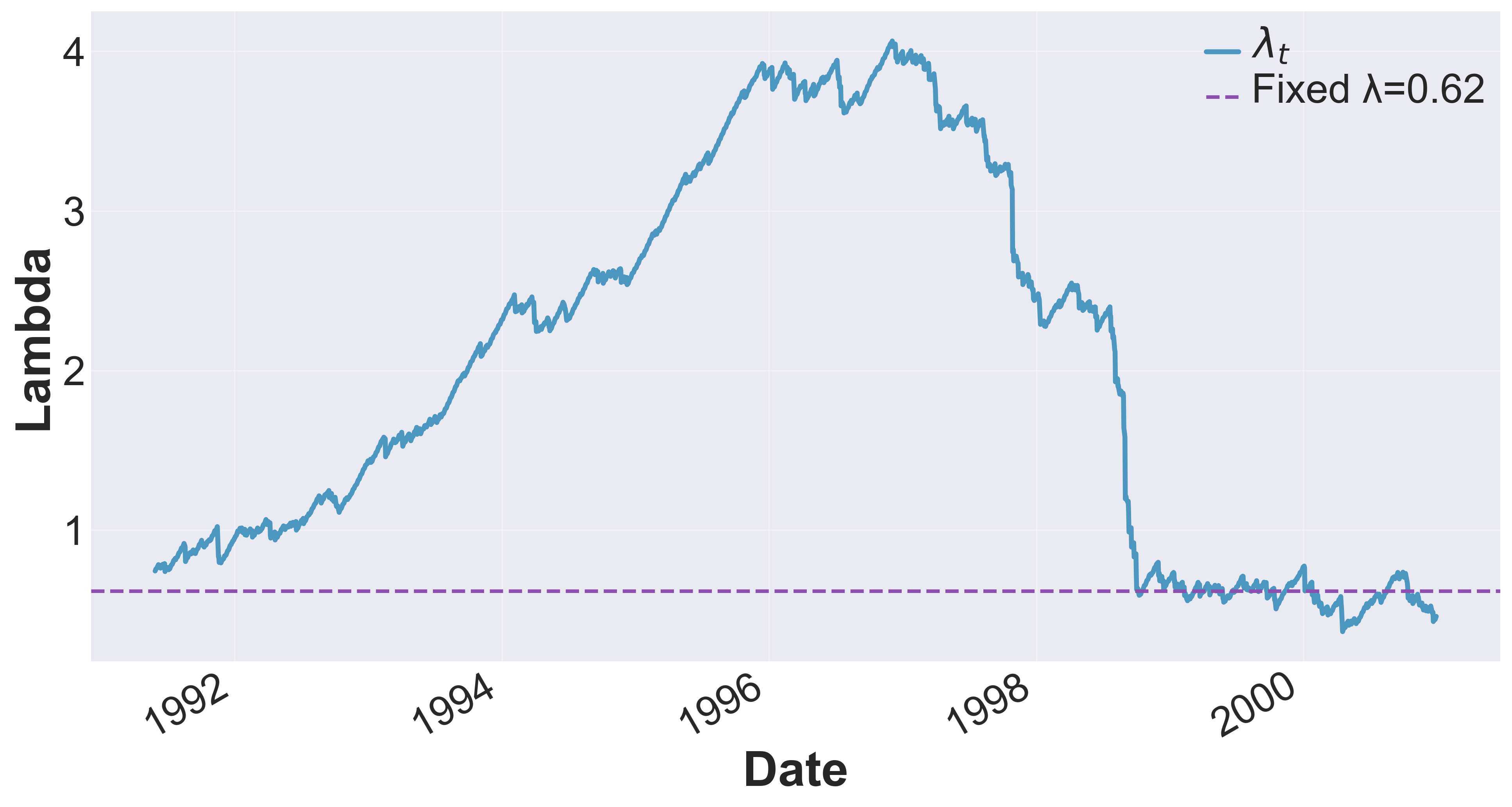}
        \caption{Evolution of $\lambda_t$ ($\beta = 0.80$)}
        \label{fig:portfolio_lambda_uniform_1994_2000_0.8}
    \end{subfigure}

    \begin{subfigure}[t]{0.48\textwidth}
        \centering
        \includegraphics[width=\linewidth]{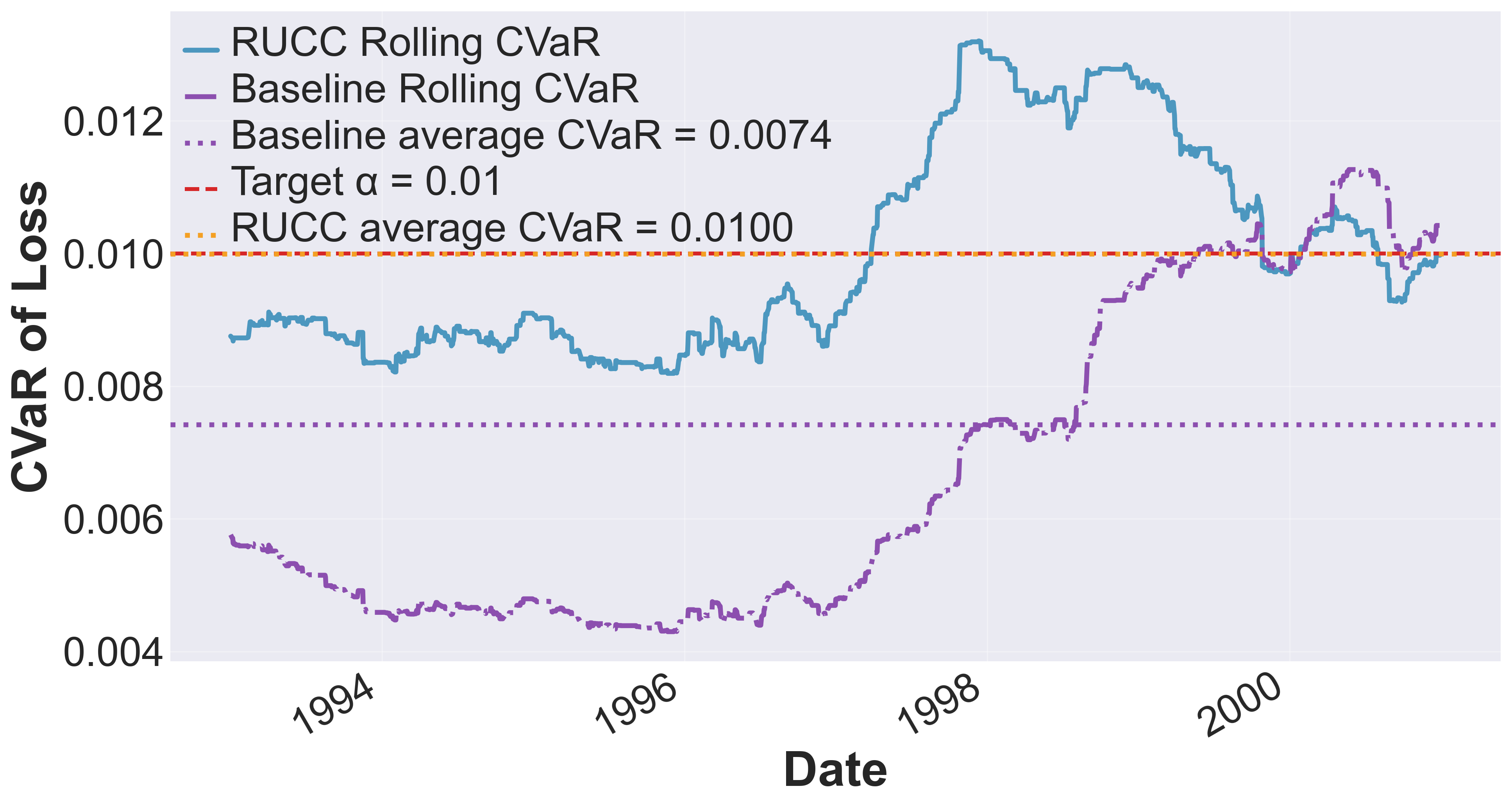}
        \caption{Realized $\operatorname{CVaR}_\beta$ ($\beta = 0.85$)}
        \label{fig:portfolio_cvar_adversarial_1994_2000_0.85}
    \end{subfigure}
    \hfill
    \begin{subfigure}[t]{0.48\textwidth}
        \centering
        \includegraphics[width=\linewidth]{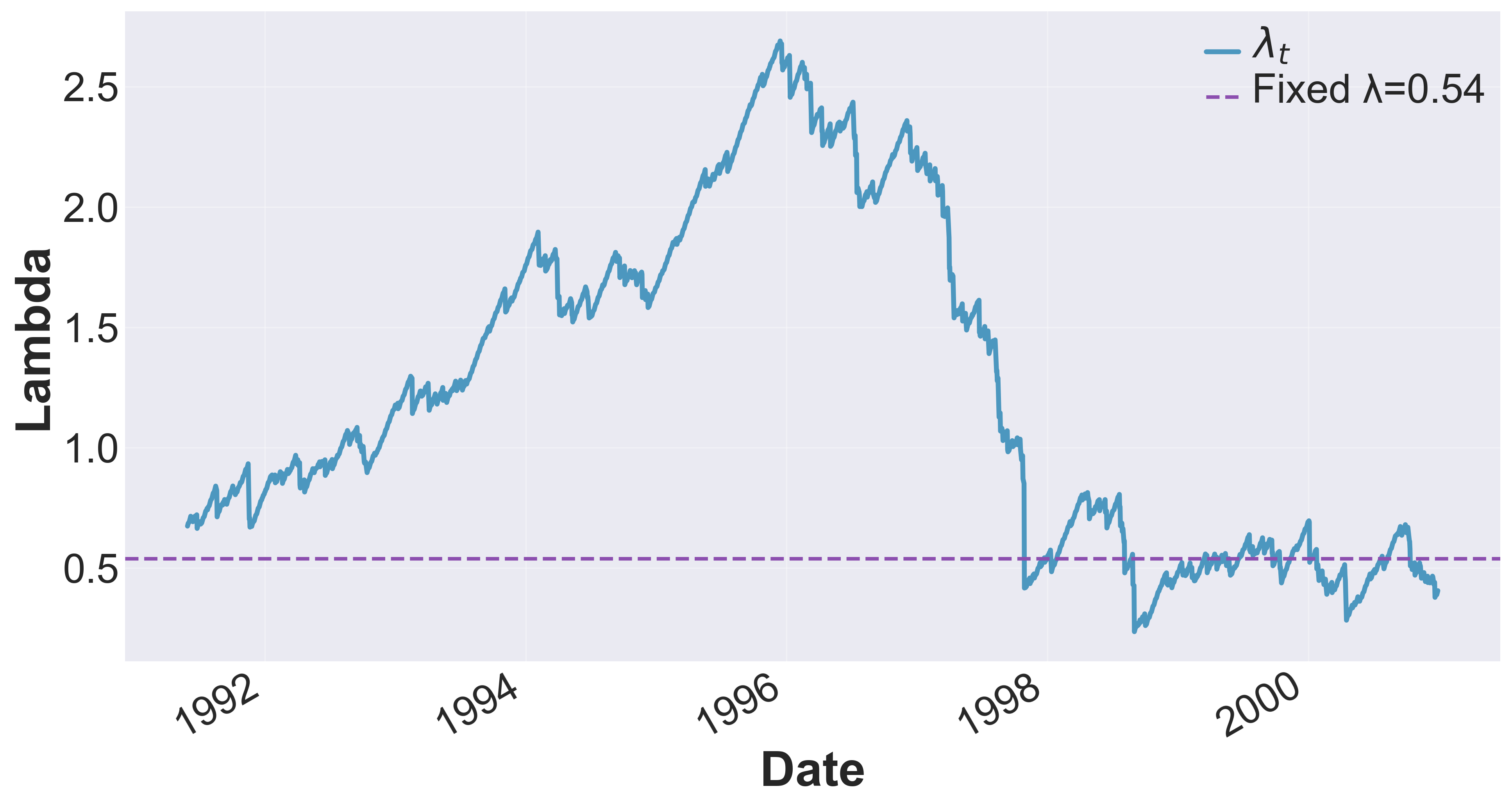}
        \caption{Evolution of $\lambda_t$ ($\beta = 0.85$)}
        \label{fig:portfolio_lambda_adversarial_1994_2000_0.85}
    \end{subfigure}
    
    \vspace{0.5em}
    
    \begin{subfigure}[t]{0.48\textwidth}
        \centering
        \includegraphics[width=\linewidth]{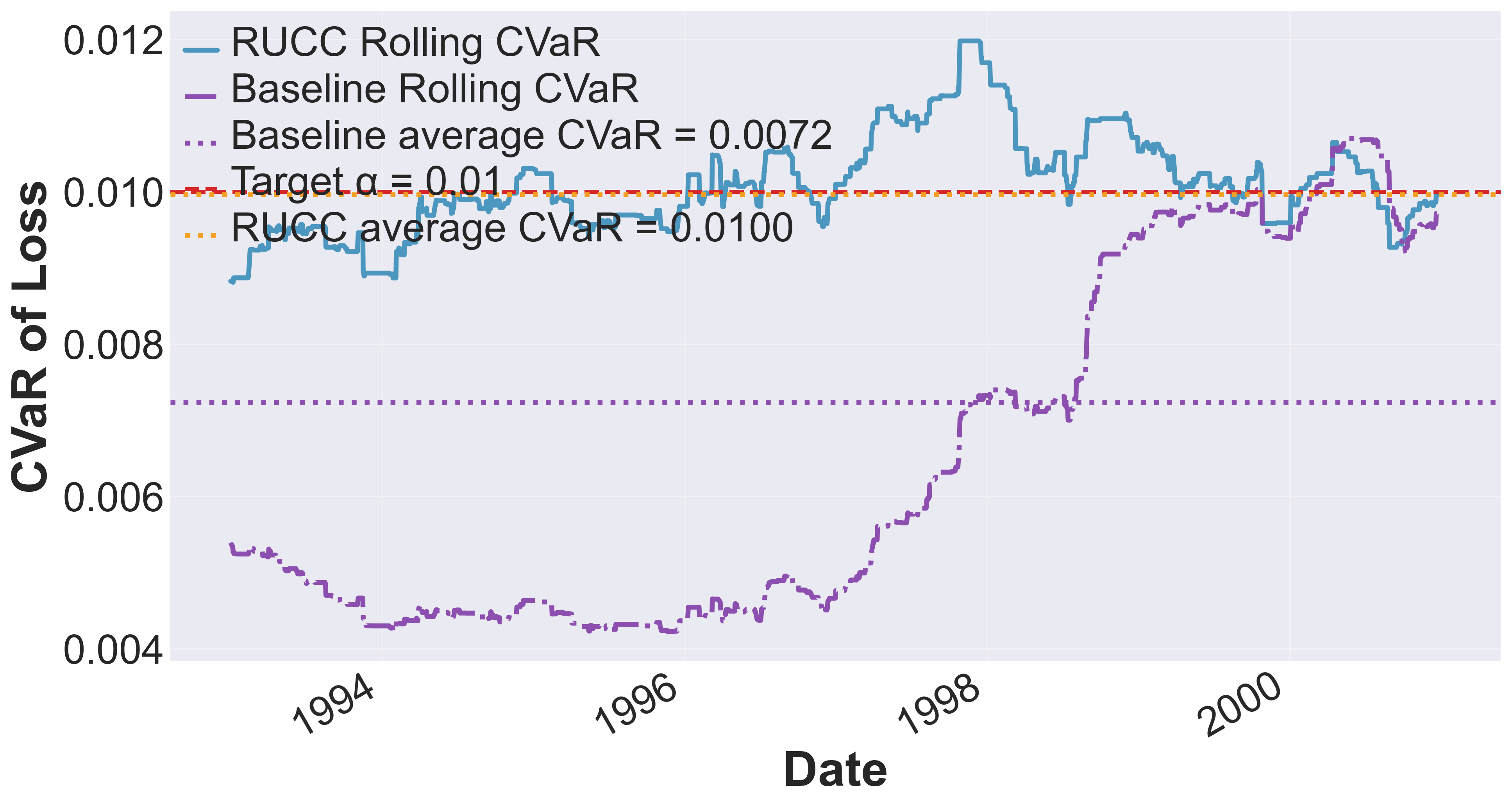}
        \caption{Realized $\operatorname{CVaR}_\beta$ ($\beta = 0.90$)}
        \label{fig:portfolio_cvar_uniform_1994_2000_0.9}
    \end{subfigure}
    \hfill
    \begin{subfigure}[t]{0.48\textwidth}
        \centering
        \includegraphics[width=\linewidth]{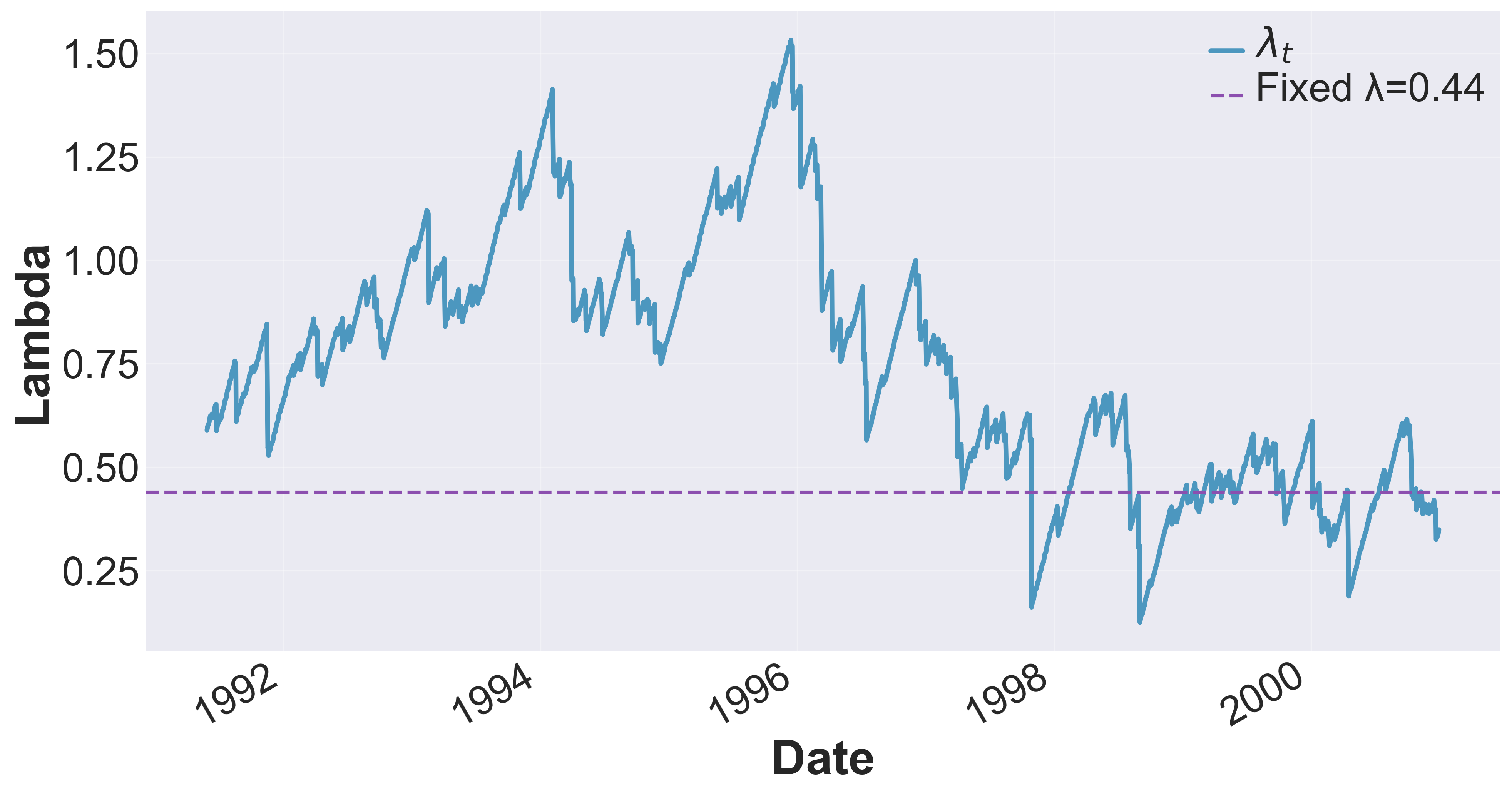}
        \caption{Evolution of $\lambda_t$ ($\beta = 0.90$)}
        \label{fig:portfolio_lambda_uniform_1994_2000_0.9}
    \end{subfigure}
    
    \caption{
    \textbf{Portfolio Management Experiment.} Realized empirical $\operatorname{CVaR}_\beta$ and evolution of $\lambda_t$ for 
     $\gamma = 0.05$ with target $\alpha = 0.01$ during 1991--2000.
    }
    \label{fig:portfolio_1994_2000}
\end{figure*}

\begin{figure*}[t]
    \centering
    
    \begin{subfigure}[t]{0.48\textwidth}
        \centering
        \includegraphics[width=\linewidth]{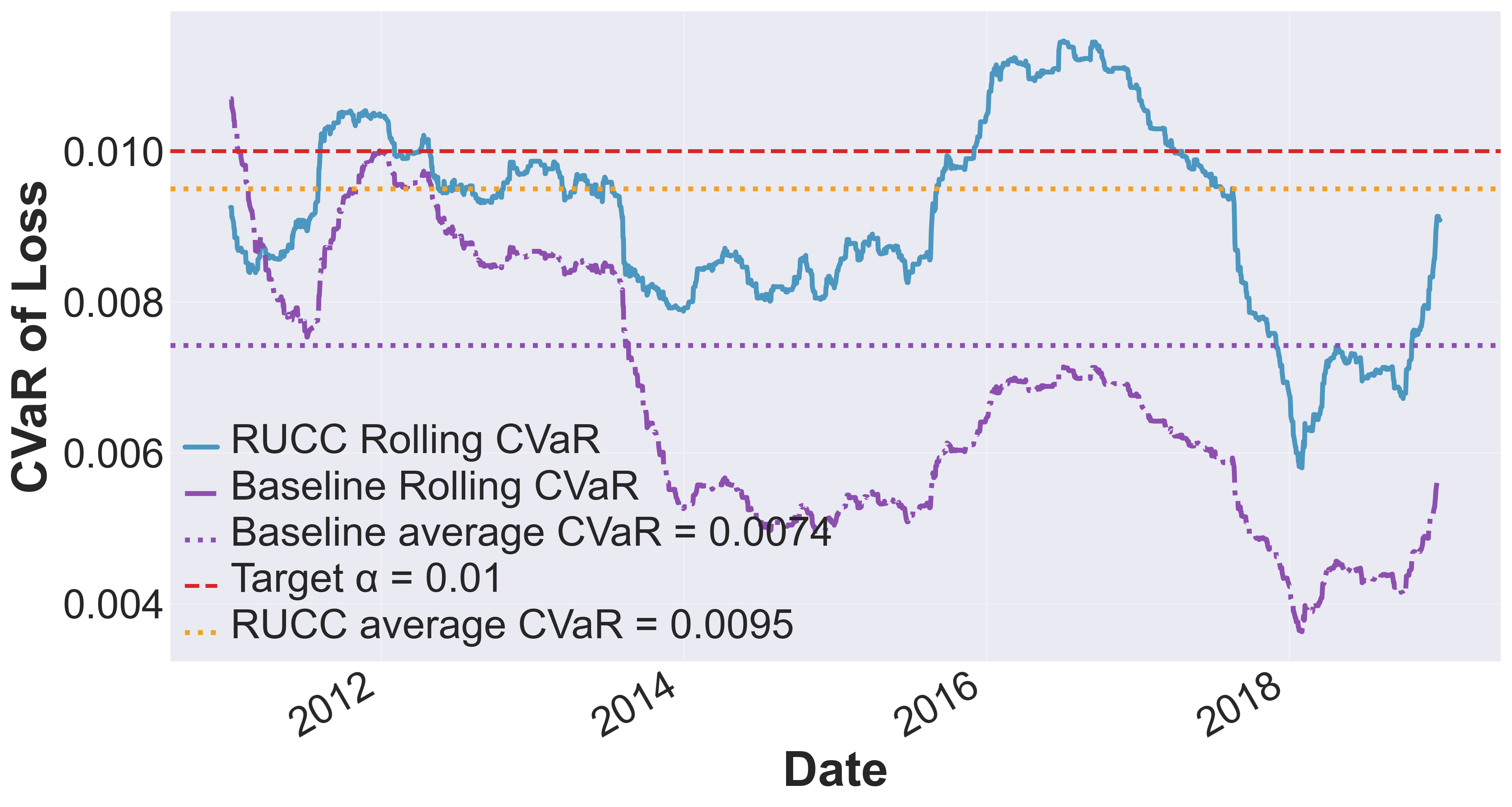}
        \caption{Realized $\operatorname{CVaR}_\beta$ ($\beta = 0.75$)}
        \label{fig:portfolio_cvar_adversarial_2008_2018_0.75}
    \end{subfigure}
    \hfill
    \begin{subfigure}[t]{0.48\textwidth}
        \centering
        \includegraphics[width=\linewidth]{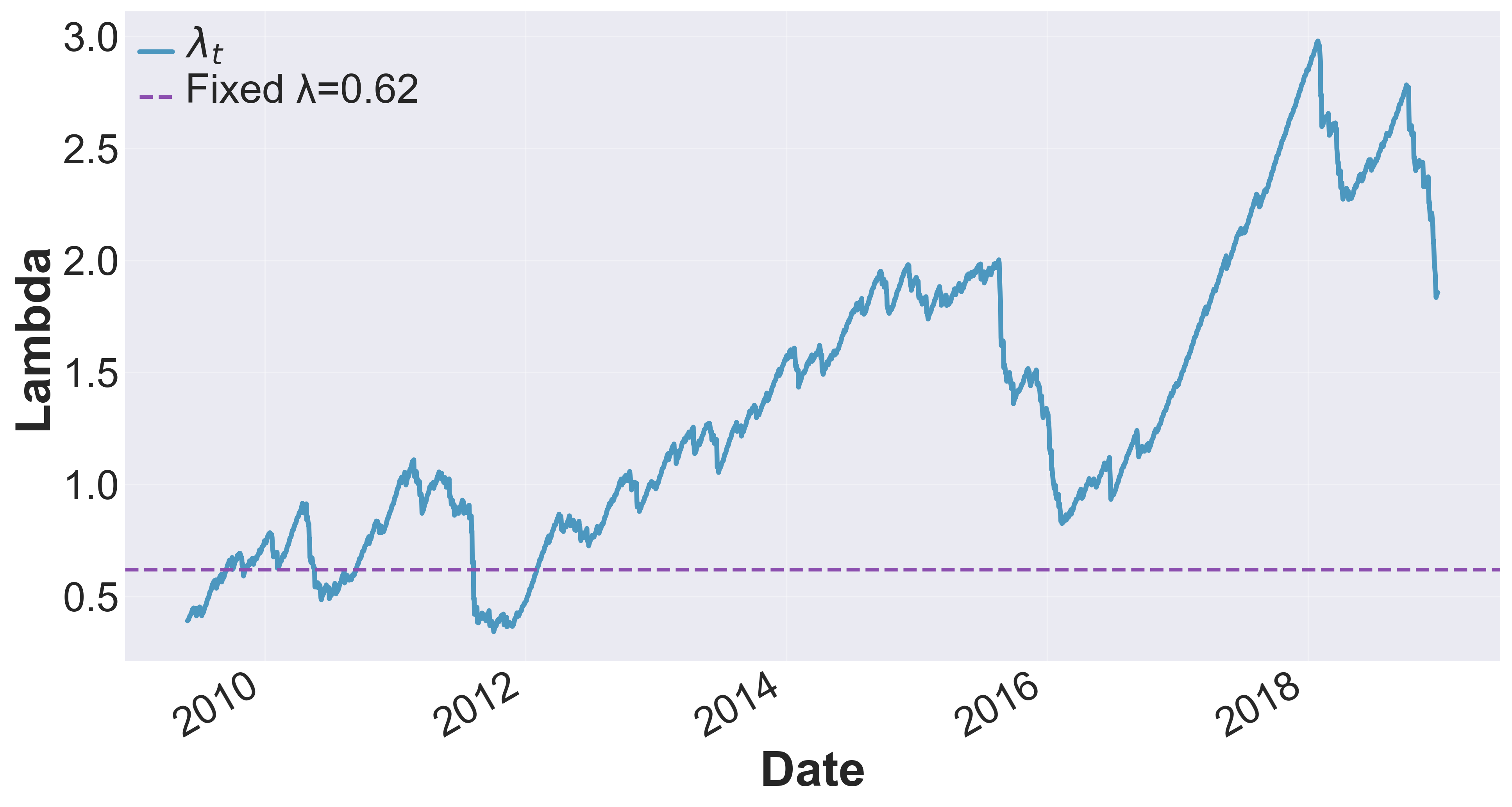}
        \caption{Evolution of $\lambda_t$ ($\beta = 0.75$)}
        \label{fig:portfolio_lambda_adversarial_2008_2018_0.75}
    \end{subfigure}
    
    \vspace{0.5em}
    
    \begin{subfigure}[t]{0.48\textwidth}
        \centering
        \includegraphics[width=\linewidth]{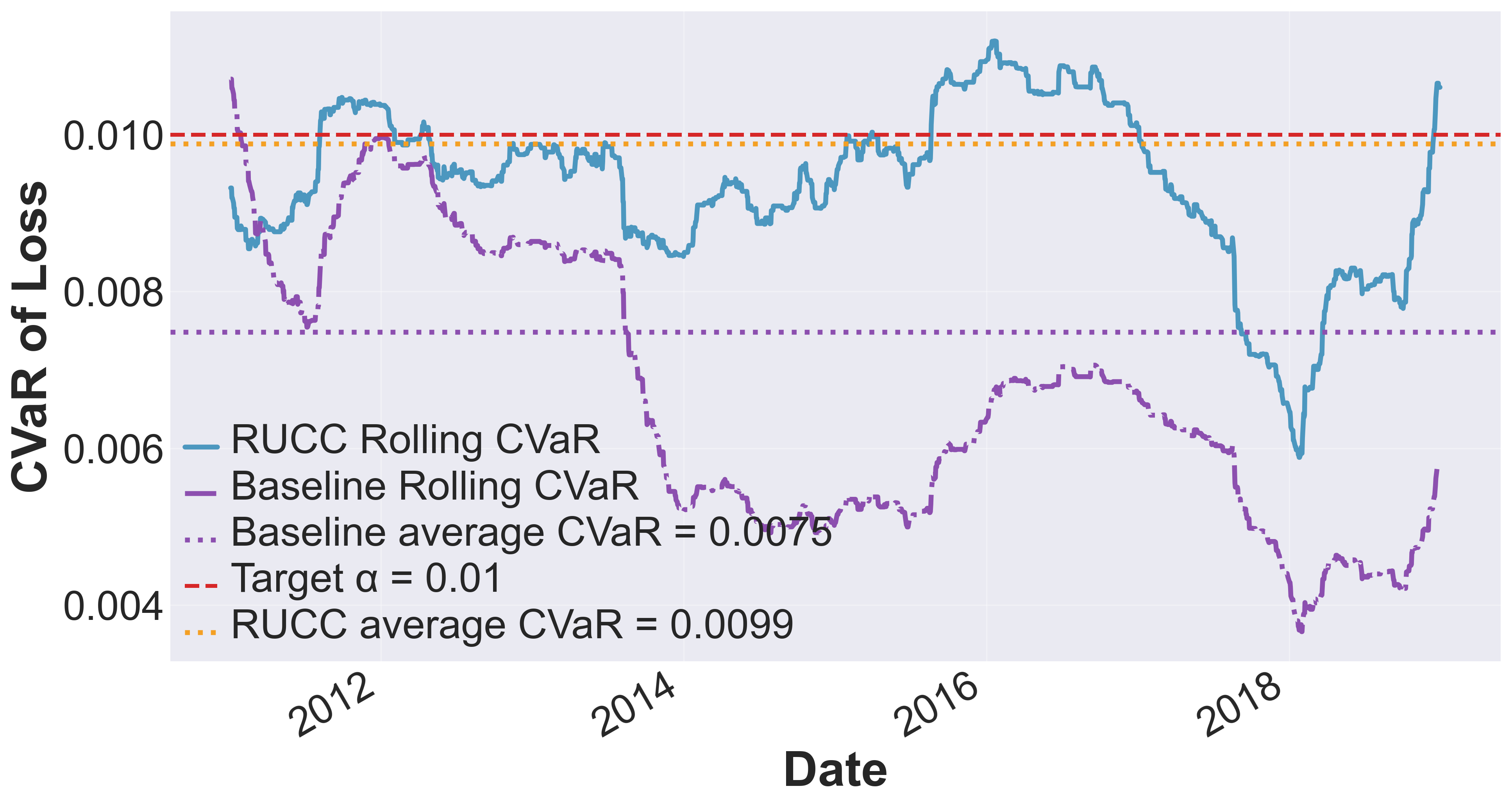}
        \caption{Realized $\operatorname{CVaR}_\beta$ ($\beta = 0.80$)}
        \label{fig:portfolio_cvar_uniform_2008_2018_0.8}
    \end{subfigure}
    \hfill
    \begin{subfigure}[t]{0.48\textwidth}
        \centering
        \includegraphics[width=\linewidth]{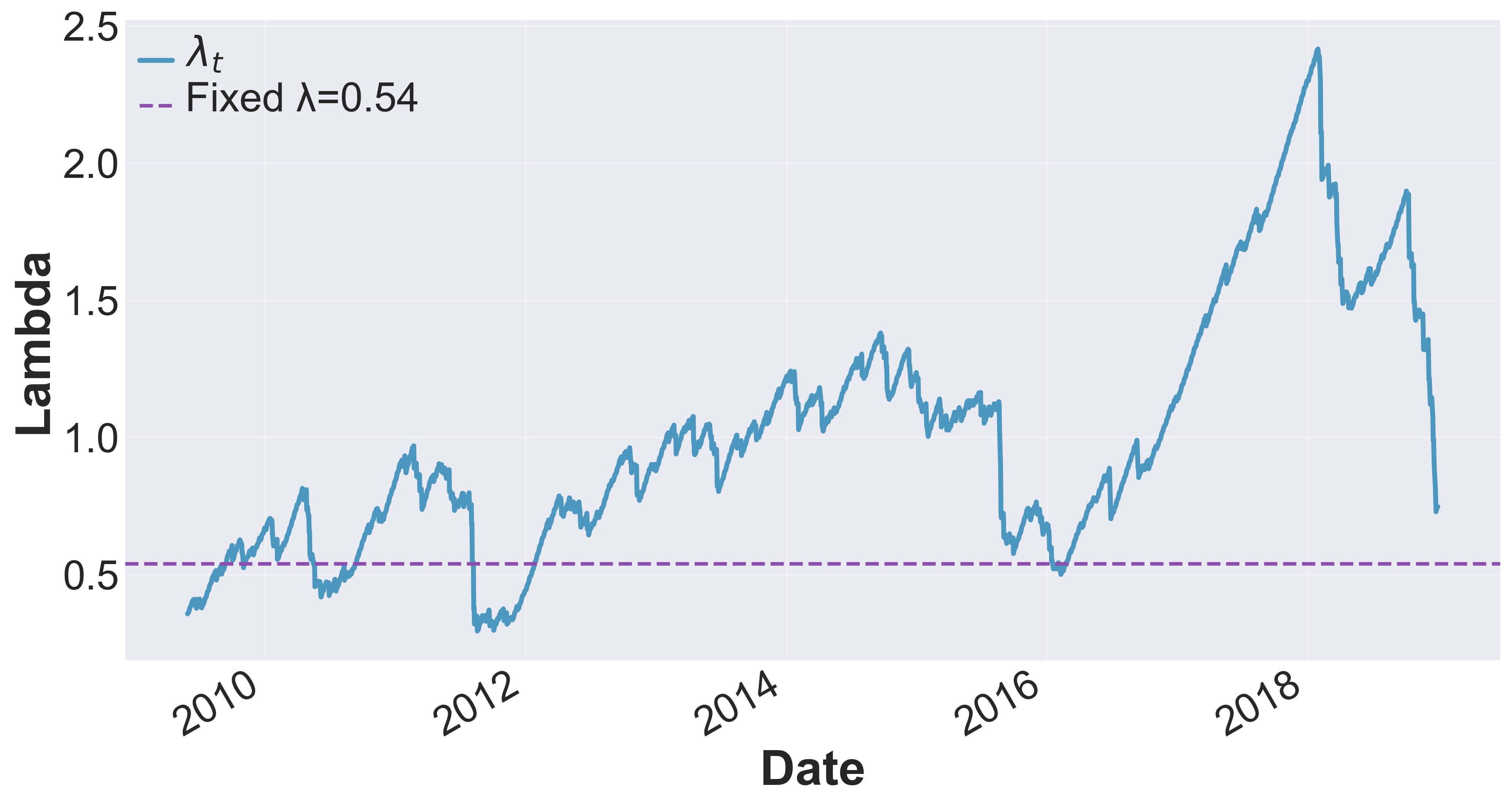}
        \caption{Evolution of $\lambda_t$ ($\beta = 0.80$)}
        \label{fig:portfolio_lambda_uniform_2008_2018_0.8}
    \end{subfigure}

    \begin{subfigure}[t]{0.48\textwidth}
        \centering
        \includegraphics[width=\linewidth]{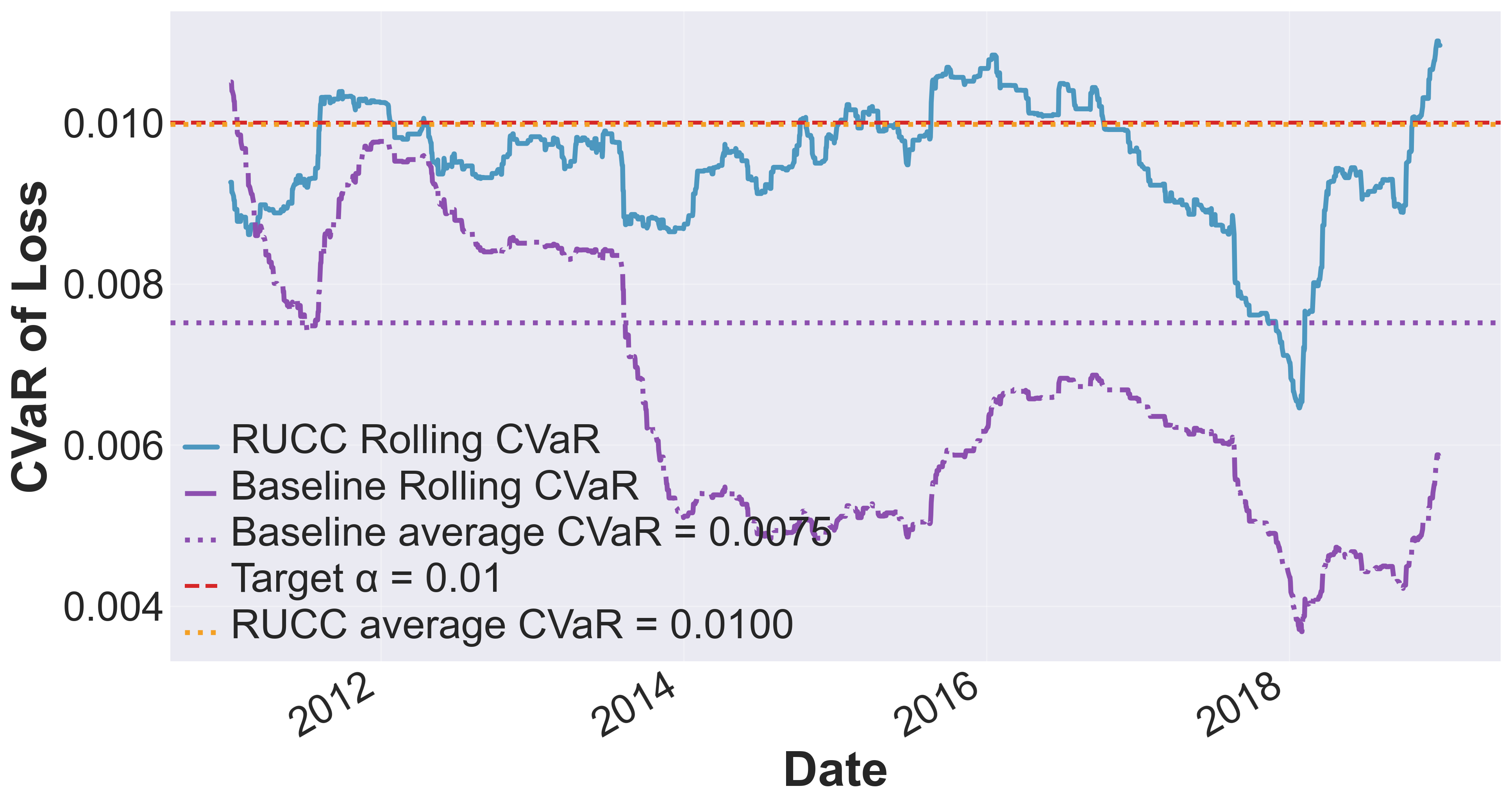}
        \caption{Realized $\operatorname{CVaR}_\beta$ ($\beta = 0.85$)}
        \label{fig:portfolio_cvar_adversarial_2008_2018_0.85}
    \end{subfigure}
    \hfill
    \begin{subfigure}[t]{0.48\textwidth}
        \centering
        \includegraphics[width=\linewidth]{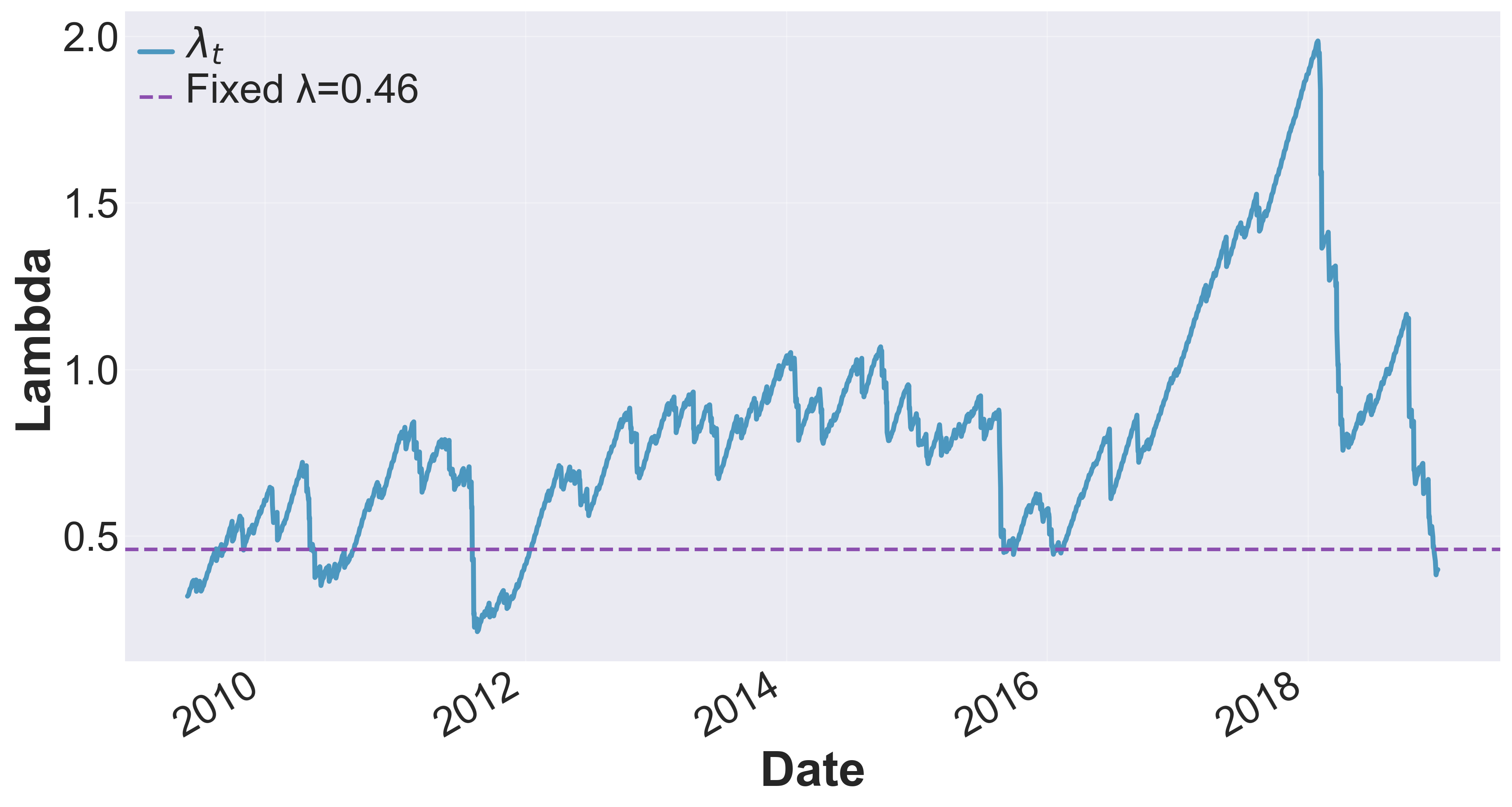}
        \caption{Evolution of $\lambda_t$ ($\beta = 0.85$)}
        \label{fig:portfolio_lambda_adversarial_2008_2018_0.85}
    \end{subfigure}
    
    \vspace{0.5em}
    
    \begin{subfigure}[t]{0.48\textwidth}
        \centering
        \includegraphics[width=\linewidth]{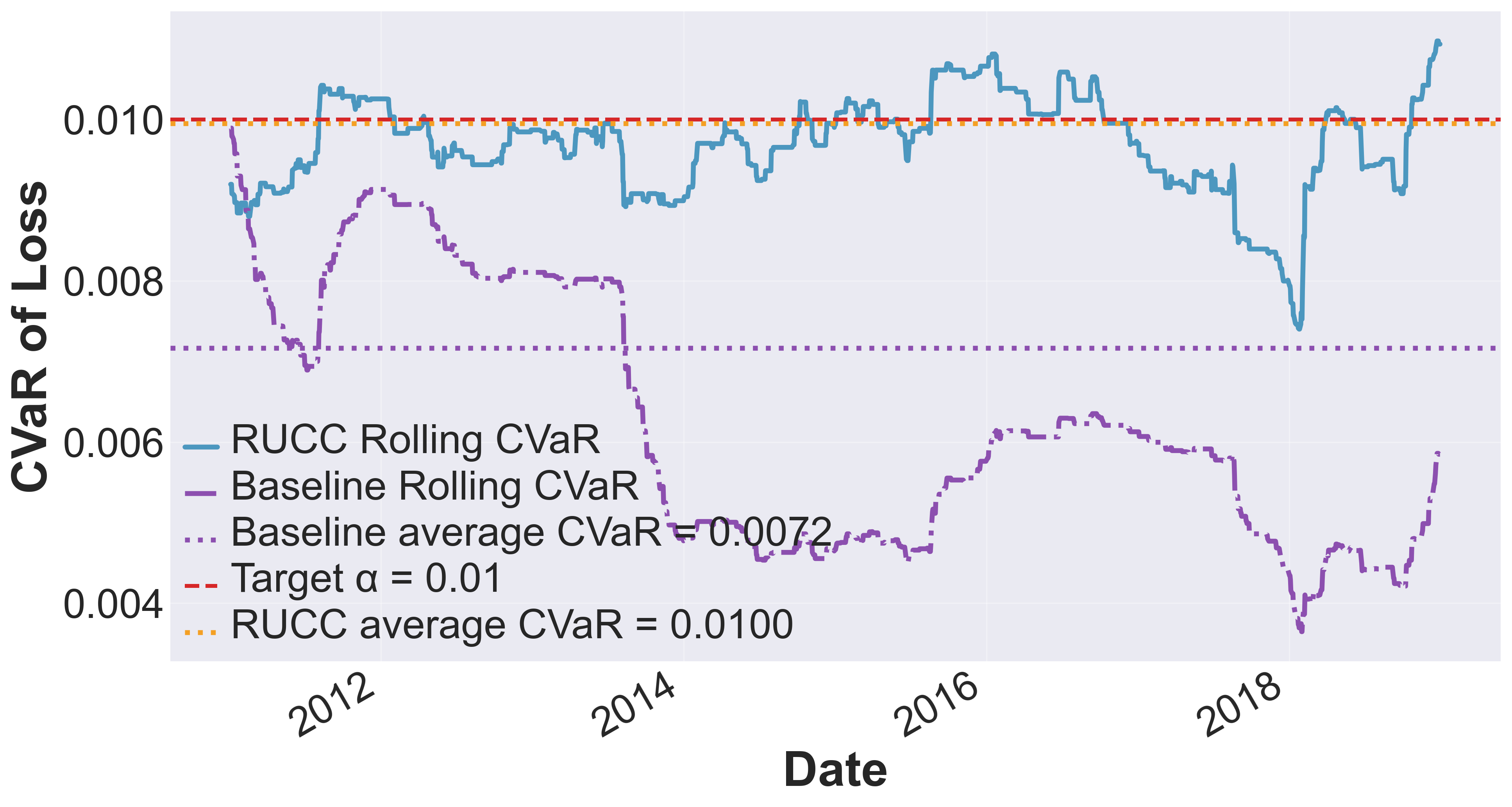}
        \caption{Realized $\operatorname{CVaR}_\beta$ ($\beta = 0.90$)}
        \label{fig:portfolio_cvar_uniform_2008_2018_0.9}
    \end{subfigure}
    \hfill
    \begin{subfigure}[t]{0.48\textwidth}
        \centering
        \includegraphics[width=\linewidth]{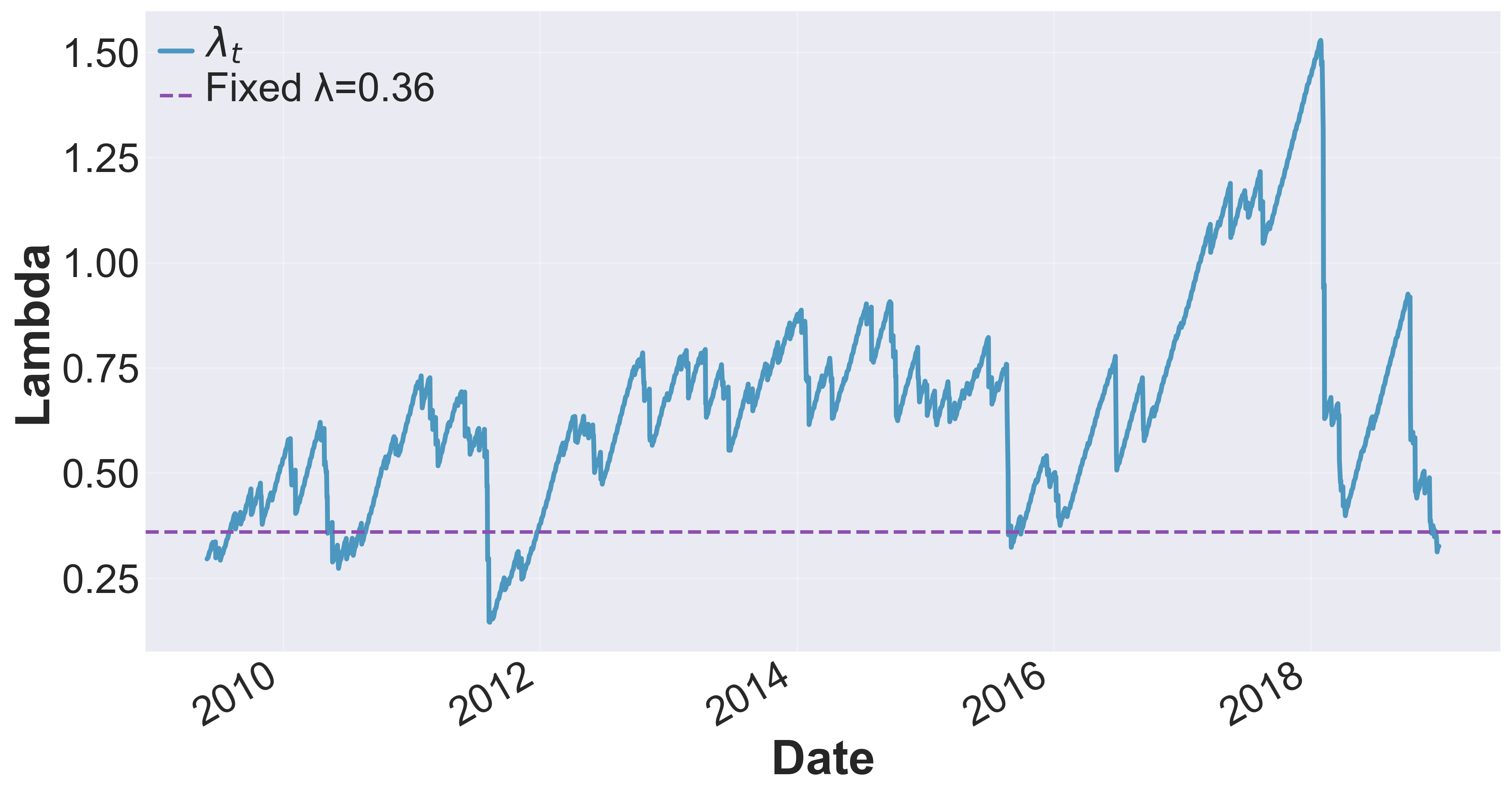}
        \caption{Evolution of $\lambda_t$ ($\beta = 0.90$)}
        \label{fig:portfolio_lambda_uniform_2008_2018_0.9}
    \end{subfigure}
    
    \caption{
    \textbf{Portfolio Management Experiment.} Realized empirical $\operatorname{CVaR}_\beta$ and evolution of $\lambda_t$ for 
     $\gamma = 0.05$ with target $\alpha = 0.01$ during 2009--2018.
    }
    \label{fig:portfolio_2008_2018}
\end{figure*}

\begin{figure*}[t]
    \centering
    
    \begin{subfigure}[t]{0.48\textwidth}
        \centering
        \includegraphics[width=\linewidth]{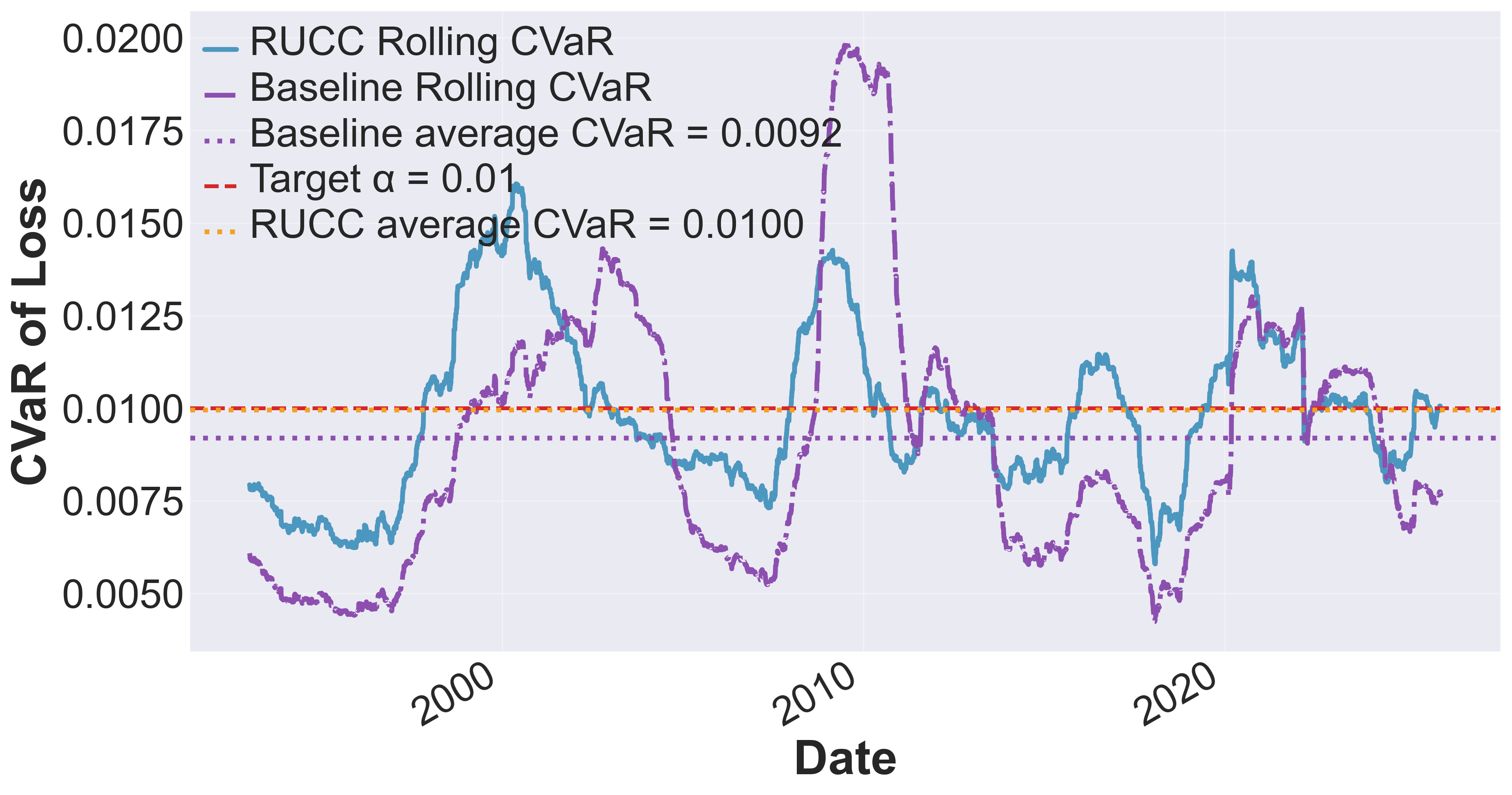}
        \caption{Realized $\operatorname{CVaR}_\beta$ ($\beta = 0.75$)}
        \label{fig:portfolio_cvar_adversarial_2025_0.75}
    \end{subfigure}
    \hfill
    \begin{subfigure}[t]{0.48\textwidth}
        \centering
        \includegraphics[width=\linewidth]{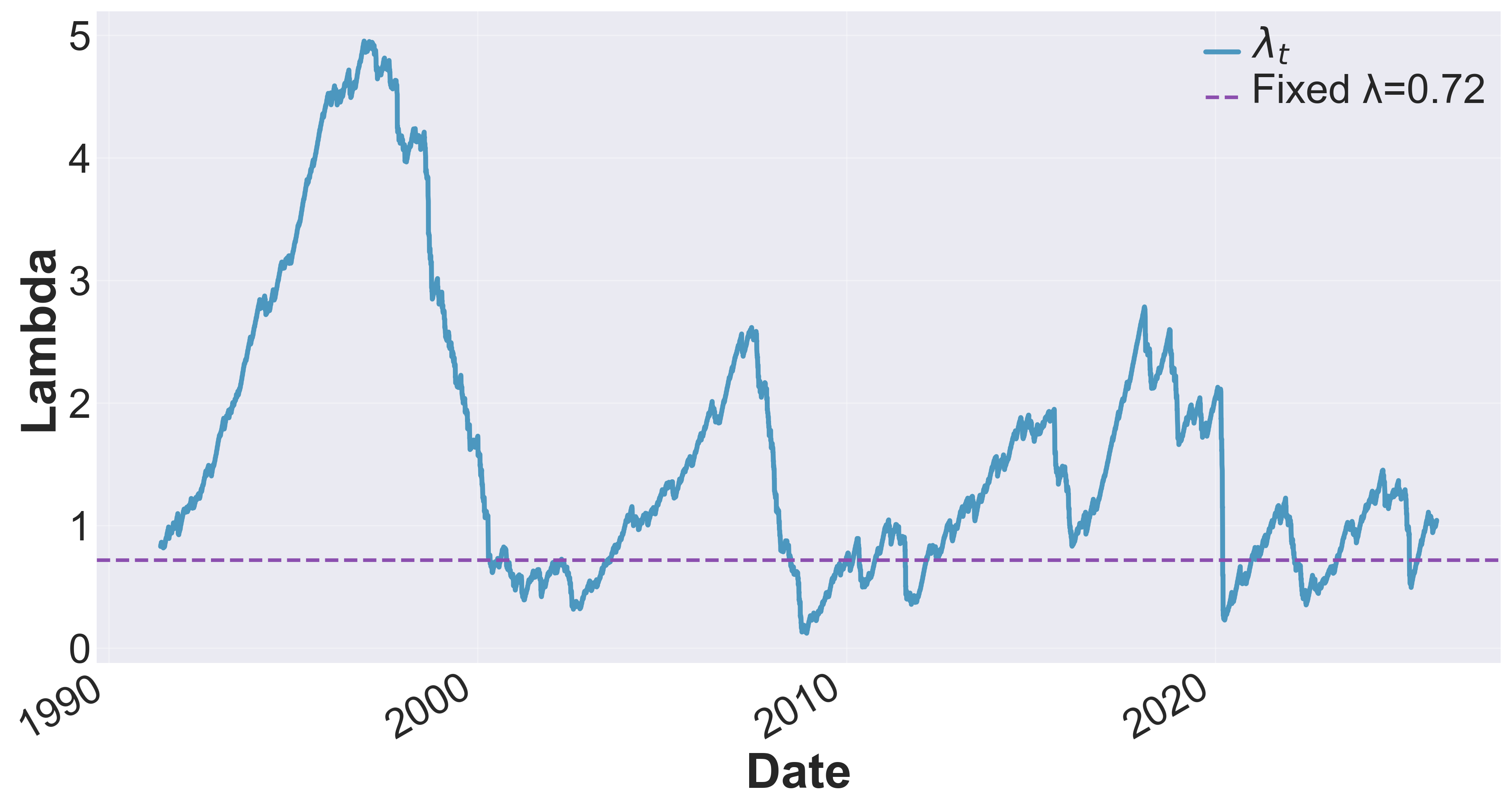}
        \caption{Evolution of $\lambda_t$ ($\beta = 0.75$)}
        \label{fig:portfolio_lambda_adversarial_2025_0.75}
    \end{subfigure}
    
    \vspace{0.5em}
    
    \begin{subfigure}[t]{0.48\textwidth}
        \centering
        \includegraphics[width=\linewidth]{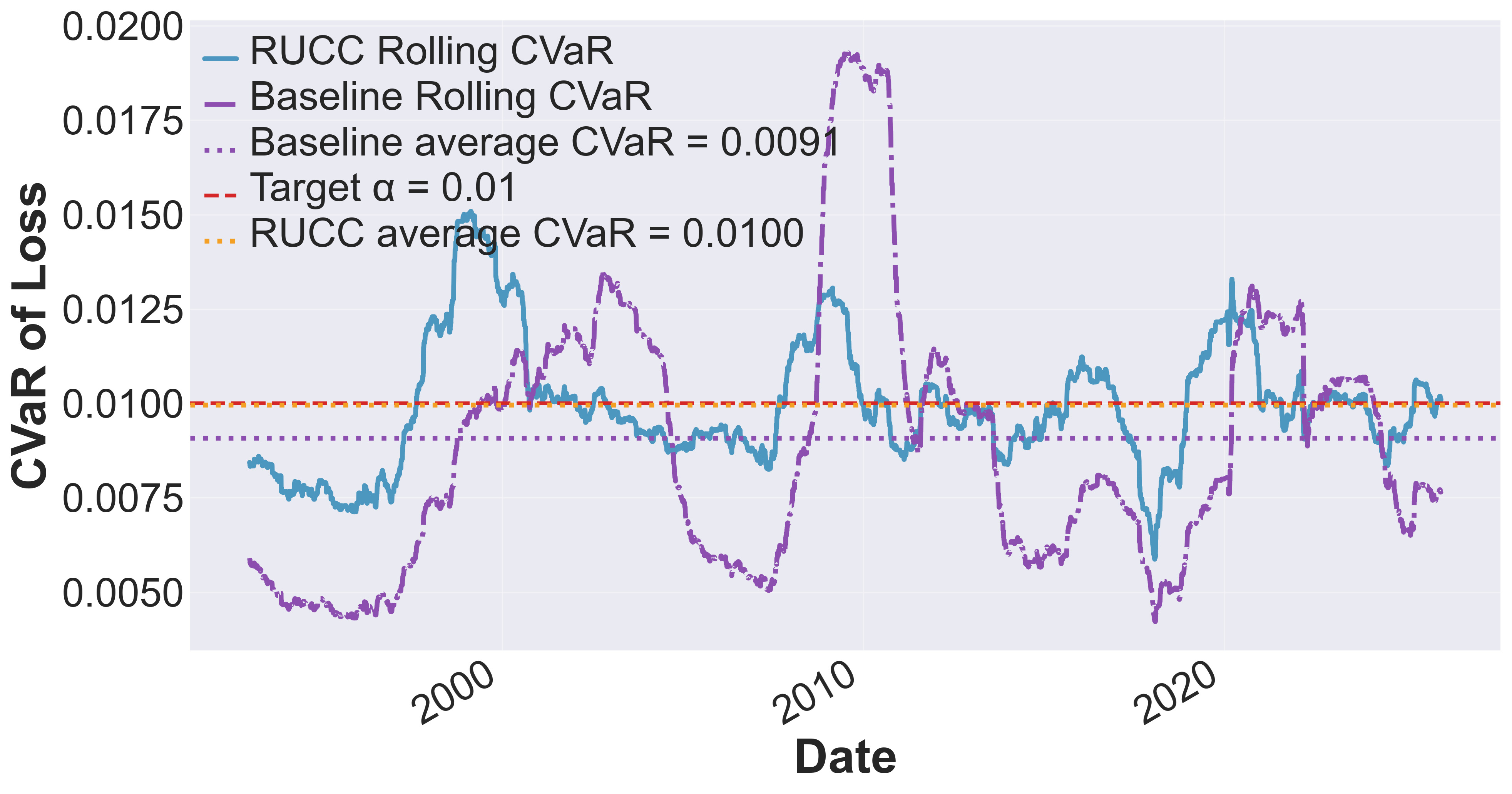}
        \caption{Realized $\operatorname{CVaR}_\beta$ ($\beta = 0.80$)}
        \label{fig:portfolio_cvar_uniform_2025_0.8}
    \end{subfigure}
    \hfill
    \begin{subfigure}[t]{0.48\textwidth}
        \centering
        \includegraphics[width=\linewidth]{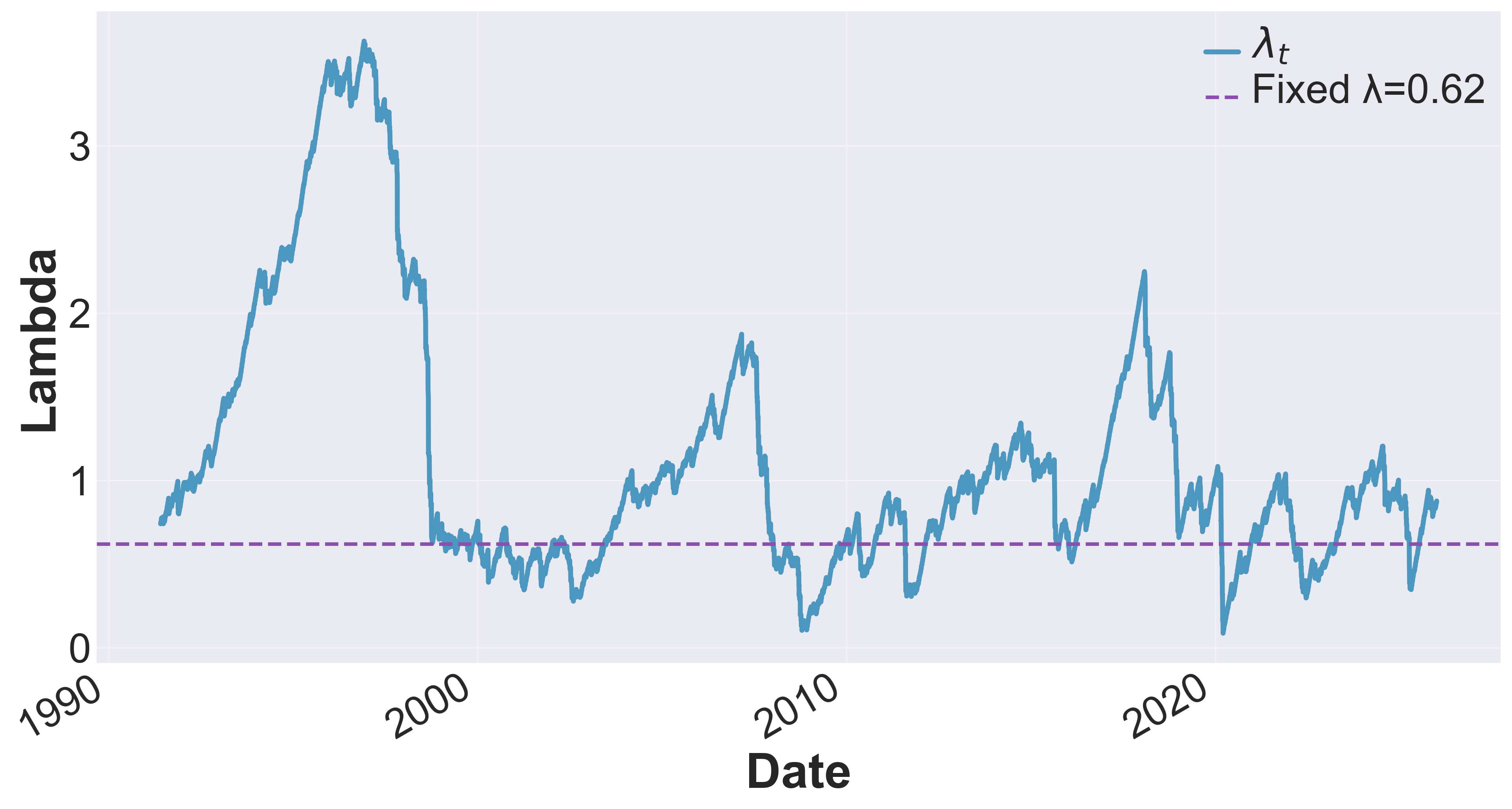}
        \caption{Evolution of $\lambda_t$ ($\beta = 0.80$)}
        \label{fig:portfolio_lambda_uniform_2025_0.8}
    \end{subfigure}

    \begin{subfigure}[t]{0.48\textwidth}
        \centering
        \includegraphics[width=\linewidth]{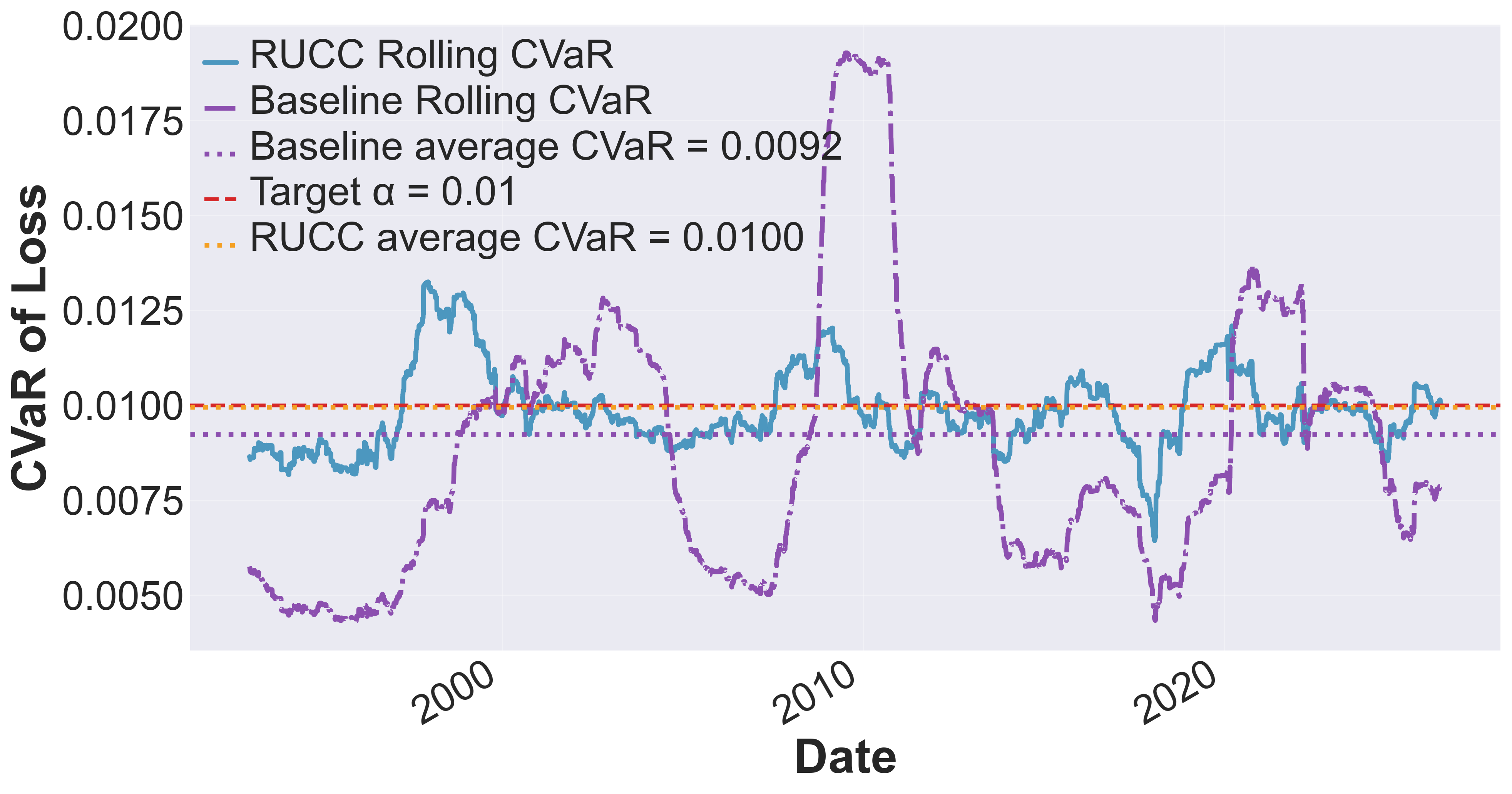}
        \caption{Realized $\operatorname{CVaR}_\beta$ ($\beta = 0.85$)}
        \label{fig:portfolio_cvar_adversarial_2025_0.85}
    \end{subfigure}
    \hfill
    \begin{subfigure}[t]{0.48\textwidth}
        \centering
        \includegraphics[width=\linewidth]{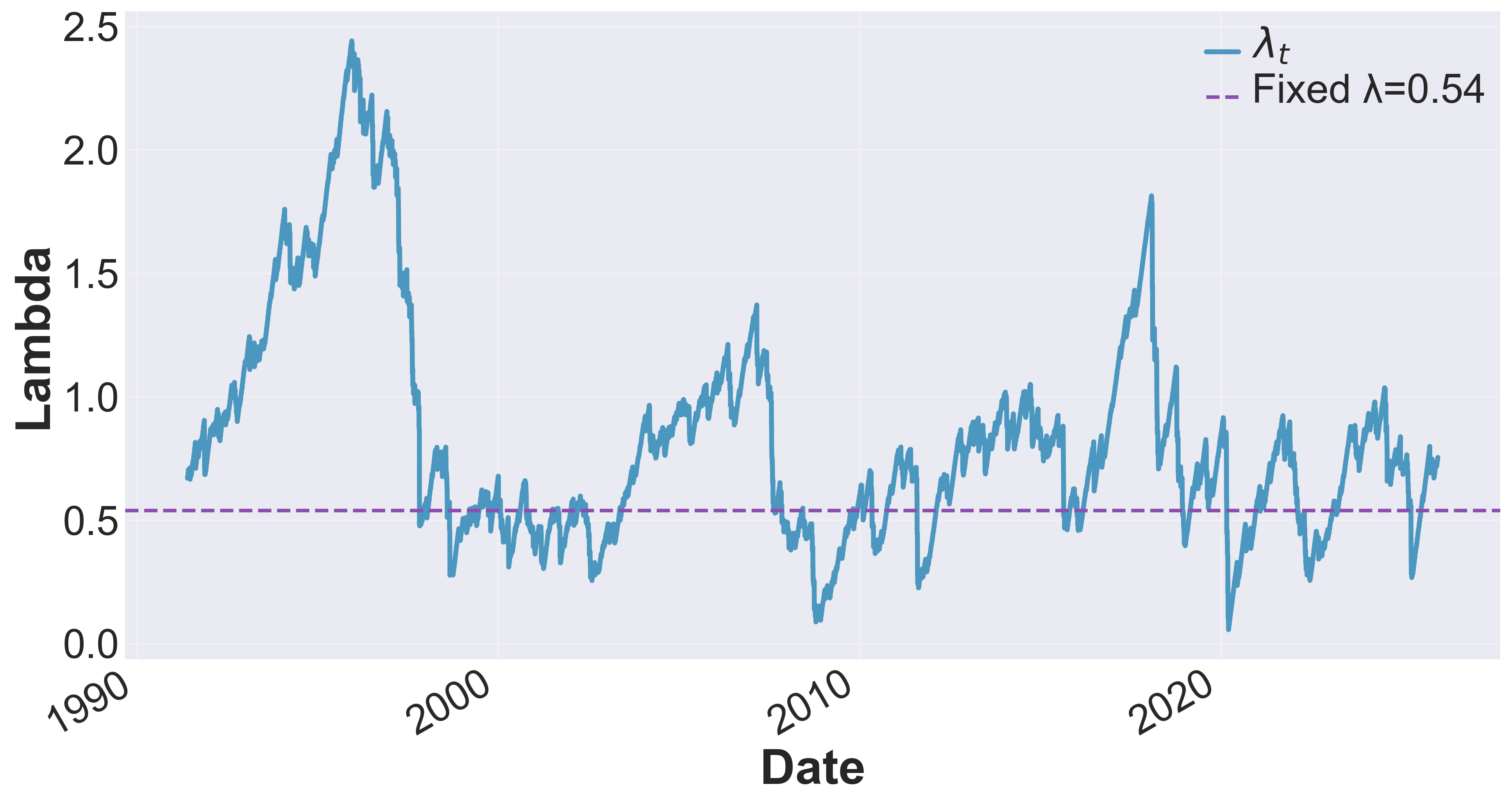}
        \caption{Evolution of $\lambda_t$ ($\beta = 0.85$)}
        \label{fig:portfolio_lambda_adversarial_2025_0.85}
    \end{subfigure}
    
    \vspace{0.5em}
    
    \begin{subfigure}[t]{0.48\textwidth}
        \centering
        \includegraphics[width=\linewidth]{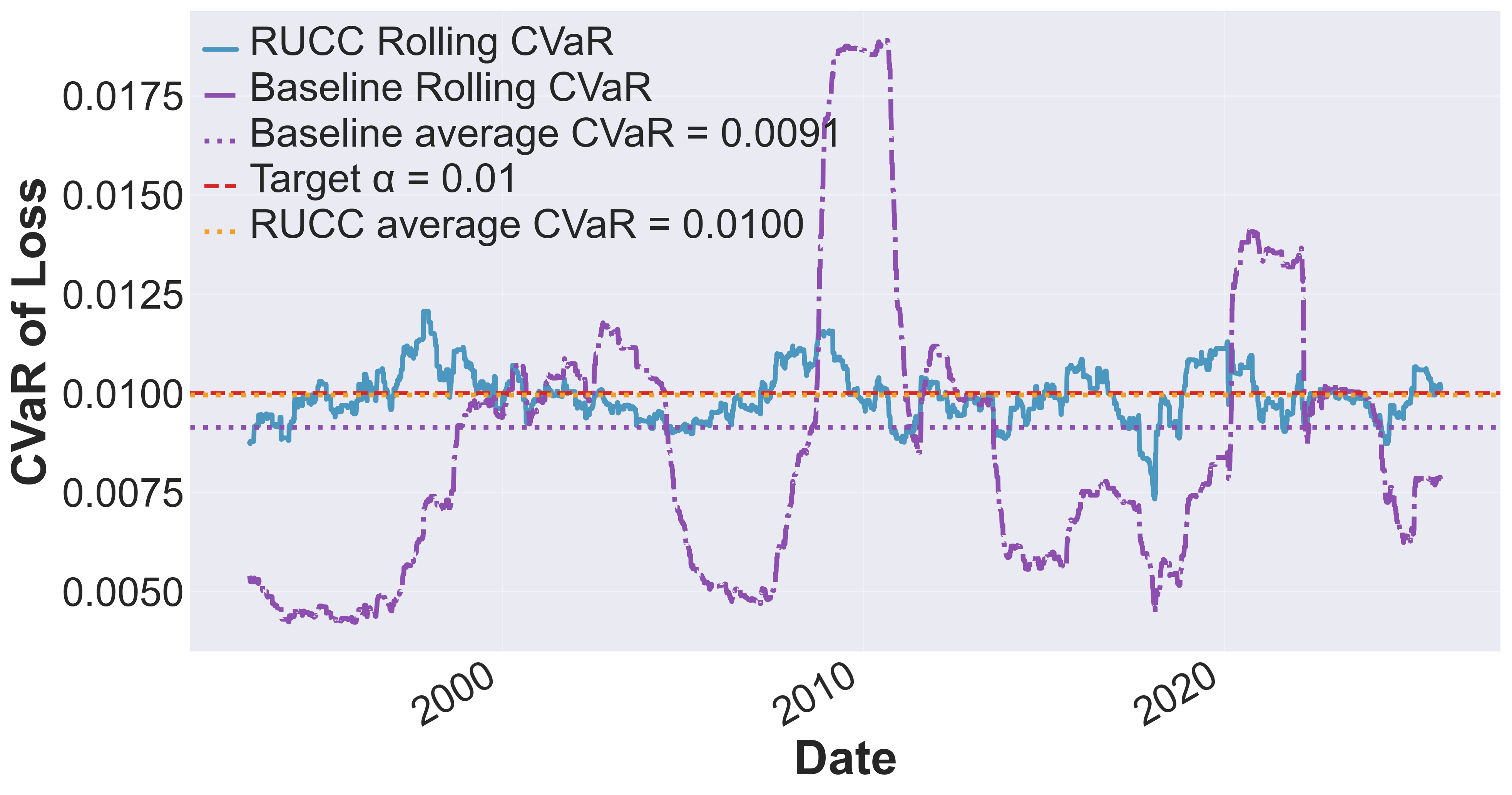}
        \caption{Realized $\operatorname{CVaR}_\beta$ ($\beta = 0.90$)}
        \label{fig:portfolio_cvar_uniform_2025_0.9}
    \end{subfigure}
    \hfill
    \begin{subfigure}[t]{0.48\textwidth}
        \centering
        \includegraphics[width=\linewidth]{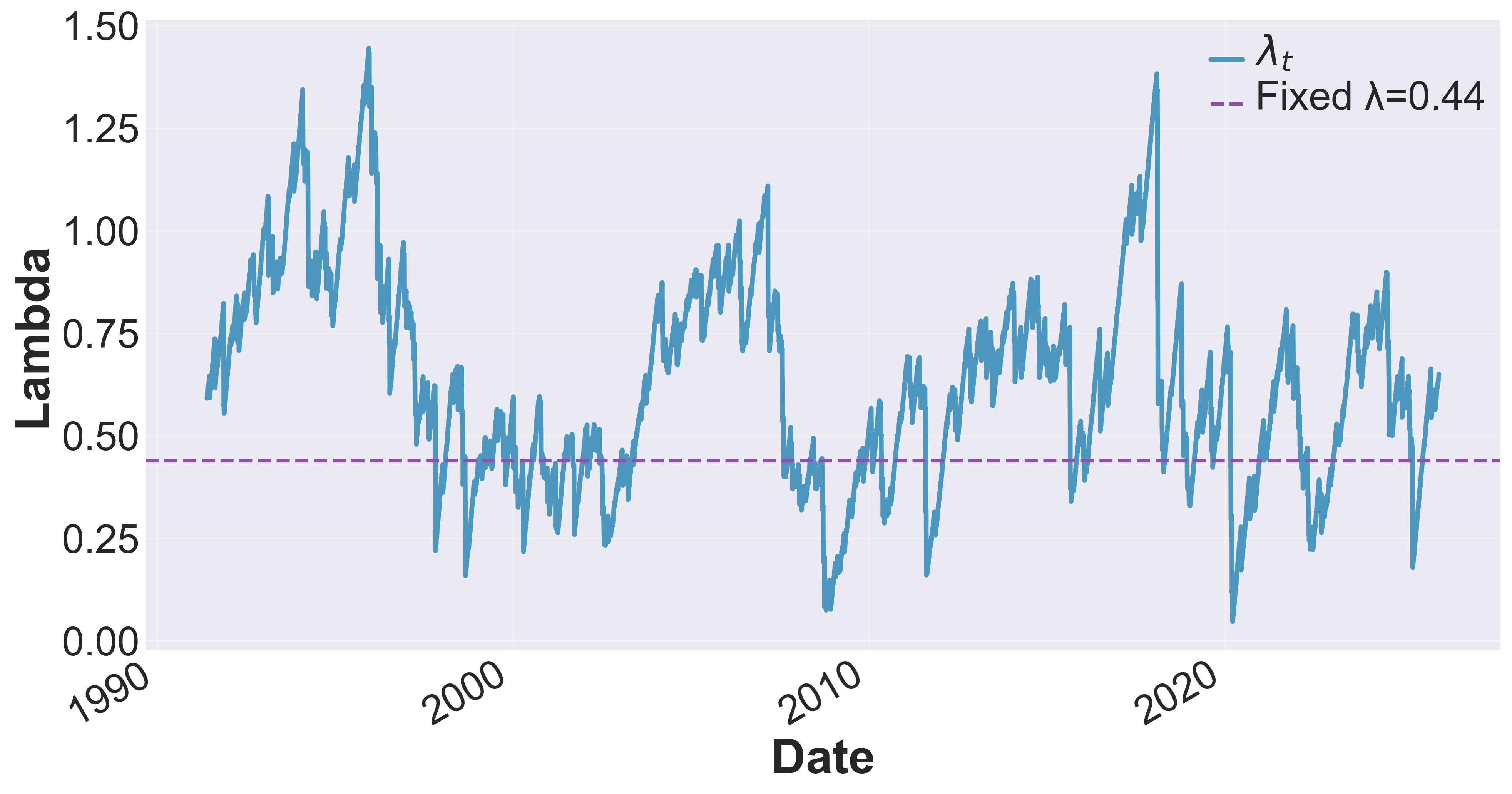}
        \caption{Evolution of $\lambda_t$ ($\beta = 0.90$)}
        \label{fig:portfolio_lambda_uniform_2025_0.9}
    \end{subfigure}
    
    \caption{
    \textbf{Portfolio Management Experiment.} Realized empirical $\operatorname{CVaR}_\beta$ and evolution of $\lambda_t$ for 
     $\gamma = 0.05$ with target $\alpha = 0.01$ during 1991--2025.
    }
    \label{fig:portfolio_1994_2026}
\end{figure*}


\end{document}